\newenvironment{proof}{{\indent \it Proof:\quad}}{\hfill $\blacksquare$\par}
\newtheorem{theorem}{Theorem}
\newtheorem{assumption}{\textbf{Assumption}}
\newtheorem{lemma}{\textbf{Lemma}}
\newtheorem{corollary}{\textbf{Corollary}}
\begin{document}
\begin{sloppypar}%防止行溢出页边距

\bibliographystyle{IEEEtran}

\title{Energy-Efficient Federated Learning for Edge Real-Time Vision via Joint Data, Computation, and Communication Design}

\author{Xiangwang Hou, \emph{Member, IEEE,}  Jingjing Wang, \emph{Senior Member, IEEE}, \\ Fangming Guan, \emph{Graduate Student Member, IEEE,} Jun Du, \emph{Senior Member, IEEE}, \\ Chunxiao Jiang, \emph{Fellow, IEEE},   Yong Ren, \emph{Senior Member, IEEE}

 \thanks{ This work of Xiangwang Hou was supported by the National Natural Science Foundation of China under grant No. 623B2060. This work of Jingjing Wang was partly supported by the National Natural Science Foundation of China under Grant No. 62222101 and No. U24A20213, partly supported by the Beijing Natural Science Foundation under Grants No. L232043 and No. L222039, partly supported by the Natural Science Foundation of Zhejiang Province under Grant No. LMS25F010007 and partly supported by the Fundamental Research Funds for the Central Universities. This work of Jun Du was partly supported by the National Natural Science Foundation China under Grants No. 62422109 and No.U23A20281.  This work of Chunxiao Jiang was partly supported by the National Natural Science Foundation of China under Grants No. 62325108 and No. 62341131. This work of Yong Ren  was partly supported by the National Natural Science Foundation of China under Grants No. 62127801 and No. U24A20213, and  in part by the project `The Verification Platform of Multi-tier Coverage Communication Network for Oceans (LZC0020)' of Peng Cheng Laboratory. (Corresponding author: Chunxiao Jiang.)}

\thanks{X. Hou and F. Guan are with the Department of Electronic Engineering, Tsinghua University, Beijing 100084, China. (e-mail: xiangwanghou@163.com, gfm23@mails.tsinghua.edu.cn)}

\thanks{J. Wang is with the School of Cyber Science and Technology, Beihang University, Beijing 100191, China, and also with Hangzhou Innovation Institute, Beihang University, Hangzhou 310051, China (e-mail: drwangjj@buaa.edu.cn).}
\thanks{J. Du is with the Department of Electronic Engineering and also with the State Key Laboratory of Space
Network and Communications, Tsinghua University, Beijing 100084, China
(e-mail: jundu@tsinghua.edu.cn).}
\thanks{C. Jiang is with the Beijing National Research Center for Information Science and Technology, and the State Key Laboratory of Space Network and Communications,  Tsinghua University, Beijing 100084, China (e-mail: jchx@tsinghua.edu.cn).}

\thanks{Y. Ren is with the Department of Electronic Engineering and the State Key Laboratory of Space Network and Communications, Tsinghua University, Beijing 100084, China, and also with the Network and Communication Research Center, Peng Cheng Laboratory, Shenzhen 518055, China (e-mail: reny@tsinghua.edu.cn).}
}

\markboth{}
{Hou \MakeLowercase{\textit{et al.}}:Energy-Efficient Federated Learning for Edge Real-Time Vision via Joint Data, Computation, and Communication Design}

\IEEEtitleabstractindextext{

\begin{abstract}
Emerging real-time computer vision (CV) applications on wireless edge devices demand energy-efficient and privacy-preserving learning. Federated learning (FL) enables on-device training without raw data sharing, yet remains challenging in resource-constrained environments due to energy-intensive computation and communication, as well as limited and non-i.i.d. local data. We propose FedDPQ, an ultra energy-efficient FL framework  for real-time CV over unreliable wireless networks. FedDPQ integrates diffusion-based data augmentation, model pruning, communication quantization, and transmission power control to enhance training efficiency. It expands local datasets using synthetic data, reduces computation through pruning, compresses updates via quantization, and mitigates transmission outages with adaptive power control. We further derive a closed-form energy–convergence model capturing the coupled impact of these components, and develop a Bayesian optimization(BO)-based algorithm to jointly tune data augmentation strategy, pruning ratio, quantization level, and power control. To the best of our knowledge, this is the first work to jointly optimize FL performance from the perspectives of data, computation, and communication under unreliable wireless conditions. Experiments on representative CV tasks show that FedDPQ achieves superior convergence speed and energy efficiency.
\end{abstract}

\begin{IEEEkeywords}
Federated learning (FL), generative artificial intelligence (GAI), diffusion model, model pruning, quantization.
\end{IEEEkeywords}}
\maketitle
\IEEEdisplaynontitleabstractindextext
\IEEEpeerreviewmaketitle

\section{Introduction}

Performing real-time computer vision (CV) tasks on resource-constrained edge devices is increasingly critical for emerging applications such as autonomous driving, robotics, and virtual/augmented reality (VR/AR) \cite{9953092}. These applications require timely, accurate, and energy-efficient training of machine learning (ML) models using visual data collected at the wireless network edge to maintain continuous adaptation and high-quality service delivery. Traditional centralized ML approaches are impractical in these contexts due to significant communication overhead and privacy vulnerabilities associated with transmitting large amounts of raw visual data. Federated learning (FL) \cite{mcmahan2017communication} addresses these issues by enabling decentralized collaborative model training, reducing data transmission overhead, and preserving data privacy by keeping sensitive data locally on edge devices. However, deploying FL for real-time CV tasks faces substantial challenges, especially regarding energy efficiency.

A fundamental challenge in FL deployment is the considerable energy consumption associated with local model training and the subsequent transmission of model updates. Device energy consumption in FL can be analyzed from two perspectives: energy usage per training round and cumulative energy over multiple rounds. Specifically, per-round energy includes both computation and communication components. Computational energy consumption can be reduced by employing model pruning \cite{9762360} techniques, which eliminate redundant computations based on the widely observed phenomenon that many deep ML models are inherently over-parameterized. Communication energy consumption, though inherently reduced in FL due to transmitting model parameters rather than raw data, remains substantial because of the large model sizes used in contemporary ML tasks. Techniques like quantization \cite{10368103} and sparsification \cite{10026255} effectively reduce communication overhead by exploiting the intrinsic redundancy in parameter updates. Nevertheless, reducing per-round energy via pruning, quantization, or sparsification can inadvertently increase total energy consumption. This counterintuitive outcome arises because these techniques may adversely affect the convergence speed and accuracy of FL models, potentially requiring more training rounds to reach a desired level of performance. Thus, although these methods effectively reduce per-round energy, the increased number of training rounds can negate these benefits, sometimes even resulting in higher overall energy consumption.

From the perspective of global convergence, another critical factor influencing FL performance is the quantity and distribution of available training data. In real-time CV scenarios, training samples are not only limited in quantity due to their dependence on device-captured real-time observations, but also exhibit severe non-i.i.d. characteristics across devices. Generative artificial intelligence (GAI) methods \cite{10599123}, such as diffusion models \cite{wang2025generative}, generative adversarial networks (GAN) \cite{10.1145/3422622}, variational auto-encoders (VAE) \cite{10.1145/3663364}, and so on, offer promising solutions by generating additional synthetic training data based on learned data distributions, thereby improving sample diversity and accelerating convergence. Recent studies \cite{10333794,10463181,10454003,10811919} have begun integrating GAI into FL contexts for data augmentation purposes. However, GAI introduces additional computational overhead due to the energy required for data generation, potentially offsetting some of the convergence speed benefits.

Considering the intricate interactions among computational compression, communication compression, and data augmentation strategies, holistic optimization is necessary. Each of these techniques impacts both per-round energy consumption and the total number of training rounds required for convergence. Moreover, the intermittent connectivity common in wireless edge environments further complicates this optimization, necessitating sophisticated approaches to balance energy efficiency, model accuracy, and operational robustness.

To address the above mentioned challenges, we propose a new ultra energy-efficient FL framework, referred to as FedDPQ, which jointly integrates diffusion model-based data augmentation, model pruning, quantization, and transmission power control to minimize overall energy consumption in real-time CV applications. Specifically, we employ diffusion model-based data augmentation to both expand the amount of training samples and alleviate the non-i.i.d. characteristics of local datasets, thereby improving data availability. Subsequently, we incorporate model pruning to reduce computational overhead and gradient quantization to minimize communication overhead, while transmission power control mitigates transmission outages. Recognizing the inherent coupling among these three techniques and their collective impact on both per-round energy consumption and global convergence performance, we analytically derive their individual and combined influences on FL convergence under unreliable wireless network conditions. Based on this theoretical analysis, we design a low-complexity algorithm to jointly optimize the synthetic data generation strategies, model pruning rates, quantization bit widths, and transmission power. This joint optimization aims to achieve optimal long-term energy efficiency across edge devices.

Our main contributions are summarized as follows:
\begin{itemize}
\item We propose FedDPQ, an ultra energy-efficient FL framework for real-time CV tasks. To the best of our knowledge, this is the first work that jointly optimizes FL performance from the perspectives of data, computation, and communication under unreliable communication links.
\item We derive a closed-form analytical expression capturing how data augmentation, model pruning, and quantization jointly influence FL's overall energy efficiency in unreliable wireless network environments.
\item Guided by the analytical results, we design a low-complexity algorithm based on Bayesian optimization (BO), which systematically determines optimal synthetic data strategies, pruning rates, quantization levels, and transmission power configurations to minimize energy consumption. Experimental results show that FedDPQ enhances energy efficiency and accelerates convergence compared to baseline configurations.
\end{itemize}

%The rest of the paper is organized as follows. Section II reviews the related works. Section III presents the system model, while Section IV discusses the convergence analysis and problem formulation. Section V describes the design of the optimization algorithm. Section VI shows the experimental results. Section VII concludes the paper.

The rest of the paper is structured as follows. Section II surveys related literature. Section III presents the system architecture of the proposed FedDPQ scheme. In Section IV, we analyze the convergence behavior and formulate the associated optimization problem. Section V elaborates on the algorithmic design and parameter coordination strategy. Section VI reports the experimental findings, and Section VII summarizes the key contributions and outlines directions for future research.

\section{Related Literature}
In real-world deployments, FL operates as a complex iterative process involving alternating local computation and global communication. Its energy consumption is significantly affected by wireless link conditions, compression strategies, and the design of learning mechanisms. Early studies have mainly focused on communication efficiency. For example, \cite{9488839} proposes an adaptive sparsification framework with convergence guarantees, where sparsity levels are dynamically adjusted based on the computing and communication capacities of edge devices to balance energy consumption. \cite{10168747} develops a quantized over-the-air computation framework that improves spectrum efficiency and reduces energy consumption by selecting participants with favorable channel conditions and applying gradient quantization compatible with quadrature amplitude modulation. Similarly, \cite{9916128} formulates a joint optimization problem to minimize energy consumption through adaptive quantization and wireless resource allocation under performance and latency constraints. More recent efforts begin to consider computation compression jointly with communication efficiency. The work in \cite{10258354} introduces a resource-aware optimization framework that jointly adjusts model pruning levels, CPU frequencies, uplink transmission power, and bandwidth allocation. An edge device selection strategy is further employed to improve system-level energy efficiency under heterogeneous conditions. Despite these advances, FL performance in practice remains constrained by limited data availability and non-i.i.d. data across devices, which impair model convergence and increase energy costs. To mitigate this, generative data augmentation has been explored. \cite{10333794} proposes a VAE-based FL framework for trajectory data synthesis that maintains the structural properties of the original data while preserving privacy. \cite{10463181} addresses data insufficiency in vertical FL by using a distributed GAN to generate pseudo-overlapping features across participants. However, GAN-based approaches often face difficulties in generating high-fidelity samples and typically overlook the computational burden introduced by data augmentation. To address this, the works presented in \cite{10454003,10811919} leverage diffusion models to enable resource-aware data synthesis, improving convergence and reducing local energy consumption in heterogeneous environments.

As discussed above, data quantity and quality, model compression, and communication compression all have intricate and interdependent effects on the convergence behavior and energy consumption of FL. Analyzing their coupling relationship from a unified perspective and jointly optimizing them within a single framework holds great potential for further improving energy efficiency. However, to date, no existing work has systematically addressed this joint optimization problem. Therefore, this paper aims to fill this gap.

\section{System Model}
In the FedDPQ system, as shown in Fig. \ref{ArchitectureJournalVersion}, there is a base station (BS) paired with an edge server, and a set of $U$ devices equipped with cameras, denoted as $\mathcal{U}=\{1,2,\dots,u,\dots, U\}$. The original data samples on device $u$ are represented by $\mathcal{D}_u^{\text{loc}} =\{(\boldsymbol{x}_{u, d}, \boldsymbol{y}_{u,d})\}^{D^{\text{loc}}_{u}}_{d=1}$, where $|\mathcal{D}^{\text{loc}}_u| = D^{\text{loc}}_u $ represents the total number of local training samples observed by device $u$, and $\boldsymbol{x}_{u, d}$ and $\boldsymbol{y}_{u,d}$ are the samples and their corresponding labels, respectively. To effectively leverage the distributed data on the devices for training a robust data-driven CV model with low communication overhead and high privacy, these devices participate in a distributed FL process coordinated by the BS. To fill the missing portion of local data and thereby improve the convergence rate of FL, FedDPQ employs a pre-trained generative model to synthesize new data for each device according to a data augmentation strategy. Considering the energy limitations of devices, FedDPQ also introduces model pruning and quantization techniques to reduce the computational overhead on devices and decrease communication costs, respectively. Moreover, FedDPQ takes into account the impact of transmission outages during communication. 

\begin{figure}[t]
  \centering
  \includegraphics[width=8.8cm]{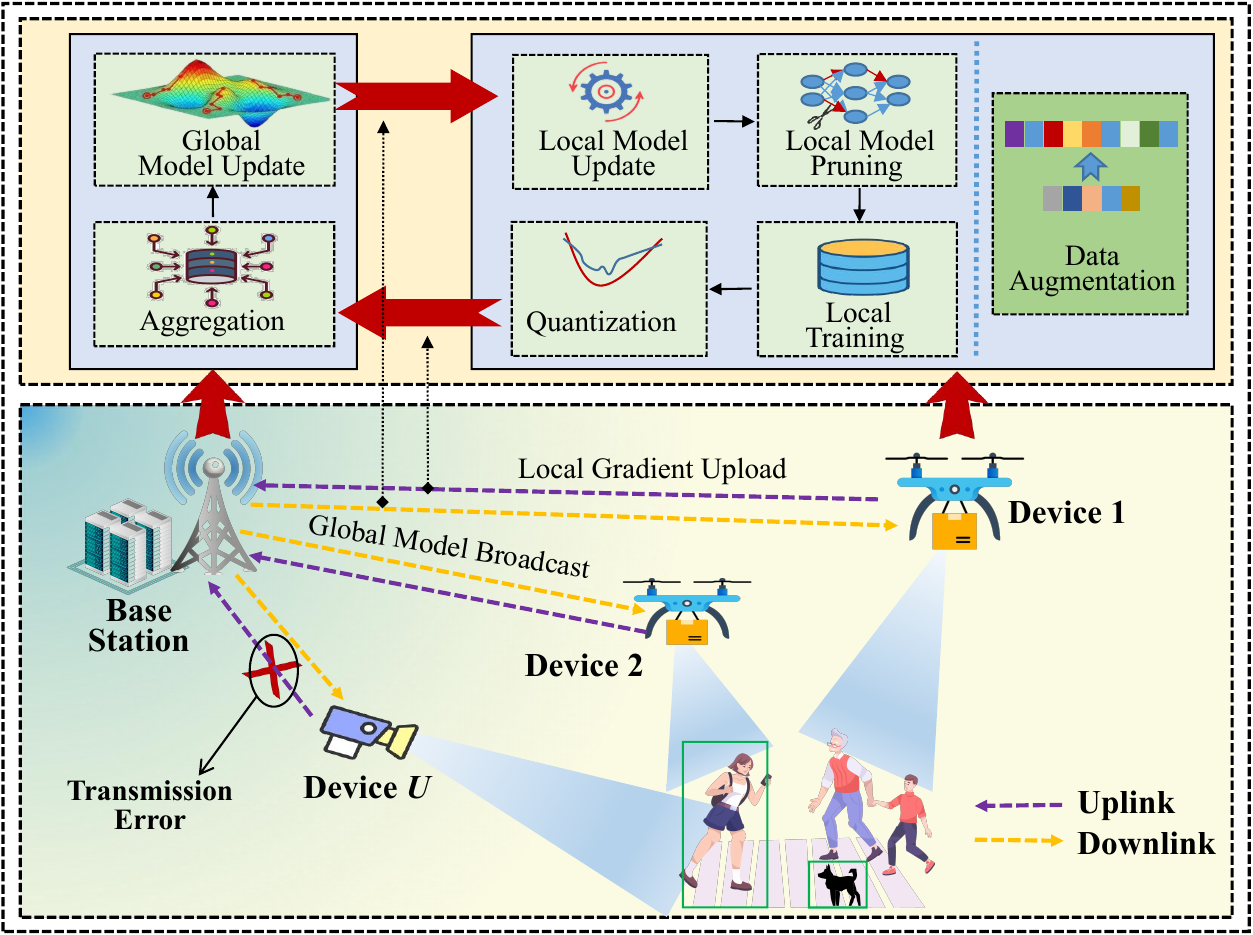}
  \caption{The architecture of FedDPQ.}\label{ArchitectureJournalVersion}
\end{figure}
\subsection{Data Augmentation}
The original dataset owned by the device $u$ can be reorganized by class as \( \mathcal{D}_u^{\text{loc}} = \{ \mathcal{D}^{\text{loc}}_{u,1}, \mathcal{D}^{\text{loc}}_{u,2}, \dots, \mathcal{D}^{\text{loc}}_{u,C} \} \), where \( \mathcal{D}^{\text{loc}}_{u,c} \) represents samples belonging to class \( c \). The pre-trained generative model deployed locally, based on the original data distribution of device $u$ and following a specific strategy, generates a synthetic dataset \( \mathcal{D}_u^{\text{gen}} = \{ \mathcal{D}^{\text{gen}}_{u,1}, \mathcal{D}^{\text{gen}}_{u,2}, \dots, \mathcal{D}^{\text{gen}}_{u,C} \} \), where \( |\mathcal{D}_{u,c}^{\text{gen}}| = D_{u,c}^{\text{gen}}\). Define a data augmentation factor \( \Delta_u \) for each device $u$, and the intermediate generation target \( D^\text{{gen}}_{u,c} \) is computed as
\begin{equation}
 D^{\text{gen}}_{u,c} = \max\{\Delta_u  D'_{u}-D^{\text{loc}}_{u,c},\ \  0\},
\end{equation}
where $D'_{u} = \max_c D^{\text{loc}}_{u,c}$. We denote the mixed dataset obtained by device $u$ after data generation as $\mathcal{D}_u^{\text{mix}} = \left\{ \mathcal{D}^{\text{mix}}_{u,1}, \mathcal{D}^{\text{mix}}_{u,2}, \dots, \mathcal{D}^{\text{mix}}_{u,C} \right\}$, where $\mathcal{D}^{\text{mix}}_{u,c}$ is represented as
\begin{equation}
\mathcal{D}^{\text{mix}}_{u,c} =
\begin{cases}
\mathcal{D}^{\text{loc}}_{u,c}, & \text{if } D^{\text{loc}}_{u,c} \ge \Delta_u \cdot D'_{u} \\[6pt]
\mathcal{D}^{\text{gen}}_{u,c}\cup \mathcal{D}^{\text{loc}}_{u,c}, & \text{if } D^{\text{loc}}_{u,c} < \Delta_u \cdot D'_{u}.
\end{cases}
\end{equation}
\noindent

Accordingly, the total number of generated data samples on device \( u \), denoted as \( D_u^{\text{gen}} \), can be computed as
\begin{equation}
D_u^{\text{gen}} = \sum_{c=1}^{C} D^{\text{gen}}_{u,c}.
\end{equation}

% In the subsequent analysis, we will jointly consider energy consumption and convergence rate to select a suitable value for the generation threshold.

\subsection{Basic FL Framework}
The primary aim of  FL is to minimize the global loss function $F(\boldsymbol{w})$, which is associated with the globally shared learning model. Mathematically, the FL's objective is formulated as
\begin{equation}
\label{MinTheGlobalLossFunction}
\boldsymbol{w}^{*} = \arg \min F\left(\boldsymbol{w}\right),
\end{equation}
where  $F(\boldsymbol{w})=\sum_{u=1}^{U} \tau_uF_u\left(\boldsymbol{w}_u\right)$, and $\tau_u = \frac{D^{\text{loc}}_u+D^{\text{gen}}_u}{\sum_{u=1}^{U}\left(D^{\text{loc}}_u+D^{\text{gen}}_u\right)}$ denotes the normalized data proportion of device $u$. Let $\boldsymbol{\xi}_u$ represent the mini-batch samples of size $b$. We define $F_u\left( \boldsymbol{w}, \boldsymbol{\xi}_u \right) = \frac{1}{b}\sum_{m = 1}^b f(\boldsymbol{w}, \xi_{u,m})$, where $\xi_{u,m}$ denotes the $m$-th randomly chosen sample from device $u$’s dataset, and $f(\boldsymbol{w}, \xi_{u,m})$ is the sample-level loss function. 

The classical FL framework usually consists of the following steps.

\textbf{Step 1:} In the $t$-th communication round, the server samples $S$ devices, denoted as $\mathcal{S}^t$, and broadcasts the global model $\boldsymbol{w}^{t-1}$ in the last communication round to these selected devices.

\textbf{Step 2:} Each device $u \in\mathcal{S}^t$ updates its local model by 
\begin{equation} \label{LocalGradientUpdating} \boldsymbol{w}_u^t = \boldsymbol{w}^{t-1} -\eta\nabla F_u\left( \boldsymbol{w}^{t-1}, \boldsymbol{\xi}_u^t \right), 
\end{equation} where $\eta$ denotes the learning rate. 

\textbf{Step 3:} The devices in $\mathcal{S}^t$ upload their local models $\boldsymbol{w}_u^t$ \footnote{The transmitted information can also be the computed gradients, or transformed variants thereof, depending on the specific FL algorithm design.}  to the BS, where the server updates the global model according to specific aggregation strategy. 

It is important to note that FL typically adopts one of two aggregation strategies.

\begin{itemize}
    \item \textbf{(i) Full participation:} All devices participate in the aggregation process, and the global model is updated by
    \begin{equation}
    \label{FullP}
        \bar{\boldsymbol{w}}^t = \sum_{u=1}^{U} \tau_u \boldsymbol{w}_u^t,
    \end{equation}
    However, since the large number of devices in FL, this strategy is impractical under real-world communication constraints.
    
    \item \textbf{(ii) Partial participation:} The server samples $S$ devices $\left(S \ll U\right)$ in $\mathcal{S}^t$ with replacement according to the probability distribution $\{\tau_1,\dots, \tau_U\}$, the global model is updated by
    \begin{equation}
    \label{partialp}
        \boldsymbol{w}^t = \frac{1}{S}\sum_{u \in \mathcal{S}^t} \boldsymbol{w}_u^t.
    \end{equation}
\end{itemize}

It is worth noting that the average scheme in Eq. (\ref{partialp}) yields an unbiased estimate of $\bar{\boldsymbol{w}}^t$ in Eq. (\ref{FullP}), meaning $\mathbb{E}\left[  \boldsymbol{w}^t \right] = \bar{\boldsymbol{w}}^t$ \cite{Li2020On}.

\subsection{FL with Model Pruning, Gradient Quantization with Transmission Outage}
\subsubsection{Model Pruning}
To reduce the computational energy consumption at local training, model pruning is employed. Model pruning involves the removal of insignificant connections or filters to compress the model, thereby reducing computations with minimal loss in accuracy. According to \cite{9598845},  we can evaluate the importance of the $v$-th parameter by
\begin{equation}
\label{ImportanceEvaluation}
\bar{I}^t_{u,v}=\left(F_{u}\left(\boldsymbol{w}^t_{u}, \boldsymbol{\xi}_u^t\right)-F_{u}\left(\left.\boldsymbol{w}^t_{u},\boldsymbol{\xi}_u^t\right|_{w^t_{u,v}=0}\right)\right)^2,
\end{equation}
which is essentially the mean squared error with and without $w^t_{u,v}$. The smaller the value of $\bar{I}^t_{u,v}$ is, the less important the  $w^t_{u,v}$. Nevertheless, the calculation for obtaining the mean squared error $\bar{I}^t_{u,v}$ is computation-intensive, especially when the model is large. Therefore,  to alleviate the computational complexity, the approximate to Eq. (\ref{ImportanceEvaluation}) is introduced, which is denoted by
\begin{equation}
\bar{I}^t_{u,v}=\left\| w^t_{u,v}\right\|.
\end{equation}

After the importance evaluation,  model pruning is performed by zeroing out the parameters having relatively low importance values. Define the pruned model of device $u$ at the $t$-th round as $\widetilde{\boldsymbol{w}}_{u}^{t}$, and the pruning ratio $\rho_u$ of  device $u$ as
\begin{equation}
\rho_u=\frac{{V}_{u}}{V},
\end{equation}
where ${V}_{u}$ represents the number of the pruned parameters of the deployed model on device $u$.

\subsubsection{Gradient Quantization}

In practice, we transmit the local gradients $\boldsymbol{g}_{u}^{t} = \nabla F_u\left( \widetilde{\boldsymbol{w}}^{t}_{u}, \boldsymbol{\xi}_u^t \right)$ instead of the model $\boldsymbol{w}_u^t$, since these gradients fully characterize the updates to the model. Then, we apply stochastic quantization to further compress the updated local gradients before transmitting them to the BS. Let $g_{u,v}^t$ denote the $v$-th element of the local gradient, where $v \in \{1, 2, \ldots, V\}$ and $g_{u,v}^t \in \left[ \underline{g}_{u,v}^t, \overline{g}_{u,v}^t \right]$. Let $\mathcal{Q}$ denote the quantization function, and let $\delta_u$ represent the number of quantization bits allocated to device $u$ during the $t$-th round. The range $\left[ \underline{g}_{u,v}^t, \overline{g}_{u,v}^t \right]$ is uniformly divided into $2^{\delta_u}$ equal-length intervals, with boundary points denoted by $\{b_0, b_1, \dots, b_{2^{\delta_u}}\}$, where $b_j$ is given by
\begin{equation}
b_j = \underline{g}_{u,v}^t + j \cdot \frac{ \overline{g}_{u,v}^t - \underline{g}_{u,v}^t }{2^{\delta_u} - 1}, \quad j = 0, \dots, 2^{\delta_u} - 1.
\label{eq:bt}
\end{equation}
If $g_{u,v}^t \in [b_j, b_{j+1})$, it can be quantized as
\begin{equation}
\mathcal{Q}(g_{u,v}^t) =
\begin{cases}
\mathrm{sign}(g_{u,v}^t) \cdot b_j, & \text{with probability } \frac{b_{j+1} - |g_{u,v}^t|}{b_{j+1} - b_j}, \\
\mathrm{sign}(g_{u,v}^t) \cdot b_{j+1}, & \text{with probability } \frac{|g_{u,v}^t| - b_j}{b_{j+1} - b_j},
\end{cases}
\label{eq:quant}
\end{equation}
where $\mathrm{sign}(\cdot)$ returns the sign of the gradient, while the quantized value is chosen stochastically based on its proximity to neighboring quantization levels. Let $o$ denote the additional number of bits needed to encode the sign and the endpoints $\underline{g}_{u,v}^t$, $\overline{g}_{u,v}^t$. The total number of bits required to represent the quantized gradient vector becomes
\begin{equation}
\tilde{\delta}_u = V \delta_u + o.
\label{eq:totalbits}
\end{equation}

%\textit{Note: Although the design of the quantization algorithm is not the focus of this work, the %proposed approach is compatible with a wide range of quantization schemes.}

%\subsubsection{Communication compressing}
%To further reduce communication overhead, we compress the gradients by sending only the top‑k most %significant entries to the BS. We denote the compression operator by $\text{top}_k\left(\cdot\right)$, %which is used to generate an approximation of the original gradient. Specifically, for any $1\le k\le d$ %and any vector $x\in\mathbb{R}^d$, the operator
%\[
%\operatorname{top}_k:\mathbb{R}^d\to\mathbb{R}^d
%\]
%is defined as
%\begin{equation}
%    (\text{top}_k(x))_i := 
%\begin{cases}
%x_{\pi(i)}, & \text{if } i \le k, \\
%0, & \text{otherwise},
%\end{cases}
%\end{equation}
%where $\pi$ is a permutation that satisfies
%\[
%(|x|)_{\pi(i)} \ge (|x|)_{\pi(i+1)} \quad \forall i=0,\dots, d-1.
%\]
%After applying the sparsification procedure to the gradient, we obtain $\operatorname{top}_k(\tilde{\boldsymbol{g}}_{u}^{t})$. This operation retains the $k$ components of $\tilde{\boldsymbol{g}}_{u}^{t}$ that have the largest absolute values, while setting all other components to zero, thereby providing a sparse approximation of the original vector.
\subsubsection{Transmission Error}
Owing to unreliable communication links, some of the selected devices may fail to participate in the current training round. Consider that each device uploads its local gradients using the widely adopted orthogonal frequency division multiplexing (OFDM) \footnote{\textcolor{black}{Although this work assumes OFDM for uplink, the proposed scheme can be extended to other multiple access protocols such as time division multiple access (TDMA) and  non-orthogonal multiple access (NOMA) with minor adaptations.}}. The uplink data rate between device $u$ and the BS is given by
\begin{equation}
R_u\left(p_u\right) = B_u^{\mathrm{UL}} \mathbb{E}_{h_u} \left( \log_2 \left( 1 + \frac{p_u h_u}{I_u + B_u^{\mathrm{UL}} N_0} \right) \right),
\end{equation}
where $B_u^{\mathrm{UL}}$ denotes the uplink bandwidth allocated to device $u$, $p_u$ represents the transmission power of device $u$, $I_u$ represents the interference, and $N_0$ denotes the power spectral density of noise. In addition, the channel gain $h_u$ is calculated as
\begin{equation}
h_u = \frac{\zeta_u}{\left(d_u\right)^2},
\end{equation}
where $\zeta_u$ is the Rayleigh fading coefficient, while $d_u$ is the distance between device $u$ and the BS.

In practice, transmission errors are inevitable. Assuming that the local gradients from device $u$ are uploaded as data packets, the probability of a transmission error for these packets is represented as \cite{9210812}
\begin{equation}
q_u\left(p_u\right) = \mathbb{E}_{h_u} \left( 1 - \exp \left( -\frac{\Upsilon \left( I_u + B_u^{\mathrm{UL}} N_0 \right)}{p_u h_u} \right) \right),
\end{equation}
where $\Upsilon$ is the waterfall threshold. We use a binary variable $\alpha_u^t$ to indicate whether the local gradient of device $u$ is successfully received by the BS at the $t$-th iteration. The value of $\alpha_u^t$ is represented as
\begin{equation}
\alpha_u^t = \begin{cases}
1, & \text{if } 1 - q_u\left(p_u\right), \\
0, & \text{if } q_u\left(p_u\right).
\end{cases}
\end{equation}
%We assume that the communication between each participating device and the BS during round \( t \) follows a quasi-static process, meaning that the wireless channel conditions remain constant throughout a single communication round, but may vary across different rounds.

Taking into account partial participation, model pruning, stochastic quantization, and transmission errors, the global model at the server is updated as
\begin{equation}
\label{globalgbar}
\boldsymbol{w}^t = \boldsymbol{w}^{t-1} -\eta \frac{\sum_{u \in \mathcal{S}^t}\alpha_{u}^{t}\mathcal{Q}(\boldsymbol{g}_{u}^{t})}{\sum_{u \in \mathcal{S}^t} \alpha_{u}^{t}}.
\end{equation}

\section{Convergence Analysis and  Problem Formulation}
In this section, we conduct a quantitative analysis of the impact of partial participation, data augmentation, model pruning, stochastic quantization, and transmission errors on the convergence of FL, and derive the minimum number of communication rounds required to achieve a given model convergence accuracy. Additionally, we perform an energy consumption analysis of the system and formulate an optimization problem.

\subsection{Basic Assumptions}
Before performing the convergence analysis, we introduce some widely adopted assumptions as follows:
\begin{itemize}
  \item \begin{assumption}\label{CompleteAssumption1}
        The local function $ F_u(\boldsymbol{w})$ is uniformly $L$-Lipschitz continuous with respect to any $\boldsymbol{w}$ and $\boldsymbol{w}^{\prime}$, which is represented by
        \begin{equation}
        \left\|\nabla F_u\left(\boldsymbol{w}\right) - \nabla F_u\left(\boldsymbol{w}'\right)\right\| \leqslant L\left\|\boldsymbol{w} - \boldsymbol{w}'\right\|,\ \ \forall u,
        \end{equation}
        where $L$ is the Lipschitz constant, depending on $F(\cdot)$.
    \end{assumption}

  \item \begin{assumption}\label{CompleteAssumption2}
        Unbiasedness and bounded variance of stochastic gradient
        \begin{equation}
        \label{Assumption21}
        \mathbb{E}\left[ \boldsymbol{g}_u\left(\boldsymbol{w}\right)\right] = \nabla F_u\left(\boldsymbol{w}\right),\forall u,\forall \boldsymbol{w},
        \end{equation}
        and
        \begin{equation}
        \label{Assumption22}
        \mathbb{E}\left[ \left\|\boldsymbol{g}_u\left(\boldsymbol{w}\right) - \nabla F_u\left(\boldsymbol{w}\right)\right\|^2\right] \leq \sigma^2, \ \ \forall u,\forall \boldsymbol{w}.
        \end{equation}
    \end{assumption}

  \item \begin{assumption}\label{CompleteAssumption3}
        Data heterogeneity with bounded gradient variance
        \begin{equation}
        \mathbb{E}\left[\left\|\nabla F_u\left(\boldsymbol{w}\right)-\nabla F\left(\boldsymbol{w}\right)\right\|\right]\leq Z_u^2, \ \ \forall u,\forall \boldsymbol{w}.
        \end{equation}
    \end{assumption}

  \item \begin{assumption}\label{CompleteAssumption4}
        The second moments of model parameters are bounded, given by
        \begin{equation}
        \mathbb{E}\left[\left\|\boldsymbol{w}\right\|^2\right] \leq \Gamma^2, \ \ \forall \boldsymbol{w}.
        \end{equation}
    \end{assumption}
\end{itemize}

\subsection{Convergence Analysis}
To facilitate the convergence analysis, we first introduce several essential lemmas that lay the groundwork for the subsequent theoretical development.

\begin{lemma}\label{lemma1}
The model pruning error with pruning ratio $\rho_u$ can be represented as
\begin{equation}
\begin{aligned}
\mathbb{E}\left\{\left\|\boldsymbol{w}_u^t-\tilde{\boldsymbol{w}}_u^t\right\|^2\right\} \leqslant \rho_u \Gamma^2.
\end{aligned}
\end{equation}
\end{lemma}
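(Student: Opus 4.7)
The plan is to observe that the pruning operation sets a selected subset of coordinates of $\boldsymbol{w}_u^t$ to zero, so the difference $\boldsymbol{w}_u^t-\tilde{\boldsymbol{w}}_u^t$ is a sparse vector whose nonzero entries coincide exactly with the pruned coordinates of $\boldsymbol{w}_u^t$. Writing $\mathcal{P}_u\subset\{1,\dots,V\}$ for the index set of pruned parameters, with $|\mathcal{P}_u|=V_u$, this gives immediately
\begin{equation}
\|\boldsymbol{w}_u^t-\tilde{\boldsymbol{w}}_u^t\|^2 \;=\; \sum_{v\in\mathcal{P}_u}\bigl(w_{u,v}^t\bigr)^2.
\end{equation}

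Next I would exploit the importance rule $\bar{I}^t_{u,v}=\|w^t_{u,v}\|$: the pruned indices are precisely those with the $V_u$ smallest values of $(w_{u,v}^t)^2$. The key averaging inequality is that the mean of the $V_u$ smallest entries of a nonnegative sequence cannot exceed the mean of the entire sequence, so
\begin{equation}
\sum_{v\in\mathcal{P}_u}\bigl(w_{u,v}^t\bigr)^2 \;\leq\; \frac{V_u}{V}\sum_{v=1}^{V}\bigl(w_{u,v}^t\bigr)^2 \;=\; \rho_u\,\|\boldsymbol{w}_u^t\|^2.
\end{equation}
Taking expectation and invoking Assumption \ref{CompleteAssumption4}, which bounds $\mathbb{E}[\|\boldsymbol{w}\|^2]\leq\Gamma^2$, then yields the desired bound $\mathbb{E}\{\|\boldsymbol{w}_u^t-\tilde{\boldsymbol{w}}_u^t\|^2\}\leq\rho_u\Gamma^2$.

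The only genuinely delicate point is the averaging inequality in the second step, which relies on the magnitude-based pruning rule. I would justify it by sorting $(w_{u,v}^t)^2$ in nondecreasing order and noting that the average of the first $V_u$ terms is at most the overall average. If one wanted to be slightly more general (e.g., allowing structured/filter pruning rather than strictly per-parameter magnitude ranking), the same bound still holds as long as pruning removes a fraction $\rho_u$ of coordinates whose aggregate squared energy is no larger than the proportional share $\rho_u\|\boldsymbol{w}_u^t\|^2$; this would only require restating the pruning criterion rather than reworking the calculation. No other lemma or assumption is needed, and the argument is pathwise (deterministic) up to the final application of Assumption \ref{CompleteAssumption4}.
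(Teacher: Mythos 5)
Your proof is correct and is essentially the argument the paper defers to: the paper simply cites Stich et al.\ for this lemma, and the standard proof there is exactly your ordering argument --- the sum of squares over the $V_u$ smallest-magnitude coordinates is at most the fraction $\rho_u=V_u/V$ of $\|\boldsymbol{w}_u^t\|^2$, after which Assumption~\ref{CompleteAssumption4} gives the bound. Your version has the minor merit of being self-contained and of making explicit that the bound depends on the magnitude-based (or at least energy-proportional) pruning rule.
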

\begin{proof}
See the detailed proof  in  \cite{stich2018sparsified}.
\end{proof}

\begin{lemma} \label{lemma2}
By the stochastic quantization, each local gradient is unbiasedly estimated as
\begin{equation}
\label{quantize no bias}
\begin{aligned}
& \mathbb{E}\left[\mathcal{Q}(\boldsymbol{g}_u^t)  \right] = \boldsymbol{g}_u^t,
\end{aligned}
\end{equation}
and the associated quantization error is bounded by
\begin{equation}
\mathbb{E} \left[ \left\| \mathcal{Q}\left(\boldsymbol{g}_u^t\right) - \boldsymbol{g}_u^t \right\|^2 \right]
\leq
\frac{
\displaystyle \sum_{v=1}^{V} \left( \bar{\boldsymbol{g}}_{u,v}^t - \underline{\boldsymbol{g}}_{u,v}^t \right)^2
}{
4\left(2^{\delta_u} - 1\right)^2
}.
\end{equation}

\end{lemma}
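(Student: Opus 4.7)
The plan is to prove both claims coordinate-wise and then combine, exploiting the fact that the stochastic rounding in Eq.~(\ref{eq:quant}) is applied independently to each component $v \in \{1,\dots,V\}$ of $\boldsymbol{g}_u^t$, so $\mathcal{Q}(\boldsymbol{g}_u^t) - \boldsymbol{g}_u^t$ has independent entries whose $\ell_2$ squared norm decomposes as a sum of per-coordinate squared errors. This reduces the vector claim to a one-dimensional statement about a two-point Bernoulli random variable supported on $\{b_j, b_{j+1}\}$.

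For the unbiasedness part, I would fix a coordinate $v$ and assume first that $g_{u,v}^t \geq 0$ lies in $[b_j, b_{j+1})$ (the general case follows by symmetry in $\operatorname{sign}(g_{u,v}^t)$). Direct computation gives
\begin{equation}
\mathbb{E}\!\left[\mathcal{Q}(g_{u,v}^t)\right] = b_j\cdot\frac{b_{j+1}-g_{u,v}^t}{b_{j+1}-b_j} + b_{j+1}\cdot\frac{g_{u,v}^t-b_j}{b_{j+1}-b_j} = g_{u,v}^t,
\end{equation}
and stacking the coordinates yields $\mathbb{E}[\mathcal{Q}(\boldsymbol{g}_u^t)] = \boldsymbol{g}_u^t$. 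For the variance part, I would use the elementary fact that a random variable taking two values $a<b$ with probabilities $p$ and $1-p$ has variance $p(1-p)(b-a)^2 \leq \tfrac{1}{4}(b-a)^2$, the maximum being attained at $p=\tfrac{1}{2}$. Applied to the two-point distribution on $\{b_j,b_{j+1}\}$, this yields
\begin{equation}
\mathbb{E}\!\left[(\mathcal{Q}(g_{u,v}^t)-g_{u,v}^t)^2\right] \leq \frac{(b_{j+1}-b_j)^2}{4} = \frac{(\bar{g}_{u,v}^t-\underline{g}_{u,v}^t)^2}{4(2^{\delta_u}-1)^2},
\end{equation}
where the last equality substitutes the uniform spacing from Eq.~(\ref{eq:bt}). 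Summing over $v=1,\dots,V$ and invoking independence of the per-coordinate rounding then gives the stated bound on $\mathbb{E}\|\mathcal{Q}(\boldsymbol{g}_u^t)-\boldsymbol{g}_u^t\|^2$.

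The only subtlety — and the part I would spend most care on — is the $\operatorname{sign}(\cdot)$ factor appearing in Eq.~(\ref{eq:quant}), because the interval indexing uses $|g_{u,v}^t|$ while the grid $\{b_0,\dots,b_{2^{\delta_u}}\}$ in Eq.~(\ref{eq:bt}) is defined on $[\underline{g}_{u,v}^t,\bar{g}_{u,v}^t]$. I would resolve this by treating the nonnegative case as canonical, noting that the level spacing $b_{j+1}-b_j$ is independent of which interval is used (so the $1/4$ variance bound applies uniformly), and that the sign flip introduced for negative entries preserves both the mean (by linearity) and the squared error (since $(\operatorname{sign}(g)x-g)^2=(x-|g|)^2$ when the magnitude-based rule is applied). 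Once this alignment is made explicit, the two bullets of the lemma follow with no further machinery.
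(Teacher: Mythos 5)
Your proposal is correct: the paper itself does not prove Lemma~2 but simply defers to the cited reference, and your coordinate-wise argument (unbiasedness of the two-point stochastic rounding, the elementary variance bound $p(1-p)(b_{j+1}-b_j)^2 \le \tfrac{1}{4}(b_{j+1}-b_j)^2$, and the uniform spacing $(\bar{g}_{u,v}^t-\underline{g}_{u,v}^t)/(2^{\delta_u}-1)$) is exactly the standard derivation such references use, including your careful handling of the $\mathrm{sign}(\cdot)$ convention. One cosmetic remark: independence of the per-coordinate rounding is not actually needed for the final summation, since $\|\mathcal{Q}(\boldsymbol{g}_u^t)-\boldsymbol{g}_u^t\|^2$ is already a sum of per-coordinate squared errors and linearity of expectation suffices.
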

\begin{proof}
    See the detail proof in \cite{zheng2020design}.
\end{proof}

\begin{lemma} \label{lemma3}
Under Assumptions and Lemma \ref{lemma1}, we can obtain
\begin{equation}
\begin{aligned}
&\mathbb{E}\left[ \langle \nabla F(\boldsymbol{w}^{t-1}), \boldsymbol{w}^{t} - \boldsymbol{w}^{t-1} \rangle \right] \leq -\frac{\eta}{2} \cdot \mathbb{E} \left[ \left\| \nabla F(\boldsymbol{w}^{t-1}) \right\|^2 \right]\\&+\eta\cdot\chi^2_{\boldsymbol{\beta}\| \boldsymbol{\tau}} \cdot \sum_{u=1}^{U} \tau_u Z_u^2+ \eta\sum_{u=1}^{U} \bar{\beta}_u^2 \cdot L^2 \Gamma^2 \sum_{u=1}^{U} \rho_u,
\end{aligned}
\end{equation}
where $\chi^2_{\boldsymbol{\beta}\| \boldsymbol{\tau}}=\sum_{u=1}^U\left(\bar{\beta}_u-\tau_u\right)^2$ is the chi-square divergence between $\boldsymbol{\beta} = \left[\bar{\beta}_1,\ldots,\bar{\beta}_U \right]$ and $\boldsymbol{\tau} = \left[\tau_1,\ldots,\tau_U \right]$\cite{wang2020tackling}, and $\bar{\beta}_u$ is a function of $\{q_u, \forall u \in\mathcal{U} \}$, as shown in Eq. \eqref{eqA1} in Appendix A. 
\end{lemma}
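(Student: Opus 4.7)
The plan is to expand the inner product using the server update rule in Eq.\ (\ref{globalgbar}) and eliminate the sources of randomness one layer at a time through conditional expectation. I would first take expectation over the stochastic quantizer, invoking Lemma \ref{lemma2} to replace $\mathcal{Q}(\boldsymbol{g}_u^t)$ by $\boldsymbol{g}_u^t$ without introducing bias; then use the unbiasedness part of Assumption \ref{CompleteAssumption2} to replace $\boldsymbol{g}_u^t$ by $\nabla F_u(\widetilde{\boldsymbol{w}}_u^t)$. Next, conditioning on the pruned local models, I would take expectation over the joint randomness of device sampling $\mathcal{S}^t$ and transmission outages $\{\alpha_u^t\}$. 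The key observation here is that the ratio appearing in Eq.\ (\ref{globalgbar}) defines an effective per-device aggregation weight $\bar{\beta}_u$, which combines the sampling probability $\tau_u$ with the successful-delivery rate $1-q_u(p_u)$ in the functional form given in Appendix A. After these reductions the expected update direction reads $\mathbb{E}[\boldsymbol{w}^t-\boldsymbol{w}^{t-1}] = -\eta\sum_{u=1}^{U}\bar{\beta}_u\,\nabla F_u(\widetilde{\boldsymbol{w}}_u^t)$, after which the lemma's left-hand side is just an inner product against $\nabla F(\boldsymbol{w}^{t-1})$.

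The second step is an add-and-subtract decomposition. Since $\sum_u\bar{\beta}_u = \sum_u\tau_u = 1$, I can rewrite $\sum_u\bar{\beta}_u\nabla F_u(\widetilde{\boldsymbol{w}}_u^t)$ as $\nabla F(\boldsymbol{w}^{t-1})$, plus a sampling/outage bias term $\sum_u(\bar{\beta}_u-\tau_u)\bigl(\nabla F_u(\boldsymbol{w}^{t-1})-\nabla F(\boldsymbol{w}^{t-1})\bigr)$ (where subtracting $\nabla F$ is free because $\sum_u(\bar{\beta}_u-\tau_u)=0$), plus a pruning drift term $\sum_u\bar{\beta}_u\bigl(\nabla F_u(\widetilde{\boldsymbol{w}}_u^t)-\nabla F_u(\boldsymbol{w}^{t-1})\bigr)$. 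The first piece produces $-\eta\,\|\nabla F(\boldsymbol{w}^{t-1})\|^2$; applying Young's inequality in the form $-\langle a,b\rangle\le \tfrac{1}{2}\|a\|^2+\tfrac{1}{2}\|b\|^2$ lets me absorb half of that into the $-\tfrac{\eta}{2}\|\nabla F\|^2$ on the right-hand side while the residual $\|b\|^2$ terms are controlled separately. The sampling/outage cross term is then bounded by Cauchy-Schwarz together with Assumption \ref{CompleteAssumption3}, which yields the $\chi^2_{\boldsymbol{\beta}\|\boldsymbol{\tau}}\cdot\sum_u\tau_u Z_u^2$ contribution. The pruning drift term is bounded via the $L$-Lipschitz property from Assumption \ref{CompleteAssumption1} followed by Lemma \ref{lemma1}, producing the $L^2\Gamma^2\sum_u\rho_u$ contribution weighted by $\sum_u\bar{\beta}_u^2$.

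I expect the main obstacle to be justifying the passage from the random ratio in Eq.\ (\ref{globalgbar}) to the deterministic effective weights $\bar{\beta}_u$. Because the denominator $\sum_{u\in\mathcal{S}^t}\alpha_u^t$ is itself random, $\mathbb{E}[\boldsymbol{w}^t]$ is not the ratio of expectations, so one must either invoke a concentration argument --- valid for moderately large $S$, where $\sum_{u\in\mathcal{S}^t}\alpha_u^t\approx S\sum_u\tau_u(1-q_u(p_u))$ --- or adopt the auxiliary weighting construction of Appendix A that defines $\bar{\beta}_u$ directly and guarantees $\sum_u\bar{\beta}_u=1$. Correctly aligning this step with the precise definition of $\chi^2_{\boldsymbol{\beta}\|\boldsymbol{\tau}}$ used in the lemma, so that the heterogeneity term factors cleanly as $\chi^2_{\boldsymbol{\beta}\|\boldsymbol{\tau}}\sum_u\tau_u Z_u^2$, is the delicate part of the calculation; everything else reduces to routine applications of Young's inequality, Cauchy-Schwarz, and the previously established lemmas.
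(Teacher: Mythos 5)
Your proposal is correct and follows essentially the same route as the paper: reduce the update direction to $-\eta\sum_u\bar{\beta}_u\nabla F_u(\widetilde{\boldsymbol{w}}_u^t)$ via unbiased quantization, Assumption \ref{CompleteAssumption2}, and the exact enumeration over sampled sets and outage patterns that defines $\bar{\beta}_u$ (the paper's Eq.\ \eqref{eqA1}, i.e.\ your second option, not a concentration argument), then split the residual into a sampling/outage bias term bounded by Cauchy--Schwarz with Assumption \ref{CompleteAssumption3} and a pruning drift term bounded by Lipschitzness and Lemma \ref{lemma1}. The only cosmetic difference is that you invoke Young's inequality where the paper uses the polarization identity $\langle x_1,x_2\rangle=\tfrac{1}{2}\left(\|x_1\|^2+\|x_2\|^2-\|x_1-x_2\|^2\right)$ and drops the negative cross-norm term, which is an equivalent step.
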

\begin{proof}
See the detailed proof in Appendix A.
\end{proof}
%\begin{remark}
%Lemma \ref{QuantizationLemma} implies that increasing the number of quantization bits used to approximate local gradients results in a reduction of quantization error. This motivates us to allocate larger numbers of quantization bits to devices with superior channel conditions and more abundant energy resources.
%\end{remark}
\begin{lemma} \label{lemma4}
According to Assumptions and Lemma \ref{lemma2}, it holds that
\begin{equation}
\begin{aligned}
&\mathbb{E}\left[\|\boldsymbol{w}^{t} - \boldsymbol{w}^{t-1}\|^2\right]\leq\; 2\eta^2\sum_{u=1}^{U} \bar{\alpha}_u\frac{
\displaystyle \sum_{v=1}^{V} \left( \bar{g}_{u,v}^t - \underline{g}_{u,v}^t \right)^2
}{
4\left(2^{\delta_u} - 1\right)^2
}\\&+8\eta^2L^2 \Gamma^2 \sum_{u=1}^{U} \bar{\beta}_u \rho_u+16\eta^2\, \mathbb{E} \left[
\left\| 
  \nabla F(\boldsymbol{w}^{t-1}) 
\right\|^2
\right]\\&+4\eta^2\sum_{u=1}^{U}\bar{\alpha}_u\sigma^2 + 16\eta^2\sum_{u=1}^{U} \bar{\alpha}_u Z_u^2\\&+16\eta^2\sum_{k=2}^{S}
\frac{
(q_{\max})^{S - k} \, \mathbb{C}_S^k
}{
1 - (q_{\max})^S
}
\sum_{u=1}^{U} \tau_u \left\| q_u - \bar{q} \right\|^2 Z_u^2,
\end{aligned}
\end{equation}
where $ \mathbb{C}_S^k = \frac{S!}{k!(S-k)!}$, $q_{\max} = \max\{q_1,\dots,q_U\}$ and $\bar{q}=\sum\limits_{u=1}^U\tau_uq_u$ represent maximum and average transmission error probability, respectively. $\bar{\alpha}_u$ is a function of $\{q_u, \forall u \in\mathcal{U}\}$, as shown in Eq. \eqref{eqB.1} in Appendix B.
\end{lemma}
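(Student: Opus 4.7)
The plan is to start from the aggregation rule in Eq.~\eqref{globalgbar}, writing $\boldsymbol{w}^{t} - \boldsymbol{w}^{t-1} = -\eta \widetilde{\boldsymbol{g}}^{t}$ with $\widetilde{\boldsymbol{g}}^{t} := \bigl(\sum_{u \in \mathcal{S}^{t}} \alpha_{u}^{t}\mathcal{Q}(\boldsymbol{g}_{u}^{t})\bigr)/K^{t}$ and $K^{t} := \sum_{v \in \mathcal{S}^{t}} \alpha_{v}^{t}$. Into each summand I would insert the telescoping identity
\begin{equation*}
\mathcal{Q}(\boldsymbol{g}_{u}^{t}) = \bigl[\mathcal{Q}(\boldsymbol{g}_{u}^{t}) - \boldsymbol{g}_{u}^{t}\bigr] + \bigl[\boldsymbol{g}_{u}^{t} - \nabla F_{u}(\widetilde{\boldsymbol{w}}_{u}^{t})\bigr] + \bigl[\nabla F_{u}(\widetilde{\boldsymbol{w}}_{u}^{t}) - \nabla F_{u}(\boldsymbol{w}^{t-1})\bigr] + \bigl[\nabla F_{u}(\boldsymbol{w}^{t-1}) - \nabla F(\boldsymbol{w}^{t-1})\bigr] + \nabla F(\boldsymbol{w}^{t-1}),
\end{equation*}
so the five resulting pieces correspond to quantization noise, SGD noise, pruning perturbation, data heterogeneity, and a deterministic reference gradient. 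Because the weights $\alpha_{u}^{t}/K^{t}$ form a convex combination on the event $\{K^{t} \ge 1\}$, the fifth piece collapses exactly to $\nabla F(\boldsymbol{w}^{t-1})$. A cascaded application of $\|a+b\|^{2} \le 2\|a\|^{2} + 2\|b\|^{2}$, peeling the pieces off one at a time, produces the pre-factors $2,4,8,16,16$ that match the constants in the statement.

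Next, I would bound each component with the tool matched to its origin. For the quantization and SGD pieces, conditional independence across devices makes the cross-terms vanish, so Lemma~\ref{lemma2} and the variance clause of Assumption~\ref{CompleteAssumption2} give $\sum_{u}\bar{\alpha}_{u}\cdot(\cdot)$ after using $(\alpha_{u}^{t}/K^{t})^{2} \le \alpha_{u}^{t}/K^{t}$ and taking expectations. For the pruning piece, applying Jensen's inequality to the convex combination followed by the $L$-smoothness of Assumption~\ref{CompleteAssumption1} and Lemma~\ref{lemma1} yields a per-device bound $L^{2}\Gamma^{2}\rho_{u}$ weighted by $\bar{\beta}_{u}$, matching the same weighting that arose in Lemma~\ref{lemma3}. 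The deterministic reference term contributes the $16\eta^{2}\|\nabla F(\boldsymbol{w}^{t-1})\|^{2}$ factor directly. In each case the weights $\bar{\alpha}_{u}$ or $\bar{\beta}_{u}$ emerge as the exact expectations of the normalized sampling-and-outage indicators defined in Appendix~B.

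The main obstacle, and the source of the combinatorial last line in the bound, is the heterogeneity piece $\bigl\|\frac{1}{K^{t}}\sum_{u\in\mathcal{S}^{t}}\alpha_{u}^{t}[\nabla F_{u}(\boldsymbol{w}^{t-1})-\nabla F(\boldsymbol{w}^{t-1})]\bigr\|^{2}$. The $[\nabla F_{u} - \nabla F]$ terms fail to be zero-mean under the \emph{realized} convex combination (they are only zero-mean under the target $\tau$-weighting), so the cross terms do not vanish: although $\alpha_{u}^{t}$ and $\alpha_{v}^{t}$ are conditionally independent given $\mathcal{S}^{t}$, the random denominator $K^{t}$ couples the normalized weights, and whenever the outage probabilities $\{q_{u}\}$ are heterogeneous the effective sampling distribution is shifted away from $\{\tau_{u}\}$. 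To control this, I would condition on $\{K^{t}=k\}$ for each $k \in \{1,\dots,S\}$, use the conditional-probability bound $\Pr[K^{t}=k \mid K^{t}\ge 1] \le \binom{S}{k}q_{\max}^{S-k}/(1-q_{\max}^{S})$, and bound the conditional mismatch by $\sum_{u}\tau_{u}\|q_{u}-\bar{q}\|^{2}Z_{u}^{2}$. Summing over $k = 2,\dots,S$ (the $k=1$ slice has no cross terms and is absorbed into the $\sum_{u}\bar{\alpha}_{u}Z_{u}^{2}$ diagonal) produces exactly the combinatorial tail. Collecting the five component bounds then assembles the stated inequality.
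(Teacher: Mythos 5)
Your proposal follows essentially the same route as the paper's Appendix~B: the identical five-way telescoping of $\mathcal{Q}(\boldsymbol{g}_u^t)$ into quantization noise, SGD noise, pruning perturbation, heterogeneity, and the reference gradient, the same cascaded $\|a+b\|^2\le 2\|a\|^2+2\|b\|^2$ peeling that yields the constants $2,4,8,16,16$, and the same per-piece tools (vanishing cross-terms with $\bar{\alpha}_u$ weights, Cauchy--Schwarz/Jensen with $\bar{\beta}_u$ weights plus Lemma~\ref{lemma1} for pruning, and the conditioning-on-$K^t=k$ argument for the heterogeneity cross-terms). The only difference is cosmetic: you sketch the combinatorial bound on the cross-term $C_2$ directly, whereas the paper imports it by citing \cite{wang2021quantized}.
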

\begin{proof}
See the detailed proof in Appendix B.
\end{proof}
Based on Lemmas \ref{lemma3} and \ref{lemma4}, the convergence rate of FL can be derived by  Theorem \ref{Theorem1}.
\begin{theorem}
\label{Theorem1}
\textcolor{black} {When the data generation strategy $\boldsymbol{\Delta} = \left[\Delta_1,\ldots,\Delta_U \right]$, model pruning strategy $\boldsymbol{\rho} = \left[\rho_1,\ldots,\rho_U \right]$, quantization level strategy $\boldsymbol{\delta} = \left[\delta_1, \ldots,\delta_u\right]$ and power configuration strategy $\boldsymbol{p} = \left[p_1,\ldots,p_U \right]$ are given, the upper bound on the average $l_2$-norm of the gradients after $\Omega$ rounds with the learning rate satisfying $0<\eta<\frac{1}{16L}$ is represented as}
\begin{equation}
\begin{aligned}
&\frac{1}{\Omega} \sum_{t = 1}^\Omega\mathbb{E} \left[ \left\| \nabla F(\boldsymbol{w}^{t-1}) \right\|^2 \right]
\leq
\frac{ \mathbb{E}[F(\boldsymbol{w}^{0})]-\mathbb{E}[F(\boldsymbol{w}^{*})] }{ \left(\frac{\eta}{2}-8L\eta^2\right) \Omega}\\
&\quad+ \frac{ \eta \cdot \chi^2_{\boldsymbol{\beta}\| \boldsymbol{\tau}} }{ \frac{\eta}{2}-8L\eta^2 } \cdot \sum_{u=1}^{U} \tau_u Z_u^2+ \frac{ 8L\eta^2 }{ \left(\frac{\eta}{2}-8L\eta^2\right) } \sum_{u=1}^{U} \bar{\alpha}_uZ_u^2\\
&\quad+ \frac{ \eta L^2 \Gamma^2 }{ \left(\frac{\eta}{2}-8L\eta^2\right) } \left(\sum_{u=1}^{U} \bar{\beta}_u^2 \cdot \sum_{u=1}^{U} \rho_u+ 4\eta L\sum_{u=1}^{U} \bar{\beta}_u \rho_u\right)\\
&\quad+ \frac{L\eta^2}{\left(\frac{\eta}{2}-8L\eta^2\right)\Omega} \sum_{t = 1}^\Omega\sum_{u=1}^{U} \bar{\alpha}_u\frac{
\displaystyle \sum_{v=1}^{V} \left( \bar{g}_{u,v}^t - \underline{g}_{u,v}^t \right)^2
}{
4\left(2^{\delta_u} - 1\right)^2
} \\
&\quad+ \frac{8L\eta^2}{ \frac{\eta}{2}-8L\eta^2 } 
\sum_{k=2}^{S}
\frac{
(q_{\max})^{S - k} \, \mathbb{C}_S^k
}{
1 - (q_{\max})^S
}
\sum_{u=1}^{U} \tau_u \left\| q_u - \bar{q} \right\|^2 Z_u^2\\&\quad + \frac{2 L\eta^2}{ \left(\frac{\eta}{2}-8L\eta^2\right) } \sum_{u=1}^{U} \bar{\alpha}_u\sigma^2.
\end{aligned}
\end{equation}
\end{theorem}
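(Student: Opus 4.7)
The plan is to start from the $L$-smoothness of $F$ implied by Assumption \ref{CompleteAssumption1}, which yields the standard descent inequality
\[
F(\boldsymbol{w}^{t}) \le F(\boldsymbol{w}^{t-1}) + \langle \nabla F(\boldsymbol{w}^{t-1}),\, \boldsymbol{w}^{t} - \boldsymbol{w}^{t-1}\rangle + \frac{L}{2}\,\|\boldsymbol{w}^{t} - \boldsymbol{w}^{t-1}\|^{2}.
\]
Taking expectation on both sides and substituting the bound from Lemma \ref{lemma3} for the inner-product term and from Lemma \ref{lemma4} for the squared-norm term immediately replaces the random per-round increment with a deterministic upper bound that explicitly surfaces all the ingredients appearing in the theorem: pruning via $\rho_{u}$, quantization via $\delta_{u}$, partial participation and transmission outages via $\bar{\alpha}_{u}$ and $\bar{\beta}_{u}$, data heterogeneity via $Z_{u}$ and $\chi^{2}_{\boldsymbol{\beta}\|\boldsymbol{\tau}}$, and the transmission-error asymmetry through $q_{u}$ and $\bar{q}$.

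Second, I would collect the coefficient of $\mathbb{E}[\|\nabla F(\boldsymbol{w}^{t-1})\|^{2}]$. Lemma \ref{lemma3} contributes $-\eta/2$, while the $16\eta^{2}\,\mathbb{E}[\|\nabla F(\boldsymbol{w}^{t-1})\|^{2}]$ term inside Lemma \ref{lemma4}, once multiplied by $L/2$ from the smoothness step, contributes $+8L\eta^{2}$. Thus the effective prefactor is $-(\eta/2 - 8L\eta^{2})$, and the hypothesis $0<\eta<1/(16L)$ is precisely what is needed to keep this quantity positive so that dividing by it preserves the direction of the inequality. Rearranging then bounds $\mathbb{E}[\|\nabla F(\boldsymbol{w}^{t-1})\|^{2}]$ by $(\mathbb{E}[F(\boldsymbol{w}^{t-1})]-\mathbb{E}[F(\boldsymbol{w}^{t})])/(\eta/2-8L\eta^{2})$ plus the remaining noise, heterogeneity, pruning, quantization, and outage terms divided by the same prefactor.

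Third, I would sum this one-step inequality over $t=1,\dots,\Omega$ and divide by $\Omega$. The function-value differences telescope to $\mathbb{E}[F(\boldsymbol{w}^{0})] - \mathbb{E}[F(\boldsymbol{w}^{\Omega})]$, which is upper bounded by $\mathbb{E}[F(\boldsymbol{w}^{0})] - \mathbb{E}[F(\boldsymbol{w}^{*})]$ since $\boldsymbol{w}^{*}$ is the global minimizer. All round-independent error terms (those involving $\rho_{u}$, $Z_{u}$, $\sigma$, $q_{u}$, $\chi^{2}_{\boldsymbol{\beta}\|\boldsymbol{\tau}}$) pass through the averaging unchanged, whereas the quantization-error term depends on the per-round gradient range $[\underline{g}_{u,v}^{t},\bar g_{u,v}^{t}]$ and therefore remains as a running average $\frac{1}{\Omega}\sum_{t=1}^{\Omega}$, which is exactly how it appears in the theorem statement.

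The main obstacle is the bookkeeping when consolidating the contributions of Lemmas \ref{lemma3} and \ref{lemma4}: making sure that the two occurrences of the heterogeneity term (one in the $\eta\,\chi^{2}_{\boldsymbol{\beta}\|\boldsymbol{\tau}}$ factor from Lemma \ref{lemma3} and one in the $\bar{\alpha}_{u}Z_{u}^{2}$ factor from Lemma \ref{lemma4}) are not conflated, that the pruning term is split correctly between the $\sum_{u}\bar{\beta}_{u}^{2}\cdot\sum_{u}\rho_{u}$ piece coming from Lemma \ref{lemma3} and the $4\eta L\sum_{u}\bar{\beta}_{u}\rho_{u}$ piece coming from Lemma \ref{lemma4}, and that the combinatorial transmission-error term carrying the binomial weight $\mathbb{C}_{S}^{k}$ is inherited verbatim from Lemma \ref{lemma4}. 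Once these coefficients are tracked carefully, the telescoping and final division by $\Omega(\eta/2-8L\eta^{2})$ are mechanical.
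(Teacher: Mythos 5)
Your proposal matches the paper's proof in Appendix C essentially step for step: apply the $L$-smoothness descent inequality, substitute Lemma \ref{lemma3} for the inner product and Lemma \ref{lemma4} for the squared increment, combine the $-\eta/2$ and $+8L\eta^2$ contributions into the prefactor $\frac{\eta}{2}-8L\eta^2$ (whose positivity is exactly the condition $0<\eta<\frac{1}{16L}$), then rearrange, telescope over $t=1,\dots,\Omega$, bound $\mathbb{E}[F(\boldsymbol{w}^{\Omega})]$ below by $\mathbb{E}[F(\boldsymbol{w}^{*})]$, and average. The coefficient bookkeeping you describe, including keeping the quantization term as a running average over $t$, is precisely what the paper does, so the proposal is correct and takes the same route.
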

\begin{proof}
See the detailed proof in Appendix C.
\end{proof}
From Theorem 1, it can be seen that when the devices have uniform transmission error probabilities, that is, $q_u = q\ \forall u$, the convergence rate will be improved, and the training process will still converge to a proper stationary solution. Under this condition, we have \(\bar{\beta}_u = \tau_u\) and \(\bar{\alpha}_u = \tau_u / \bar{S}\), 
with $\bar{S} = \frac{1 - q^S}{\sum_{k=1}^{S} \frac{1}{k} \mathbb{C}_S^k (1 - q)^{k} q^{S-k}}$\cite{wang2021quantized}.
\begin{corollary}\label{corollary1}
Under the same conditions as Theorem \ref{Theorem1}, if all
devices have uniform transmission error probabilities, we have

\begin{equation}
\begin{aligned}
&\frac{1}{\Omega} \sum_{t = 1}^\Omega\mathbb{E} \left[ \left\| \nabla F(\boldsymbol{w}^{t-1}) \right\|^2 \right]
\leq
\frac{ \mathbb{E}[F(\boldsymbol{w}^{0})]-\mathbb{E}[F(\boldsymbol{w}^{*})] }{ \left(\frac{\eta}{2}-8L\eta^2\right) \Omega}\\
&\quad+ \frac{ 2L\eta^2 }{ \left(\frac{\eta}{2}-8L\eta^2\right) } \left(4\sum_{u=1}^{U} \frac{\tau_u}{\bar{S}}Z_u^2+  \frac{\sigma^2}{\bar{S}}\right)\\
&\quad+ \frac{ \eta L^2 \Gamma^2 }{ \left(\frac{\eta}{2}-8L\eta^2\right) } \left(\sum_{u=1}^{U} \tau_u^2 \cdot \sum_{u=1}^{U} \rho_u+ 4\eta L\sum_{u=1}^{U} \tau_u \rho_u\right)\\
&\quad+ \frac{L\eta^2}{\left(\frac{\eta}{2}-8L\eta^2\right)\Omega} \sum_{t = 1}^\Omega\sum_{u=1}^{U} \frac{\tau_u}{\bar{S}}\frac{
\displaystyle \sum_{v=1}^{V} \left( \bar{g}_{u,v}^t - \underline{g}_{u,v}^t \right)^2
}{
4\left(2^{\delta_u} - 1\right)^2
},
\end{aligned}
\end{equation}
\end{corollary}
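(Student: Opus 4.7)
The plan is to derive Corollary \ref{corollary1} as a direct specialization of Theorem \ref{Theorem1} under the uniform transmission-error assumption $q_u = q$ for all $u\in\mathcal{U}$. First I would invoke the two identities stated immediately before the corollary, namely $\bar{\beta}_u=\tau_u$ and $\bar{\alpha}_u=\tau_u/\bar{S}$ with $\bar{S} = (1-q^S)\big/\sum_{k=1}^{S}\frac{1}{k}\mathbb{C}_S^k(1-q)^{k}q^{S-k}$, taking these as established consequences of the definitions in Appendix A and B together with the cited relation in \cite{wang2021quantized}. Substituting these into the bound of Theorem \ref{Theorem1} is the backbone of the argument.

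Next I would show that two of the terms in Theorem \ref{Theorem1} vanish. Because $\bar{\beta}_u=\tau_u$, the chi-square divergence satisfies $\chi^2_{\boldsymbol{\beta}\|\boldsymbol{\tau}}=\sum_{u=1}^{U}(\bar{\beta}_u-\tau_u)^2=0$, so the term $\frac{\eta\,\chi^2_{\boldsymbol{\beta}\|\boldsymbol{\tau}}}{\eta/2-8L\eta^2}\sum_{u}\tau_u Z_u^2$ drops out. Likewise, $q_u=\bar{q}$ for every $u$ forces $\|q_u-\bar{q}\|^2=0$, which eliminates the full $\sum_{k=2}^{S}$ summation. Substituting $\bar{\beta}_u=\tau_u$ into the pruning term yields $\sum_{u}\tau_u^2\cdot\sum_{u}\rho_u+4\eta L\sum_{u}\tau_u\rho_u$, matching the corollary exactly.

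Finally I would collect the $Z_u^2$, $\sigma^2$, and quantization contributions. Replacing $\bar{\alpha}_u$ by $\tau_u/\bar{S}$ converts $\frac{8L\eta^2}{\eta/2-8L\eta^2}\sum_{u}\bar{\alpha}_u Z_u^2$ and $\frac{2L\eta^2}{\eta/2-8L\eta^2}\sum_{u}\bar{\alpha}_u\sigma^2$ into $\frac{8L\eta^2}{\eta/2-8L\eta^2}\sum_{u}\frac{\tau_u}{\bar{S}}Z_u^2$ and $\frac{2L\eta^2}{\eta/2-8L\eta^2}\cdot\frac{\sigma^2}{\bar{S}}$ respectively (the latter using $\sum_{u}\tau_u=1$), which I would factor to recover the consolidated term $\frac{2L\eta^2}{\eta/2-8L\eta^2}\bigl(4\sum_{u}\frac{\tau_u}{\bar{S}}Z_u^2+\sigma^2/\bar{S}\bigr)$. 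The same substitution in the quantization term reproduces $\sum_{u}\frac{\tau_u}{\bar{S}}$ in front of the range-based quotient. Since $\eta$ satisfies $0<\eta<1/(16L)$, the denominator $\eta/2-8L\eta^2$ remains positive, so no direction of inequality is flipped.

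The proof is essentially an exercise in bookkeeping rather than new estimation, so the main obstacle is not analytical but notational: I would need to justify cleanly that the formulas $\bar{\beta}_u=\tau_u$ and $\bar{\alpha}_u=\tau_u/\bar{S}$ genuinely follow from the expressions \eqref{eqA1} and \eqref{eqB.1} under $q_u=q$, and to ensure the $\bar{S}$ factor is normalized consistently with the aggregation rule in Eq. \eqref{globalgbar}. Once those two identities are granted, the remainder of the derivation is a direct term-by-term simplification of the statement of Theorem \ref{Theorem1}.
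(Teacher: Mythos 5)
Your proposal is correct and matches the paper's (implicit) argument exactly: the paper states the identities $\bar{\beta}_u=\tau_u$ and $\bar{\alpha}_u=\tau_u/\bar{S}$ just before the corollary and obtains the result by direct substitution into Theorem \ref{Theorem1}, with the chi-square term and the $\sum_{k=2}^{S}$ term vanishing precisely as you describe. Your term-by-term bookkeeping, including the consolidation of the $Z_u^2$ and $\sigma^2$ contributions via $\sum_u\tau_u=1$, reproduces the stated bound.
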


\begin{corollary}\label{corollary2}
\textcolor{black} {The number of training rounds $\Omega$ required for achieving the convergence target that \( \frac{1}{\Omega} \sum_{t = 1}^\Omega\mathbb{E} \left[ \left\| \nabla F(\boldsymbol{w}^{t-1}) \right\|^2 \right] \leq \boldsymbol{\varepsilon} \) is represented as \cite{9916128}}
\begin{equation}
\Omega 
\geq \frac{ \mathbb{E}[F(\boldsymbol{w}^{0})] - \mathbb{E}[F(\boldsymbol{w}^{*})] }
{ \left( \frac{\eta}{2} - 8 L \eta^2 \right) 
\boldsymbol{\varepsilon} - \Psi  },
\end{equation}
with
\begin{equation}
\begin{aligned}
\Psi =\;&
\eta L^2 \Gamma^2
\left( 
\sum_{u=1}^{U} \tau_u^2 \cdot \sum_{u=1}^{U} \rho_u 
+ 4\eta L \sum_{u=1}^{U} \tau_u \rho_u 
\right) \\[8pt]
&+ L\eta^2
\sum_{u=1}^{U} \frac{ \tau_u }{ \bar{S} } 
\frac{ \displaystyle \sum_{v=1}^V \left( \bar{g}^{\prime}_{u,v} - \underline{g}^{\prime}_{u,v} \right)^2 }
{  4\left( 2^{\delta_u} - 1 \right)^2 } \\[8pt]
&+ 2L \eta^2
\left( 
\frac{ \sigma^2 }{ \bar{S} } 
+ 4 \sum_{u=1}^{U} \frac{ \tau_u }{ \bar{S} } Z_u^2 
\right),
\end{aligned}
\end{equation}
\end{corollary}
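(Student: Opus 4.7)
The plan is to start from the per-round averaged bound in Corollary \ref{corollary1} and isolate the dependence on $\Omega$ so that the convergence criterion $\frac{1}{\Omega}\sum_{t=1}^{\Omega}\mathbb{E}[\|\nabla F(\boldsymbol{w}^{t-1})\|^{2}]\leq \boldsymbol{\varepsilon}$ can be inverted into a lower bound on $\Omega$. A careful look at the right-hand side of Corollary \ref{corollary1} shows that only two distinct $\Omega$-dependencies are present: the optimality-gap term $\frac{\mathbb{E}[F(\boldsymbol{w}^{0})]-\mathbb{E}[F(\boldsymbol{w}^{*})]}{(\eta/2-8L\eta^{2})\Omega}$ and the time-averaged quantization term $\frac{L\eta^{2}}{(\eta/2-8L\eta^{2})\Omega}\sum_{t=1}^{\Omega}\sum_{u=1}^{U}\frac{\tau_{u}}{\bar{S}}\frac{\sum_{v=1}^{V}(\bar{g}_{u,v}^{t}-\underline{g}_{u,v}^{t})^{2}}{4(2^{\delta_{u}}-1)^{2}}$; every remaining term is already $\Omega$-independent.

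First I would eliminate the second $\Omega$-dependence by upper-bounding the per-round quantization range uniformly in $t$. Concretely, introduce constants $\bar{g}^{\prime}_{u,v}\geq \sup_{t}\bar{g}_{u,v}^{t}$ and $\underline{g}^{\prime}_{u,v}\leq \inf_{t}\underline{g}_{u,v}^{t}$ (these exist because Assumption \ref{CompleteAssumption4} bounds the second moment of the iterate, and hence, via $L$-smoothness and Assumption \ref{CompleteAssumption2}, the stochastic gradient coordinates). Substituting these into the quantization term makes the summand constant in $t$, so $\frac{1}{\Omega}\sum_{t=1}^{\Omega}(\cdot)$ collapses to the single expression appearing in $\Psi$.

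Next I would regroup every term on the right-hand side that is independent of $\Omega$ and verify, term by term, that they equal $\Psi/(\eta/2-8L\eta^{2})$: the pruning contribution $\frac{\eta L^{2}\Gamma^{2}}{\eta/2-8L\eta^{2}}\bigl(\sum_{u}\tau_{u}^{2}\sum_{u}\rho_{u}+4\eta L\sum_{u}\tau_{u}\rho_{u}\bigr)$ contributes the first line of $\Psi$; the uniformly-bounded quantization term yields the second line; and the $2L\eta^{2}/(\eta/2-8L\eta^{2})$ multiple of $(\sigma^{2}/\bar{S}+4\sum_{u}(\tau_{u}/\bar{S})Z_{u}^{2})$ gives the third line. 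After this grouping the bound reads
\begin{equation}
\frac{1}{\Omega}\sum_{t=1}^{\Omega}\mathbb{E}\!\left[\|\nabla F(\boldsymbol{w}^{t-1})\|^{2}\right]\;\leq\;\frac{\mathbb{E}[F(\boldsymbol{w}^{0})]-\mathbb{E}[F(\boldsymbol{w}^{*})]}{(\eta/2-8L\eta^{2})\,\Omega}+\frac{\Psi}{\eta/2-8L\eta^{2}}.
\end{equation}

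Finally I would impose the convergence target, i.e.\ require the right-hand side to be at most $\boldsymbol{\varepsilon}$. Since the learning rate satisfies $0<\eta<1/(16L)$, the factor $\eta/2-8L\eta^{2}$ is strictly positive, so multiplying through by it and isolating $\Omega$ gives the stated inequality. The main (and essentially only) obstacle is the positivity of the denominator $(\eta/2-8L\eta^{2})\boldsymbol{\varepsilon}-\Psi$: this is a feasibility requirement on $\boldsymbol{\varepsilon}$ relative to the irreducible error floor induced by pruning, quantization, stochastic-gradient noise, and data heterogeneity, and I would state it explicitly as a condition under which the bound is meaningful (otherwise no finite $\Omega$ can drive the averaged gradient norm below $\boldsymbol{\varepsilon}$). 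Once this feasibility holds, the final rearrangement is a one-line algebraic manipulation and yields the claim.
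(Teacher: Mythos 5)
Your proposal is correct and follows essentially the same route as the paper: bound the time-averaged quantization term uniformly in $t$ via $\bar{g}^{\prime}_{u,v}$ and $\underline{g}^{\prime}_{u,v}$ (which the paper simply takes as the max/min over the $\Omega$ rounds), collect the remaining $\Omega$-independent terms into $\Psi$, and invert the resulting bound from Corollary~\ref{corollary1} to isolate $\Omega$. Your explicit statement of the feasibility condition $\left(\frac{\eta}{2}-8L\eta^{2}\right)\boldsymbol{\varepsilon}-\Psi>0$ is a worthwhile clarification that the paper leaves implicit.
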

where $\bar{g}^{\prime}_{u,v}=\max\{\bar{g}^{1}_{u,v},\dots,\bar{g}^{\Omega}_{u,v}\}$ and $\underline{g}^{\prime}_{u,v}=\min\{\underline{g}^{1}_{u,v},\dots,\underline{g}^{\Omega}_{u,v}\}$.
\subsection{Energy Consumption Analysis}
In practice, the BS typically has a continuous power supply, so only the energy consumption of each device is considered \footnote{\textcolor{black} {In this paper, we focus on a CPU-based computing scenario; however, this framework can be readily adapted to GPU-based scenarios by modifying Eqs. (\ref{GenComputingEnergyConsumption})-(\ref{TraningComputingTime}) following the methodology presented in \cite{9488839}.}}. The primary energy consumption for devices arises from data augmentation, local training and gradient upload. Firstly, the data augmentation energy consumption can be represented as 
\begin{equation}\label{GenComputingEnergyConsumption}
E_{u, gen} = \varrho \left(f_u\right)^\gamma T_{u, gen},
\end{equation}
where $f_u$ represents the available computational resources of device $u$, while $\varrho$ and $\gamma$ are constant parameters \cite{10454003}. $T_{u, gen}$ is the data generation time of device $u$, which is expressed as
\begin{equation}\label{GenComputingTime}
T_{u, gen} = \frac{D_u^{\text{gen}} c^{\text{gen}}_0 }{f_u},
\end{equation}
where $c^{\text{gen}}_0$ denotes the requisite CPU cycles for generating one data sample. Then, the training energy consumption of device $u$ is represented as
\begin{equation}\label{TraningComputingEnergyConsumption}
E_{u, tr} = \varrho \left(f_u\right)^\gamma T_{u, tr}.
\end{equation}
The local training time $T_{u, tr}^t$ of device $u$ is expressed as \cite{10368103}
\begin{equation}\label{TraningComputingTime}
T_{u, tr} = \frac{b c^{\text{tr}}_0 \left(1 - \rho_u\right)}{f_u},
\end{equation}
where $c^{\text{tr}}_0$ represents the total number of CPU cycles required for training one data sample \cite{9598845}.
Similarly, the energy consumption for gradient upload by device $u$ is
\begin{equation}
E_{u, cu} = p_u T_{u, cu},
\end{equation}
where $T_{u, cu}$ represents the communication time between device $u$ and the BS, calculated as
\begin{equation}
T_{u, cu} = \frac{\tilde{\delta}_u}{R_u\left(p_u\right)}.
\end{equation}

Thus, the overall expected energy consumption can be calculated as
\begin{equation}\label{TotalEnergyConsumption}
\mathcal{H}\left(\boldsymbol{\Delta,}\boldsymbol{\rho},\boldsymbol{\delta},\boldsymbol{p}\right)=\Omega\sum_{u\in \mathcal{U}} \tau_u\left(E_{u, tr} + E_{u, cu}\right) + \sum_{u\in \mathcal{U}} E_{u, gen} .
\end{equation}

\subsection{Problem Formulation}
Given the limited energy budget of edge devices, it is essential to balance training performance and energy consumption. To this end, we formulate a joint optimization problem that minimizes the accumulated average energy consumption while ensuring convergence requirements. The optimization jointly considers the data augmentation strategy $\boldsymbol{\Delta}$, model pruning strategy $\boldsymbol{\rho}$, quantization strategy $\boldsymbol{\delta}$, and power control strategy $\boldsymbol{p}$. The problem is defined as follows
\begin{subequations}
\begin{align}
\mathcal{P}1: \quad & \min_{\boldsymbol{\Delta},\boldsymbol{\delta},\boldsymbol{\rho},\boldsymbol{p}} 
\; \mathcal{H}\left(\boldsymbol{\Delta}, \boldsymbol{\rho}, \boldsymbol{\delta}, \boldsymbol{p}\right) 
\label{Za} \\
\text{s.t.} \quad 
& \quad\quad\Delta^{\min} \leqslant \Delta_u \leqslant \Delta^{\max}, 
&& \forall u,  \label{Zb} \\
& \quad\quad\delta_u \in \mathbb{Z}_+, 
&& \forall u,  \label{Zc} \\
& \quad\quad \delta^{\min} \leqslant \delta_u \leqslant \delta^{\max}, 
&& \forall u,  \label{Zd} \\
& \quad\quad p^{\min} \leqslant p_u \leqslant p^{\max}, 
&& \forall u, \label{Ze} \\
& \quad\quad\rho^{\min} \leqslant \rho_u \leqslant \rho^{\max}, 
&& \forall u,  \label{Zf}\\
& \quad\quad\quad q_u  = q, 
&& \forall u,  \label{Zg}
\end{align}
\end{subequations}
where Eq. (\ref{Zb}) represents the data generation constraints. $\mathbb{Z}_+$ in Eq. (\ref{Zc}) is the positive
integer set. Eqs. (\ref{Zd})-(\ref{Zf}) specify the valid ranges of $\delta_u$, $p_u$ and $\rho_u$, respectively. As suggested by Corollary 1, it is crucial to maintain uniform transmission error probabilities across all the devices, so we enforce the constraints in Eq. (\ref{Zg}).

\section{Algorithm Design}
In problem $\mathcal{P}1$, the objective function $\mathcal{H}(\boldsymbol{\Delta}, \boldsymbol{\rho}, \boldsymbol{\delta}, \boldsymbol{p})$ involves both continuous variables $\boldsymbol{\Delta}, \boldsymbol{\rho}, \boldsymbol{p}$ and discrete variables $\boldsymbol{\delta}$, with the power control variable $\boldsymbol{p}$ being uniquely determined by the common variable $q$ across all devices. This coupling, along with the non-convex and mixed-variable nature of the objective function, significantly increases the complexity of the optimization problem. Thus, to efficiently solve $\mathcal{P}1$, we reformulate it and design a block coordinate descent (BCD) framework, where each variable group is optimized individually using BO while the others are held fixed.

\subsection{Problem Reformulation}
Given the constraint $q_u = q,\ \forall u$, the power control vector $\boldsymbol{p}$ becomes an implicit function of $q$ and can be omitted from the optimization variables. Therefore, $\mathcal{P}1$ can be reformulated as
\begin{equation} 
\begin{aligned} 
\mathcal{P}2: \quad & \min_{q, \boldsymbol{\Delta}, \boldsymbol{\rho}, \boldsymbol{\delta}} \mathcal{H}\left(q, \boldsymbol{\Delta}, \boldsymbol{\rho},\boldsymbol{\delta}\right) \\
& \text{s.t.} \quad (\ref{Zb}), (\ref{Zc}), (\ref{Zd}), (\ref{Ze}), (\ref{Zf}), \\
& \quad \quad 0 \leq q \leq 1.
\end{aligned} 
\end{equation}

To solve it in a low-complexity manner, we decompose this mixed continuous-discrete, non-convex optimization into four subproblems based on variable blocks as
\begin{equation} 
\begin{aligned} 
\mathcal{P}_{2.1}: & \quad \min_{q} \mathcal{H}(q, \boldsymbol{\Delta}, \boldsymbol{\rho}, \boldsymbol{\delta}), \\
\mathcal{P}_{2.2}: & \quad \min_{\boldsymbol{\Delta}} \mathcal{H}(q, \boldsymbol{\Delta}, \boldsymbol{\rho}, \boldsymbol{\delta}), \\
\mathcal{P}_{2.3}: & \quad \min_{\boldsymbol{\rho}} \mathcal{H}(q, \boldsymbol{\Delta}, \boldsymbol{\rho}, \boldsymbol{\delta}), \\
\mathcal{P}_{2.4}: & \quad \min_{\boldsymbol{\delta}} \mathcal{H}(q, \boldsymbol{\Delta}, \boldsymbol{\rho}, \boldsymbol{\delta}).
\end{aligned} 
\end{equation}

\subsection{BO within Variable Blocks}
In each block, the subproblem is treated as a black-box function optimization task and solved via BO.

\subsubsection{Surrogate Model}
The unknown objective function $\mathcal{H}(\boldsymbol{x})$ is modeled as a Gaussian process (GP). Before any observations, the surrogate model $\hat{\mathcal{H}}(\boldsymbol{x})$ is defined by the GP prior
\begin{equation}
\hat{\mathcal{H}}(\boldsymbol{x}) \sim \mathcal{GP}(0, \kappa(\boldsymbol{x}, \boldsymbol{x}')),
\end{equation}
where $\kappa(\cdot, \cdot)$ denotes the covariance kernel. In our implementation, we adopt the radial basis function (RBF) kernel
\begin{equation}
\kappa(\boldsymbol{x}, \boldsymbol{x}') = \exp\left(-\frac{\|\boldsymbol{x} - \boldsymbol{x}'\|^2}{2l^2}\right),
\end{equation}
with length scale hyperparameter $l$ controlling the smoothness.

After collecting $M$ observations $\Xi_M = \{(\boldsymbol{x}_i, \mathcal{H}_i)\}_{i=1}^M$, the posterior distribution of the surrogate model at any new point $\boldsymbol{x}$ becomes
\begin{equation}
\hat{\mathcal{H}}(\boldsymbol{x}) \sim \mathcal{GP}(\mu_M(\boldsymbol{x}), \sigma_M^2(\boldsymbol{x})),
\end{equation}
where
\begin{equation}
\mu_M(\boldsymbol{x}) = \boldsymbol{k}_M^\top(\boldsymbol{x}) \boldsymbol{K}_M^{-1} \mathcal{H}_M,
\end{equation}
\begin{equation}
\sigma_M^2(\boldsymbol{x}) = \kappa(\boldsymbol{x}, \boldsymbol{x}) - \boldsymbol{k}_M^\top(\boldsymbol{x}) \boldsymbol{K}_M^{-1} \boldsymbol{k}_M(\boldsymbol{x}),
\end{equation}
and $\boldsymbol{k}_M(\boldsymbol{x}) = [\kappa(\boldsymbol{x}_1, \boldsymbol{x}), \ldots, \kappa(\boldsymbol{x}_M, \boldsymbol{x})]^\top$.

\subsubsection{Acquisition Function}
To balance exploration and exploitation, we use the Probability of improvement (PI) acquisition function
\begin{equation}
\theta(\boldsymbol{x}) = P(\hat{\mathcal{H}}(\boldsymbol{x}) \leq \mathcal{H}_M^* + \varsigma) = 1 - \Phi\left(\frac{\mu_M(\boldsymbol{x}) - \mathcal{H}_M^* - \varsigma}{\sigma_M(\boldsymbol{x})}\right),
\end{equation}
where $\mathcal{H}_M^* = \min_{1 \leq i \leq M} \mathcal{H}_i$ is the best observed value, $\varsigma > 0$ is a trade-off parameter, and $\Phi(\cdot)$ is the cumulative distribution function of the standard normal distribution. The next sampling point is obtained by
\begin{equation}
\boldsymbol{x}_{M+1} = \arg\max_{\boldsymbol{x}} \theta(\boldsymbol{x}).
\end{equation}
Once the acquisition function is maximized and a new sampling point $\boldsymbol{x}_{M+1}$ is selected, the true objective function $\mathcal{H}(\boldsymbol{x}_{M+1})$ is evaluated and appended to the dataset. The surrogate model is then updated using all available observations $\Xi_{M+1} = \Xi_M \cup \{(\boldsymbol{x}_{M+1}, \mathcal{H}_{M+1})\}$. This procedure is repeated iteratively, allowing the optimizer to refine its search towards the optimal region of the objective landscape. The process terminates when a predefined number of evaluations is reached or when the improvement over recent iterations falls below a certain threshold. The BO routine used for each variable block is detailed in Algorithm~\ref{alg:bo}.

\subsection{Joint Optimization via Block Coordinate Descent}

The BCD algorithm is employed to iteratively optimize the objective function $\mathcal{H}(q, \boldsymbol{\Delta}, \boldsymbol{\rho}, \boldsymbol{\delta})$ by updating one group of variables at a time while holding the others fixed. Each iteration consists of a complete cycle over all variable blocks, and the optimization proceeds in a predetermined order.

Let $(q^r, \boldsymbol{\Delta}^r, \boldsymbol{\rho}^r, \boldsymbol{\delta}^r)$ denote the solution at the $r$-th iteration. The update procedure for the $r$-th iteration $r$ follows these steps:

\begin{itemize}
    \item Fix $\boldsymbol{\Delta}^{r-1}, \boldsymbol{\rho}^{r-1}, \boldsymbol{\delta}^{r-1}$ and update $q^r$;
    \item Fix $q^r, \boldsymbol{\rho}^{r-1}, \boldsymbol{\delta}^{r-1}$ and update $\boldsymbol{\Delta}^r$;
    \item Fix $q^r, \boldsymbol{\Delta}^r, \boldsymbol{\delta}^{r-1}$ and update $\boldsymbol{\rho}^r$;
    \item Fix $q^r, \boldsymbol{\Delta}^r, \boldsymbol{\rho}^r$ and update $\boldsymbol{\delta}^r$.
\end{itemize}

Each subproblem is treated as a black-box optimization task and is solved using a GP-based BO with the PI acquisition function strategy for both continuous and discrete variables. After all variable blocks are updated, the objective value is re-evaluated to monitor progress. The iterative procedure continues until a convergence criterion is satisfied. Specifically, let $\epsilon_{\text{tol}}>0$ be the prescribed relative-tolerance threshold, and $r_{\max}\in\mathbb{N}$ denote the maximum number of BCD iterations. The iteration stops once
\[
    \frac{\lvert \mathcal{H}^{(r)} - \mathcal{H}^{(r-1)} \rvert}{\lvert \mathcal{H}^{(r-1)} \rvert} 
    < \epsilon_{\text{tol}}
    \quad\text{or}\quad
    r \ge r_{\max}.
\]

 This coordinated update framework allows efficient exploration of the mixed-variable search space and ensures compatibility with a wide range of non-convex and heterogeneous optimization problems. The procedure of the BCD framework is summarized in Algorithm~\ref{alg:bcd}.
\begin{algorithm}[t]
    \caption{BO for Subproblem}
    \label{alg:bo}
    \begin{algorithmic}
        \REQUIRE Variable block $x$ to optimize; other variables fixed; $\mathcal{H}(x)$ as black-box function
        \ENSURE Approximate optimal value $x^*$
        \STATE Initialize dataset $\Xi_1 = \{(x_1, \mathcal{H}(x_1))\}$ with a random sample
        \FOR{$m = 1$ to $M_{\rm max}$}
            \STATE Fit GP surrogate model $\hat{\mathcal{H}}(x)$ using $\Xi_M$
            \STATE Compute acquisition function $\theta(x)$ based on PI
            \STATE $x_{M+1} \gets \arg\max_x \theta(x)$
            \STATE Evaluate $\mathcal{H}(x_{M+1})$ and update $\Xi_{M+1} = \Xi_M \cup \{(x_{M+1}, \mathcal{H}(x_{M+1}))\}$
        \ENDFOR
        \STATE $x^* \gets \arg\min_{(x, \mathcal{H}(x)) \in \Xi_{M}} \mathcal{H}(x)$
    \end{algorithmic}
\end{algorithm}
\begin{algorithm}[t]
    \caption{BCD Framework}
    \label{alg:bcd}
    \begin{algorithmic}
        \REQUIRE $\epsilon_{\rm tol}, r_{\rm max}$; initial values $q_0$, $\boldsymbol{\Delta}_0$, $\boldsymbol{\rho}_0$, $\boldsymbol{\delta}_0$
        \ENSURE Optimized solution $(q^*, \boldsymbol{\Delta}^*, \boldsymbol{\rho}^*, \boldsymbol{\delta}^*)$
        \STATE Initialize $r = 0$, compute $\mathcal{H}_0$
        \WHILE{$r < r_{\rm max}$ \AND gap $\geq \epsilon_{\rm tol}$}
            \STATE $q_r^* \gets \textsc{BO}(q,\ \boldsymbol{\Delta}_{r-1}, \boldsymbol{\rho}_{r-1}, \boldsymbol{\delta}_{r-1})$
            \STATE $\boldsymbol{\Delta}_r^* \gets \textsc{BO}(\boldsymbol{\Delta},\ q_r^*, \boldsymbol{\rho}_{r-1}, \boldsymbol{\delta}_{r-1})$
            \STATE $\boldsymbol{\rho}_r^* \gets \textsc{BO}(\boldsymbol{\rho},\ q_r^*, \boldsymbol{\Delta}_r^*, \boldsymbol{\delta}_{r-1})$
            \STATE $\boldsymbol{\delta}_r^* \gets \textsc{BO}(\boldsymbol{\delta},\ q_r^*, \boldsymbol{\Delta}_r^*, \boldsymbol{\rho}_r^*)$
            \STATE Evaluate objective $\mathcal{H}_r = \mathcal{H}(q_r^*, \boldsymbol{\Delta}_r^*, \boldsymbol{\rho}_r^*, \boldsymbol{\delta}_r^*)$
            \STATE $r \gets r + 1$
        \ENDWHILE
    \end{algorithmic}
\end{algorithm}

%\begin{equation}
%\sum_{u=1}^{U}\left(k\left(f_u\right)^\sigma\frac{N_u c_0}{f_u}+p_u\frac{2 S}%{R_u\left(p_u\right)}\right)-\sum_{u=1}^Uk\left(f_u\right)^\sigma\frac{N_u c_0}{f_u}\rho_u.
%\end{equation}
%The architecture of this neural network model is characterized by multiple hidden layers and dual output layers. Each hidden layer features 64 neurons with the ReLU activation function. Notably, the fifth and eighth layers of the model incorporate dropout layers to prevent overfitting. The model's two distinct output layers are configured differently; one with three channels for building classification, and the other with five channels for floor identification. We have also integrated residual connections in the fifth and eighth layers, enhancing the model's performance by adding the output of the current hidden layer to its predecessor
\section{Experimental Results}
\textcolor{black}{We conducted experiments on the CIFAR-10 dataset \cite{krizhevsky2009learning}, which consists of 60,000 color images across 10 categories, with 50,000 samples used for training and 10,000 for testing. Each image has a resolution of $32 \times 32$ pixels. The dataset is partitioned in a non-i.i.d. and unbalanced manner across 100 devices, where each device holds a varying number of samples drawn from a limited subset of classes. We employed the ResNet-18 architecture \cite{he2016deep} to perform image classification\footnote{\textcolor{black}{The proposed approach is compatible with a wide range of ML models and datasets. Its advantages become more significant as model size and complexity grow.}}. Furthermore, we utilized a pre-trained diffusion model \cite{yang2023denoising} to augment local training data, aiming to mitigate data scarcity and address non-i.i.d. challenges in FL. Unless otherwise specified, each training round randomly selects 10 devices for participation. The default experimental settings are summarized in TABLE~\ref{ParameterSetting}.}

\textcolor{black}{We conduct a series of experiments to evaluate the effectiveness of our approach. Depending on the specific experimental objective, a subset of the following baselines is selected for comparison in each figure.}
\begin{itemize}
    \item \textcolor{black}{\textbf{Traditional FL (TFL)}: After performing local training, the devices directly upload the local gradients to the BS without any optimization techniques.}
    
    \item \textcolor{black}{\textbf{FedDPQ(proposed)}: The proposed scheme that integrates diffusion-based data augmentation, model pruning, quantization, and transmission power control.}
    
    \item \textcolor{black}{\textbf{FedDPQ-noDA}: A degraded version of FedDPQ in which the data augmentation module is disabled, while pruning, quantization, and power control are retained.}
    
    \item \textcolor{black}{\textbf{FedDPQ-noPQ}: A reduced version of FedDPQ where the pruning and quantization modules are removed, while the data augmentation and transmission power control components are preserved.}
    
    \item \textcolor{black}{\textbf{FedDPQ-noPC}: A variant of FedDPQ where the transmission power control module is disabled, while data augmentation, model pruning, and quantization are retained.}
\end{itemize}

\begin{table}[]
\caption{Parameter Settings}
\label{ParameterSetting}
\centering
{\color{black} % 整个表格变蓝色
\begin{tabular}{cccc}
\toprule
\textbf{Parameter}      & \textbf{Value}             & \textbf{Parameter}        & \textbf{Value}                         \\ \midrule
$p^{\rm max}$           & 0.1W                       & $c^{\text{tr}}_0$         & $2.7\times10^8$ cycles/sample         \\
$p^{\rm min}$           & 0.01W                      & $N_0$                     & $-174$ dBm/Hz                         \\
$B_u^{\rm UL}$          & 1MHz                       & $c^{\text{gen}}_0$        & $2.2\times10^8$ cycles/sample         \\
$\eta$                  & 0.001                      & $\varrho$                 & $1.25\times10^{-26}$                  \\
$\gamma$                & 3                          & $I_u$                     & $\mathrm{U}[10^{-8}, 2\times10^{-8}]$ \\
$f_u$                   & $\mathrm{U}[20, 50]$ MHz   & $d_u$                     & $\mathrm{U}[100, 300]$ m              \\
$\rho^{\rm max}$        & 0.3                        & $\rho^{\rm min}$          & 0.1                                   \\
$\Delta^{\rm min}$      & 0.1                        & $\Delta^{\rm max}$        & 0.4                                   \\
$\delta^{\rm min}$      & 6                          & $\delta^{\rm max}$        & 16                                    \\ \bottomrule
\end{tabular}
}
\end{table}

\begin{figure*}[t]
\centering
% Energy consumption
\subfigure[Energy consumption ($\pi = 0.6$).]{
\begin{minipage}{0.30\linewidth}
\centering
\includegraphics[width=\linewidth]{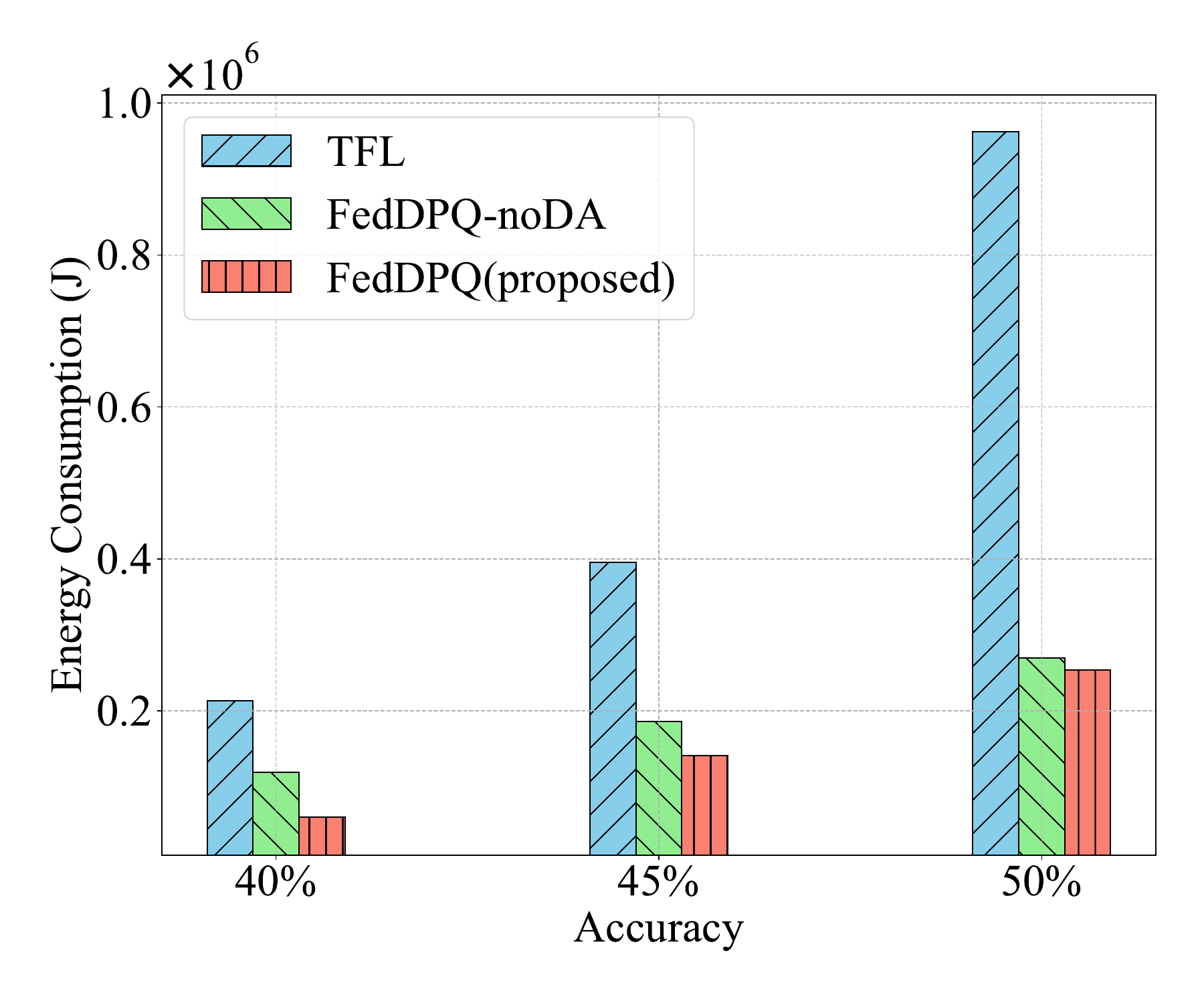}
\label{fig:dir06_energy}
\end{minipage}}
\hfill
\subfigure[Energy consumption ($\pi  = 1.2$).]{
\begin{minipage}{0.30\linewidth}
\centering
\includegraphics[width=\linewidth]{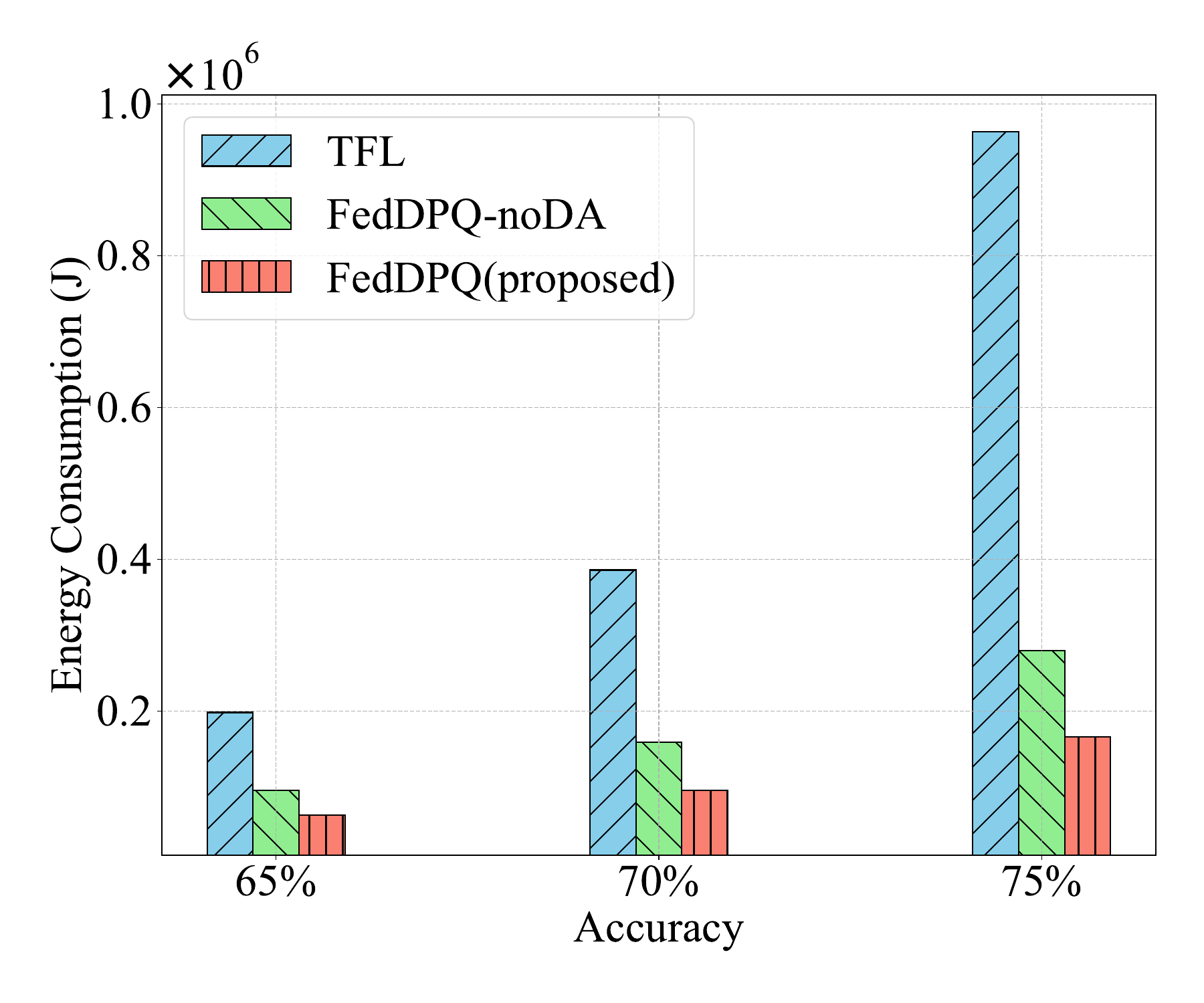}
\label{fig:dir12_energy}
\end{minipage}}
\hfill
\subfigure[Energy consumption ($\pi = 1.5$).]{
\begin{minipage}{0.30\linewidth}
\centering
\includegraphics[width=\linewidth]{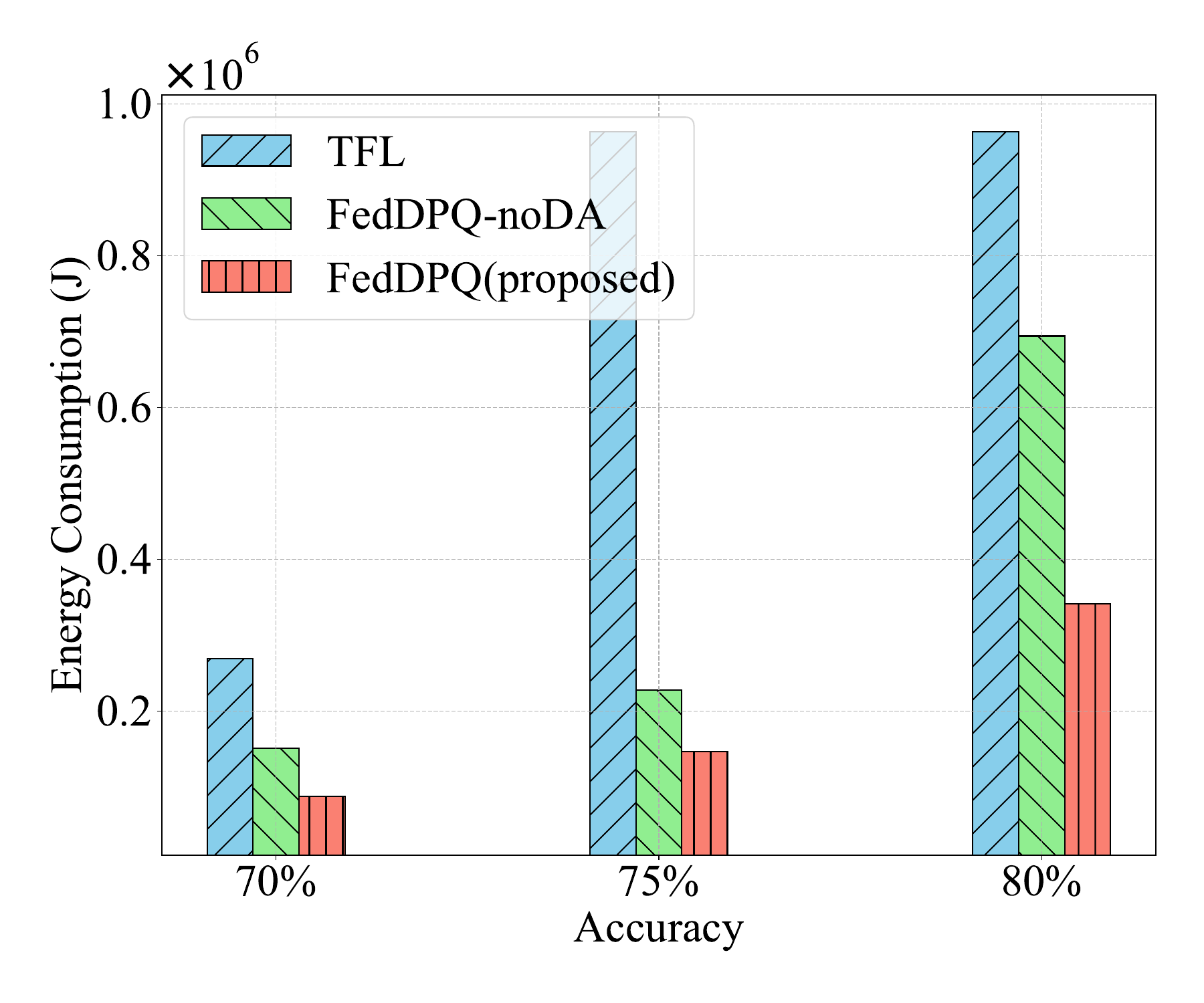}
\label{fig:dir15_energy}
\end{minipage}}

% Test accuracy
\subfigure[Test accuracy ($\pi = 0.6$).]{
\begin{minipage}{0.30\linewidth}
\centering
\includegraphics[width=\linewidth]{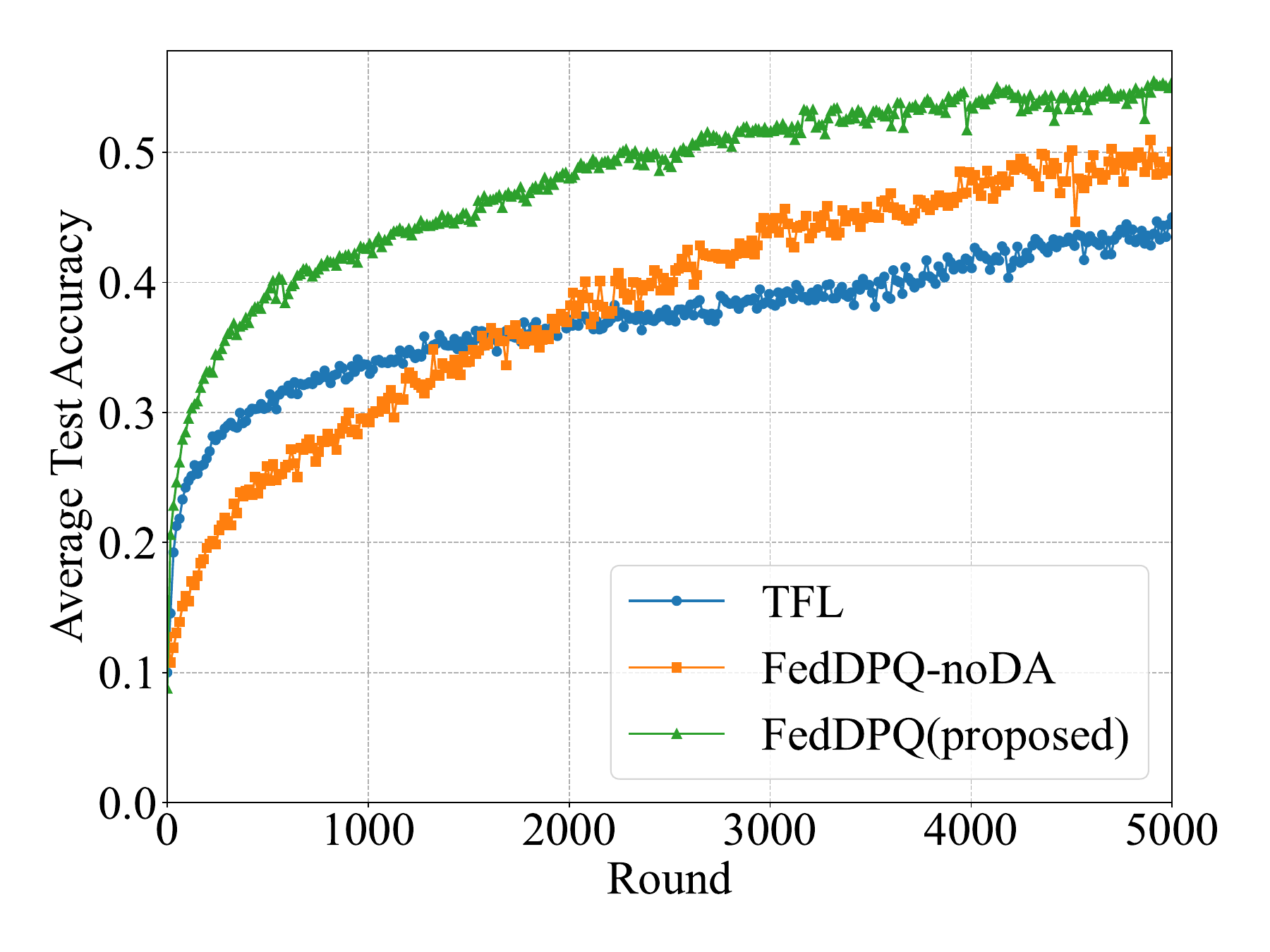}
\label{fig:dir06_accuracy}
\end{minipage}}
\hfill
\subfigure[Test accuracy ($\pi = 1.2$).]{
\begin{minipage}{0.30\linewidth}
\centering
\includegraphics[width=\linewidth]{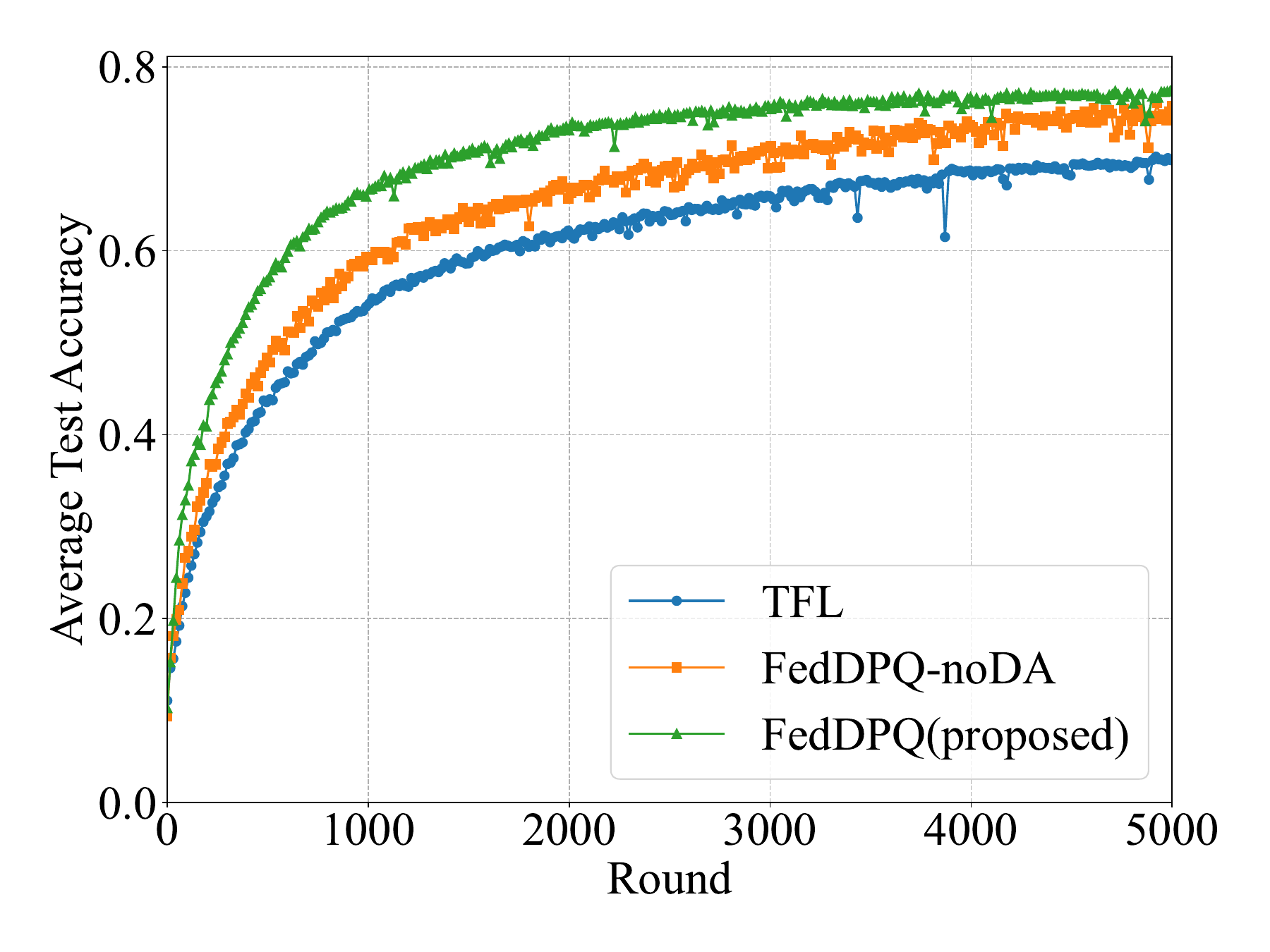}
\label{fig:dir12_accuracy}
\end{minipage}}
\hfill
\subfigure[Test accuracy ($\pi = 1.5$).]{
\begin{minipage}{0.30\linewidth}
\centering
\includegraphics[width=\linewidth]{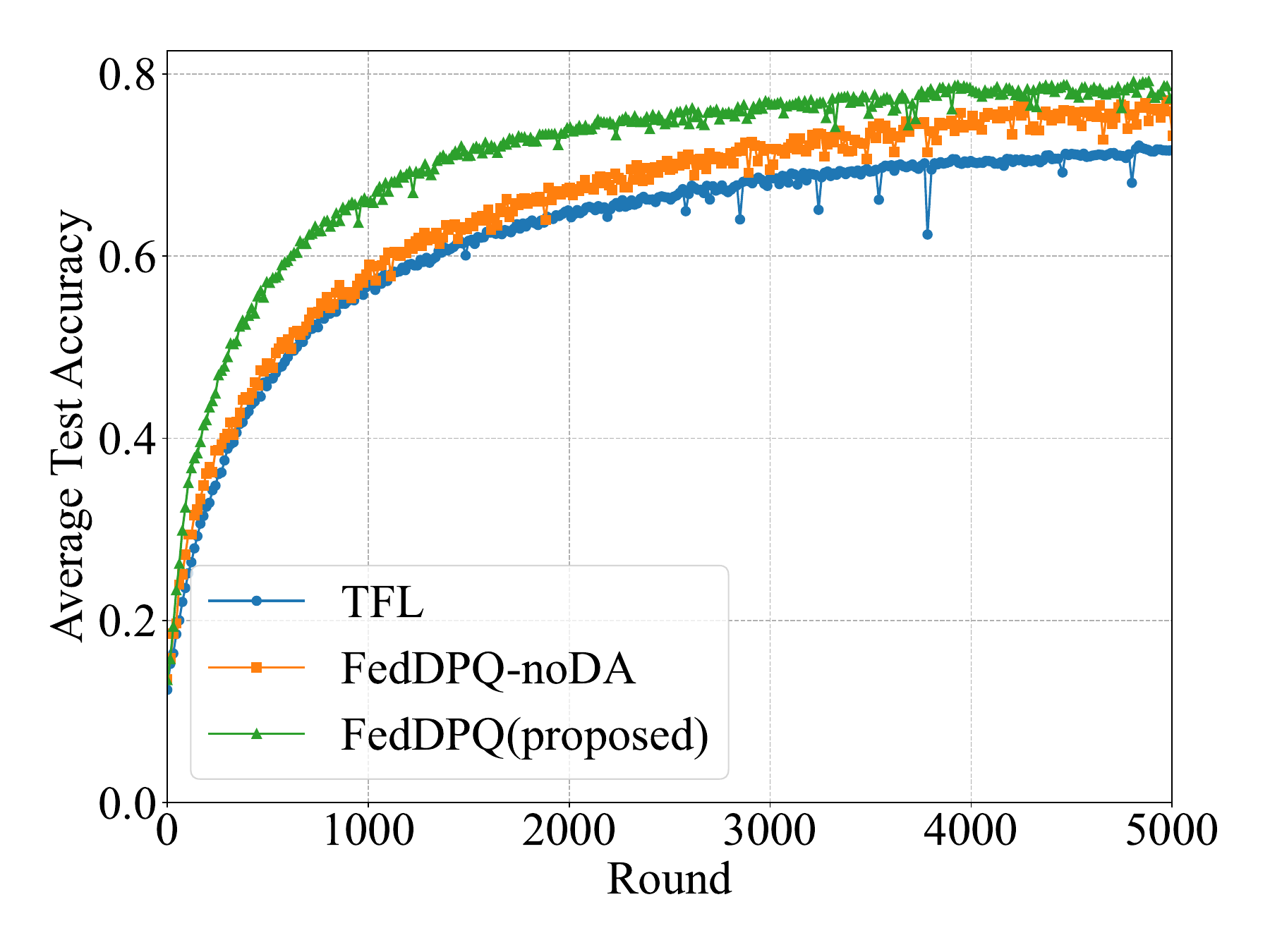}
\label{fig:dir15_accuracy}
\end{minipage}}

% Training loss
\subfigure[Training loss ($\pi = 0.6$).]{
\begin{minipage}{0.30\linewidth}
\centering
\includegraphics[width=\linewidth]{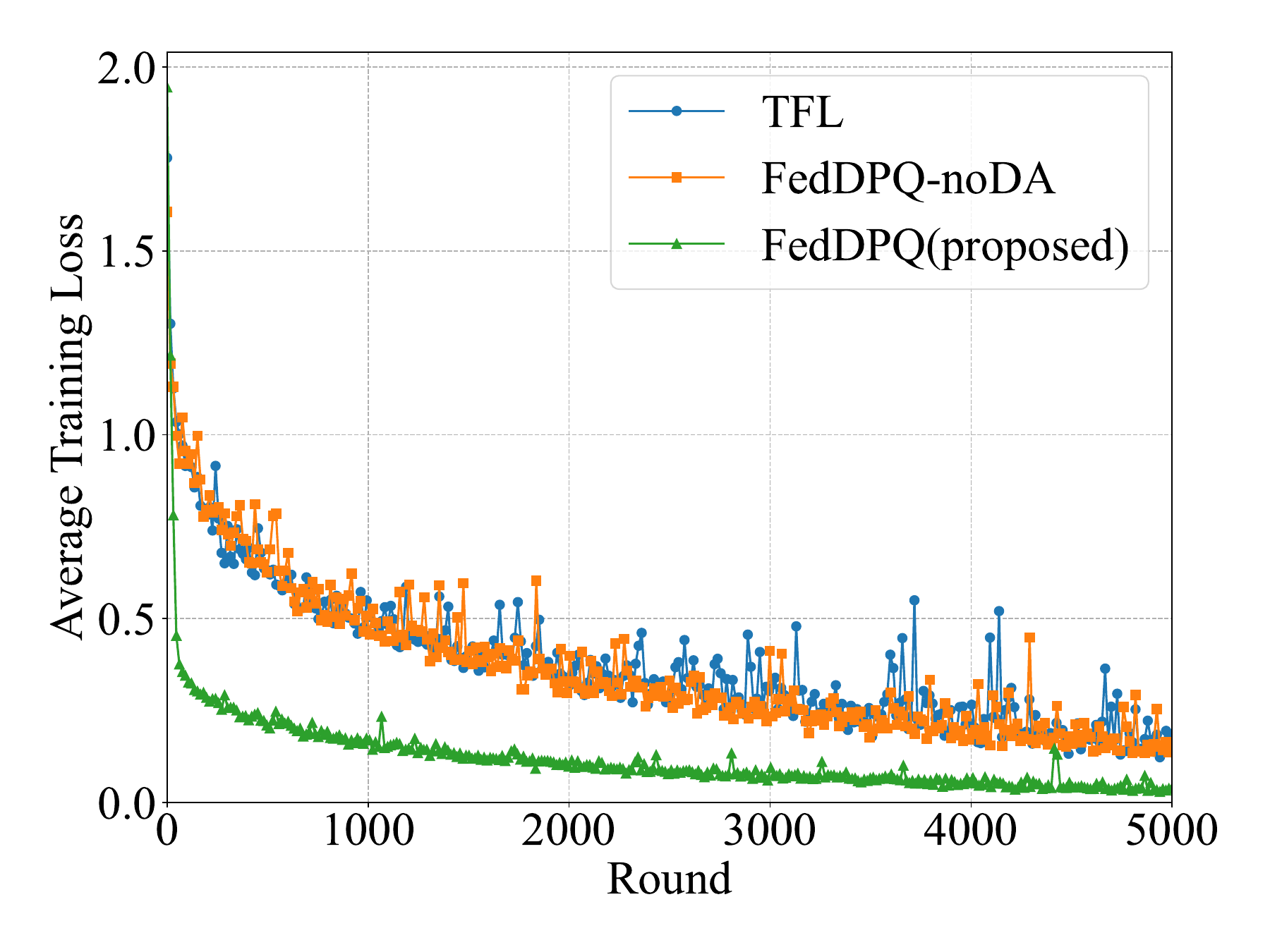}
\label{fig:dir06_loss}
\end{minipage}}
\hfill
\subfigure[Training loss ($\pi= 1.2$).]{
\begin{minipage}{0.30\linewidth}
\centering
\includegraphics[width=\linewidth]{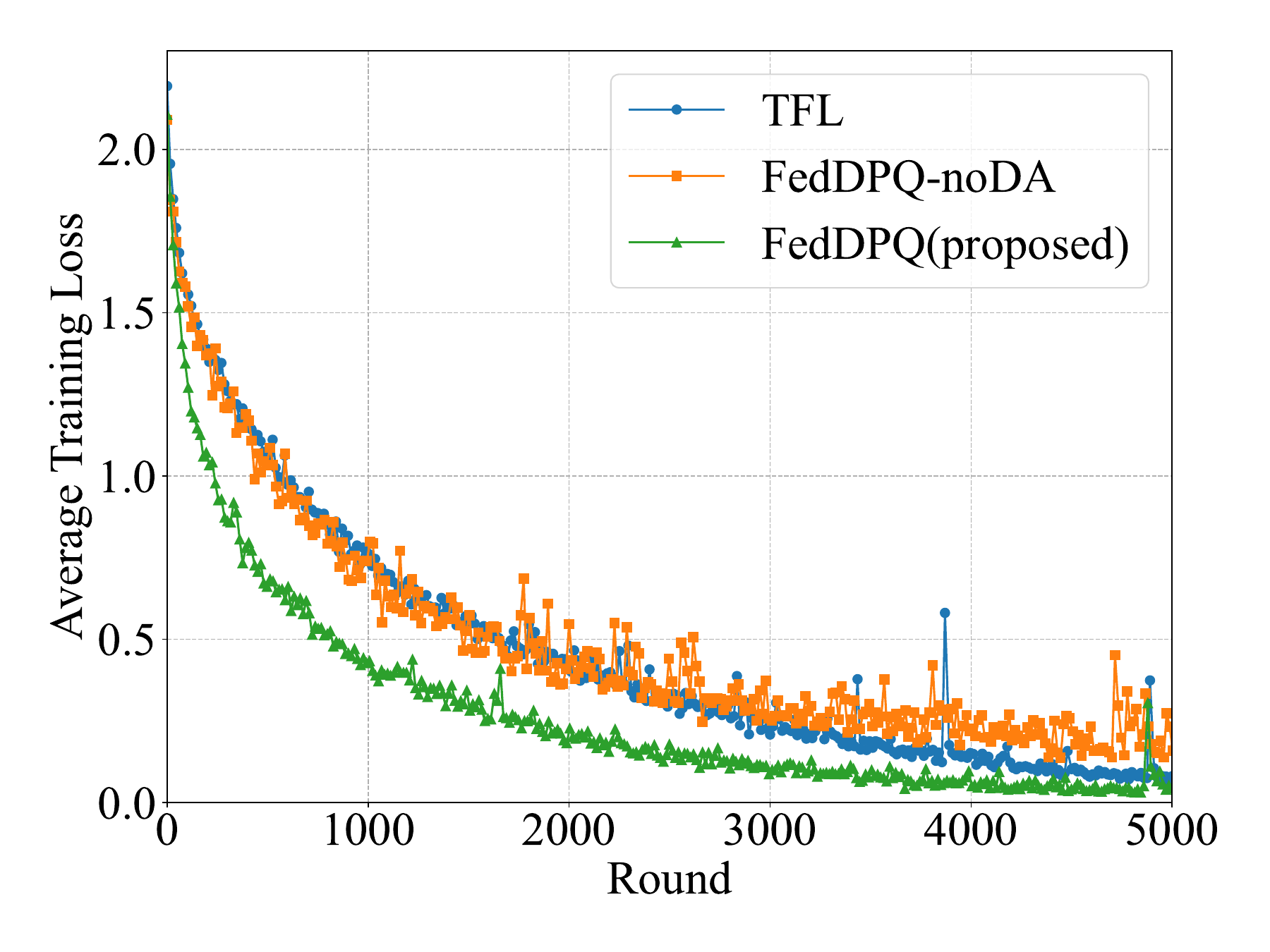}
\label{fig:dir12_loss}
\end{minipage}}
\hfill
\subfigure[Training loss ($\pi = 1.5$).]{
\begin{minipage}{0.30\linewidth}
\centering
\includegraphics[width=\linewidth]{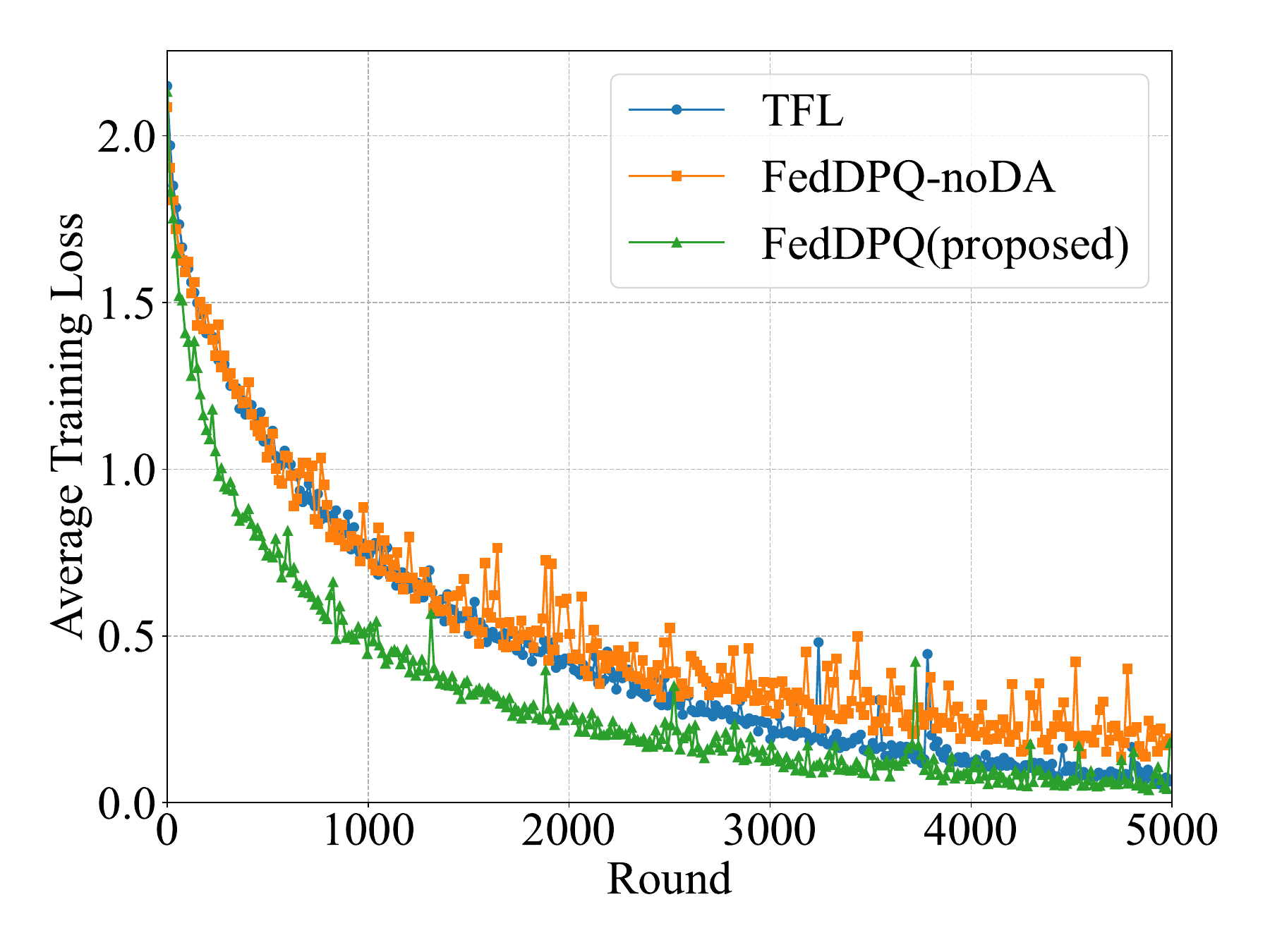}
\label{fig:dir15_loss}
\end{minipage}}

\caption{Performance comparison of FedDPQ and baselines under different levels of data heterogeneity controlled by Dirichlet coefficient $\pi \in \{0.6, 1.2, 1.5\}$. }
\label{fig:dirichlet_comparison}
\end{figure*}

\textcolor{black}{To evaluate the performance of the proposed FedDPQ framework under varying degrees of data heterogeneity, we conduct three groups of experiments by adjusting the Dirichlet distribution parameter $\pi \in \{0.6, 1.2, 1.5\}$, where a smaller $\pi$ indicates a more skewed and non-i.i.d. local data distribution. As shown in Fig.~\ref{fig:dirichlet_comparison}, increasing $\pi$ results in a more balanced data distribution across devices, which improves the convergence speed and reduces energy consumption for all schemes. This highlights the substantial impact of data heterogeneity on federated training: the more skewed the data, the more communication rounds are required to reach the target accuracy, leading to increased energy overhead. Among all settings, FedDPQ consistently outperforms all baselines in terms of energy efficiency, test accuracy, and convergence speed. In contrast, the performance of FedDPQ-noDA and TFL, which do not incorporate data augmentation, degrades significantly under severe heterogeneity (e.g., $\pi=0.6$), exhibiting slower convergence and higher energy consumption \footnote{\textcolor{black}{If a baseline fails to reach the target accuracy within the predefined communication budget, it continues training until the maximum number of rounds (set to 5000 in our experiments), resulting in saturated energy and delay values, i.e., total energy = per-round energy/delay $*$ 5000.}}. The superiority of FedDPQ is largely attributed to the integration of diffusion-based data augmentation. This component plays a critical role in two aspects: (i) it increases the volume of local training data, thereby mitigating underfitting issues in small-sample regimes, and (ii) it introduces synthetic samples covering more diverse classes, effectively alleviating the statistical heterogeneity across devices. These enhancements improve the generalization capability of the global model, enabling FedDPQ to achieve consistently better performance across different levels of data heterogeneity.}

\begin{figure*}[t]
\centering
% 能耗对比
\subfigure[Energy consumption with $U=15$.]{
\begin{minipage}{0.30\linewidth}
\centering
\includegraphics[width=\linewidth]{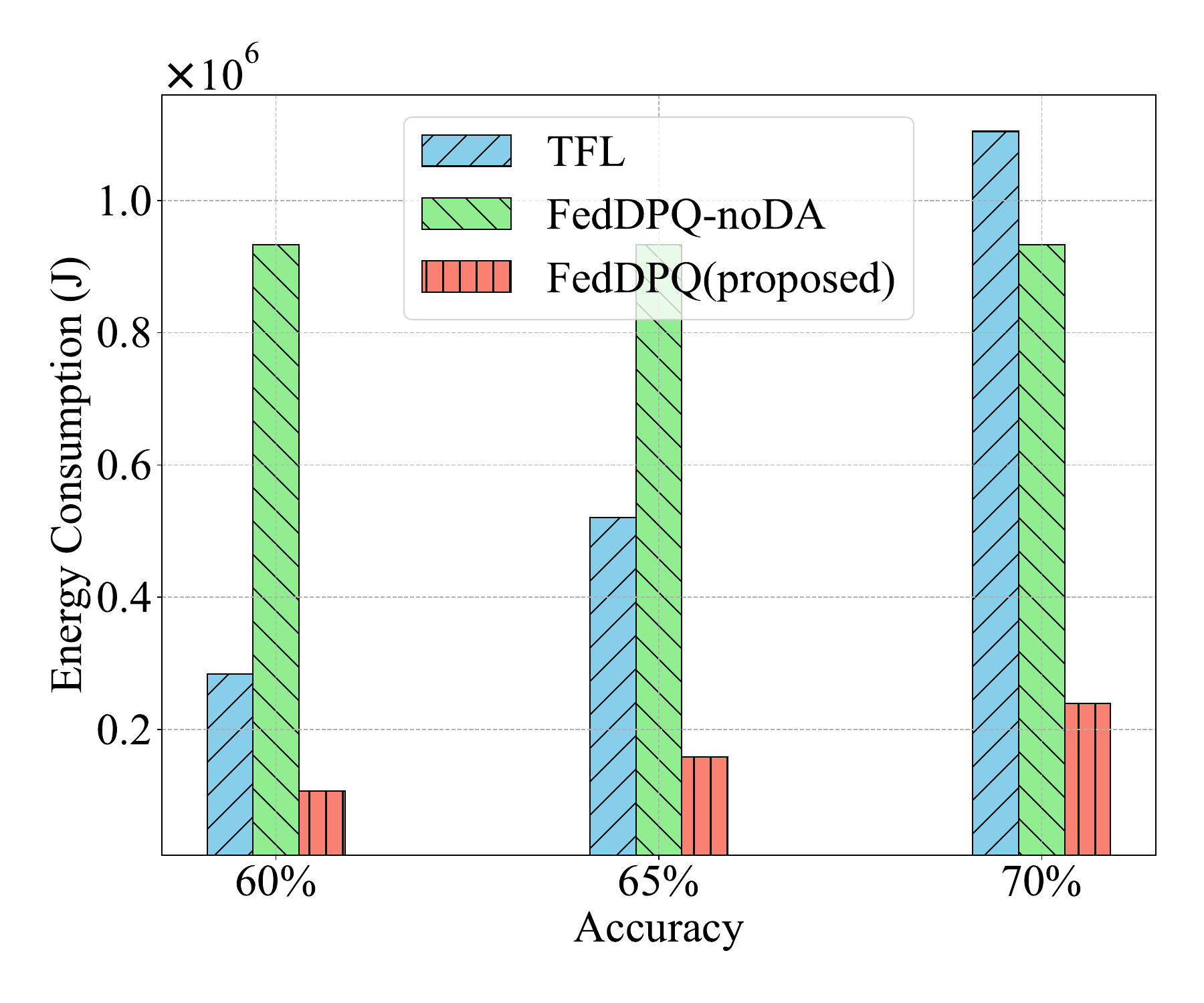}
\label{fig:u15_energy}
\end{minipage}}
\hfill
\subfigure[Energy consumption with $U=20$.]{
\begin{minipage}{0.30\linewidth}
\centering
\includegraphics[width=\linewidth]{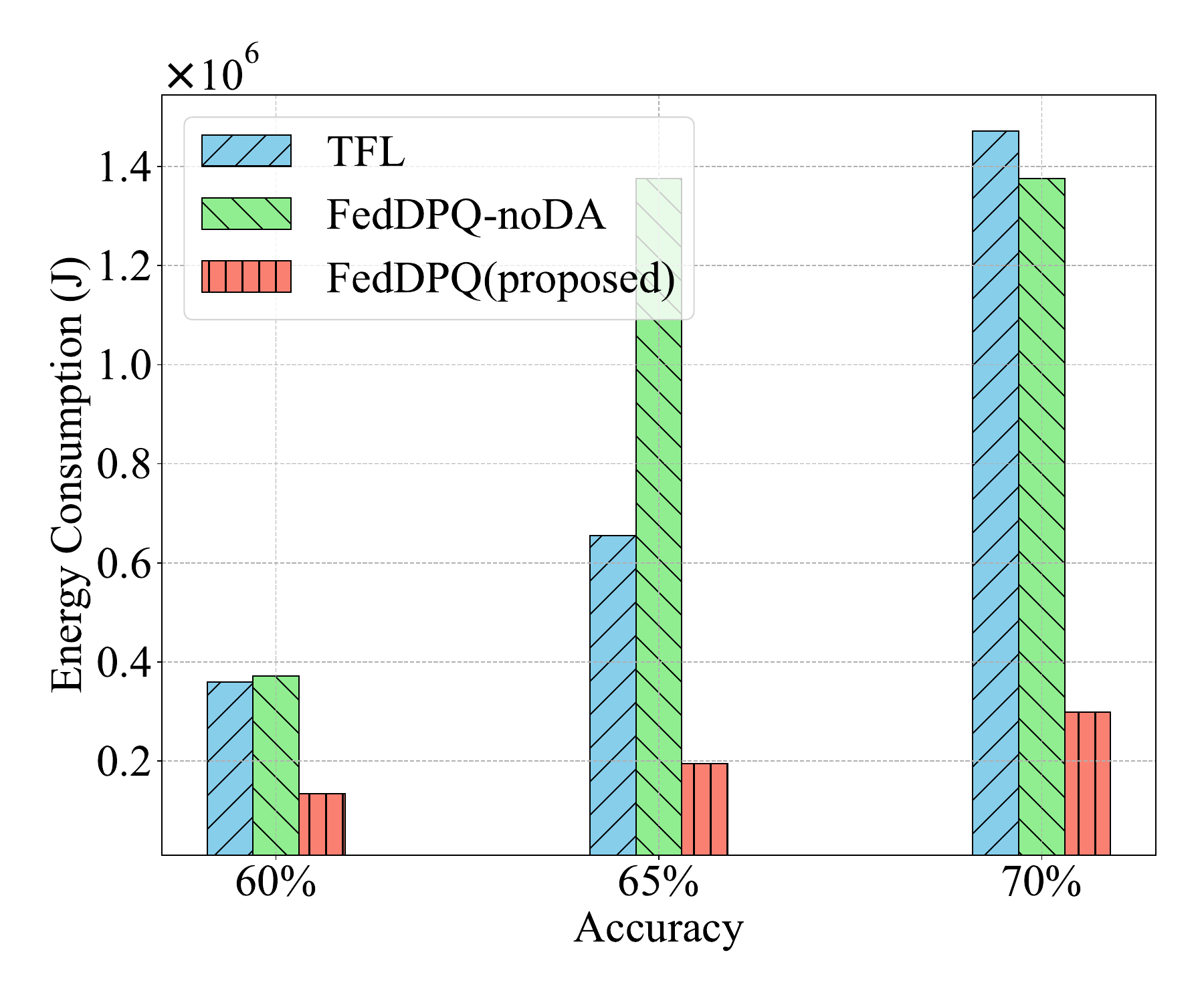}
\label{fig:u20_energy}
\end{minipage}}
\hfill
\subfigure[Energy consumption with $U=30$.]{
\begin{minipage}{0.30\linewidth}
\centering
\includegraphics[width=\linewidth]{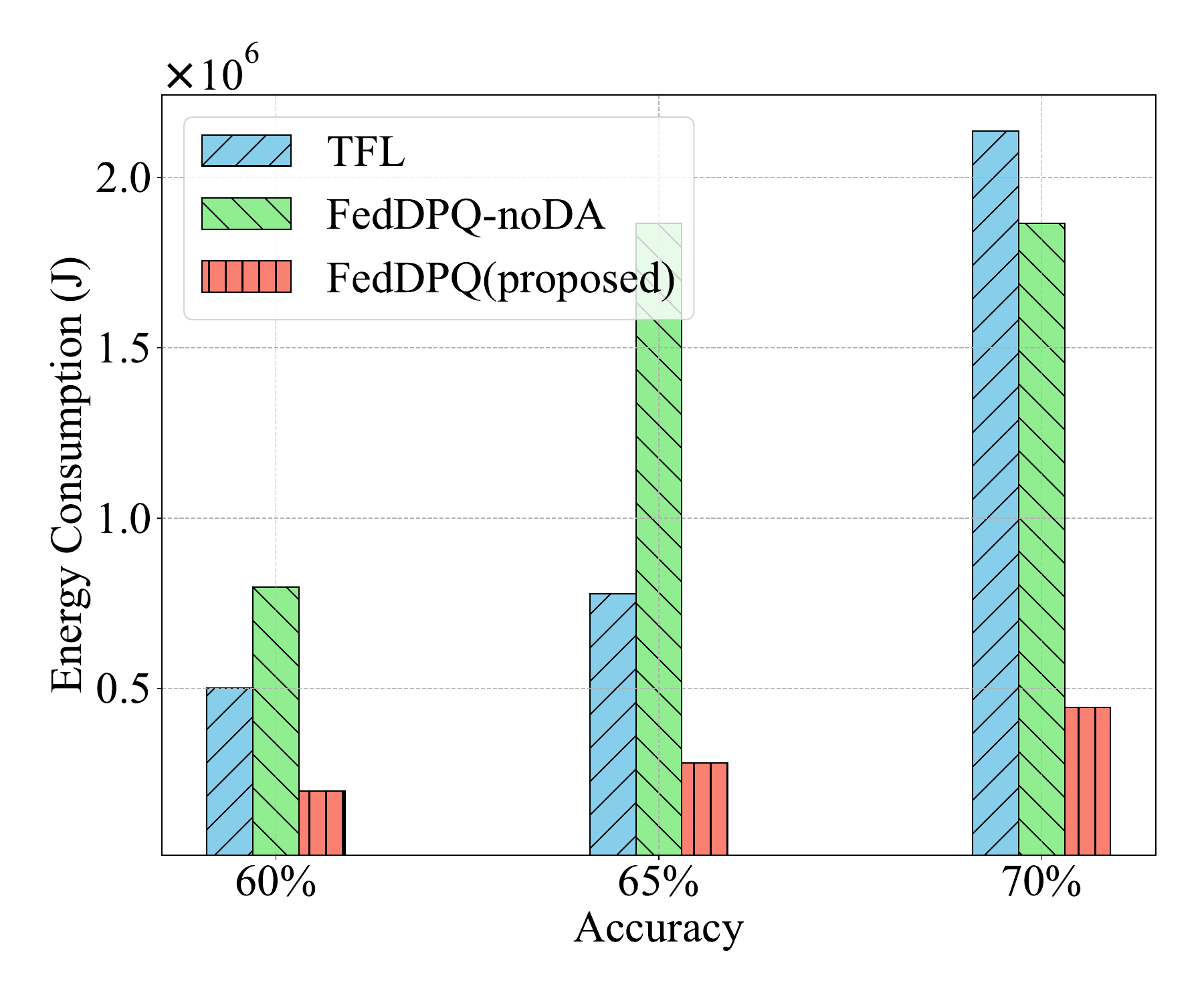}
\label{fig:u30_energy}
\end{minipage}}

% 精度对比
\subfigure[Model accuracy with $U=15$.]{
\begin{minipage}{0.30\linewidth}
\centering
\includegraphics[width=\linewidth]{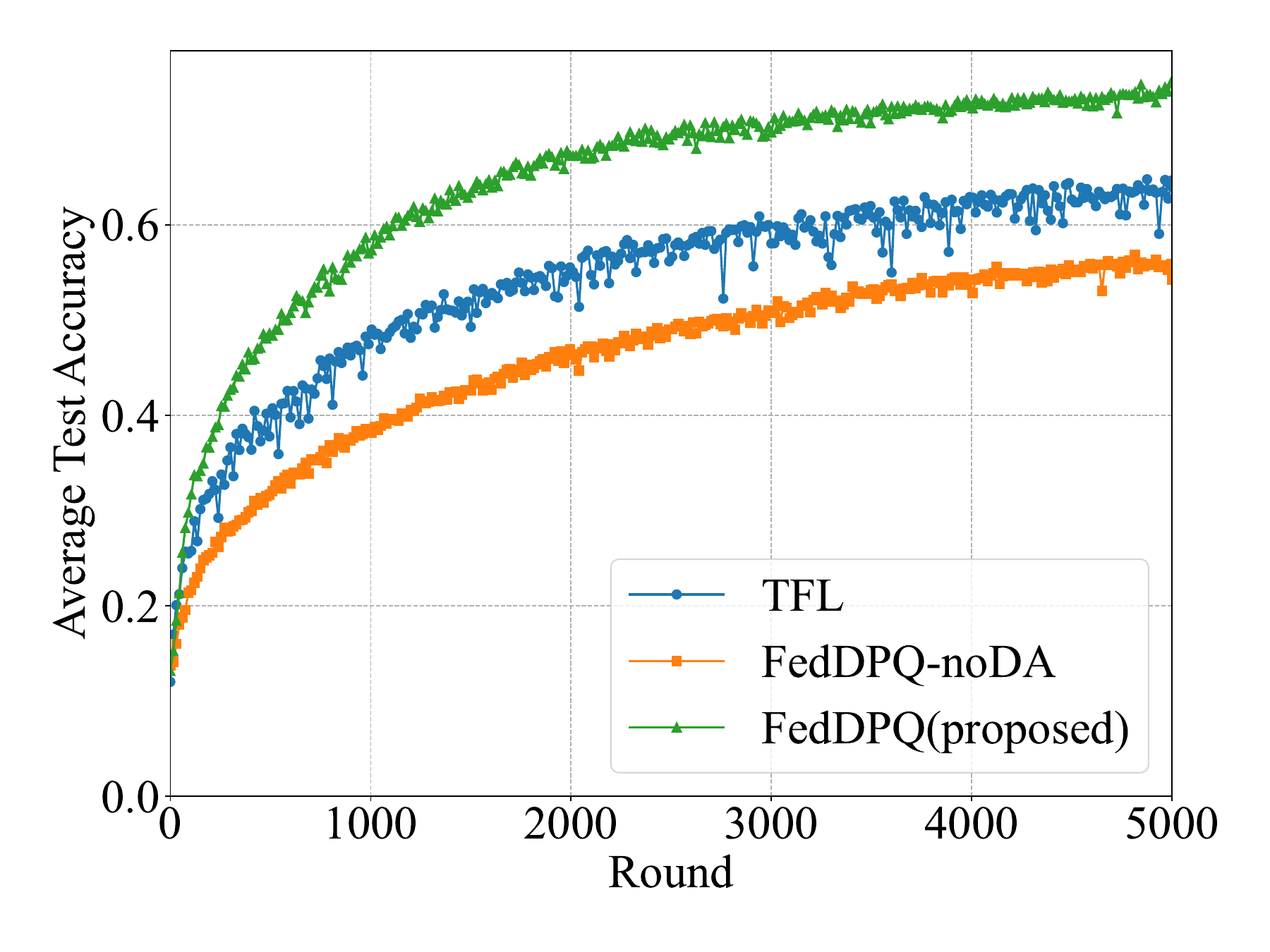}
\label{fig:u15_accuracy}
\end{minipage}}
\hfill
\subfigure[Model accuracy with $U=20$.]{
\begin{minipage}{0.30\linewidth}
\centering
\includegraphics[width=\linewidth]{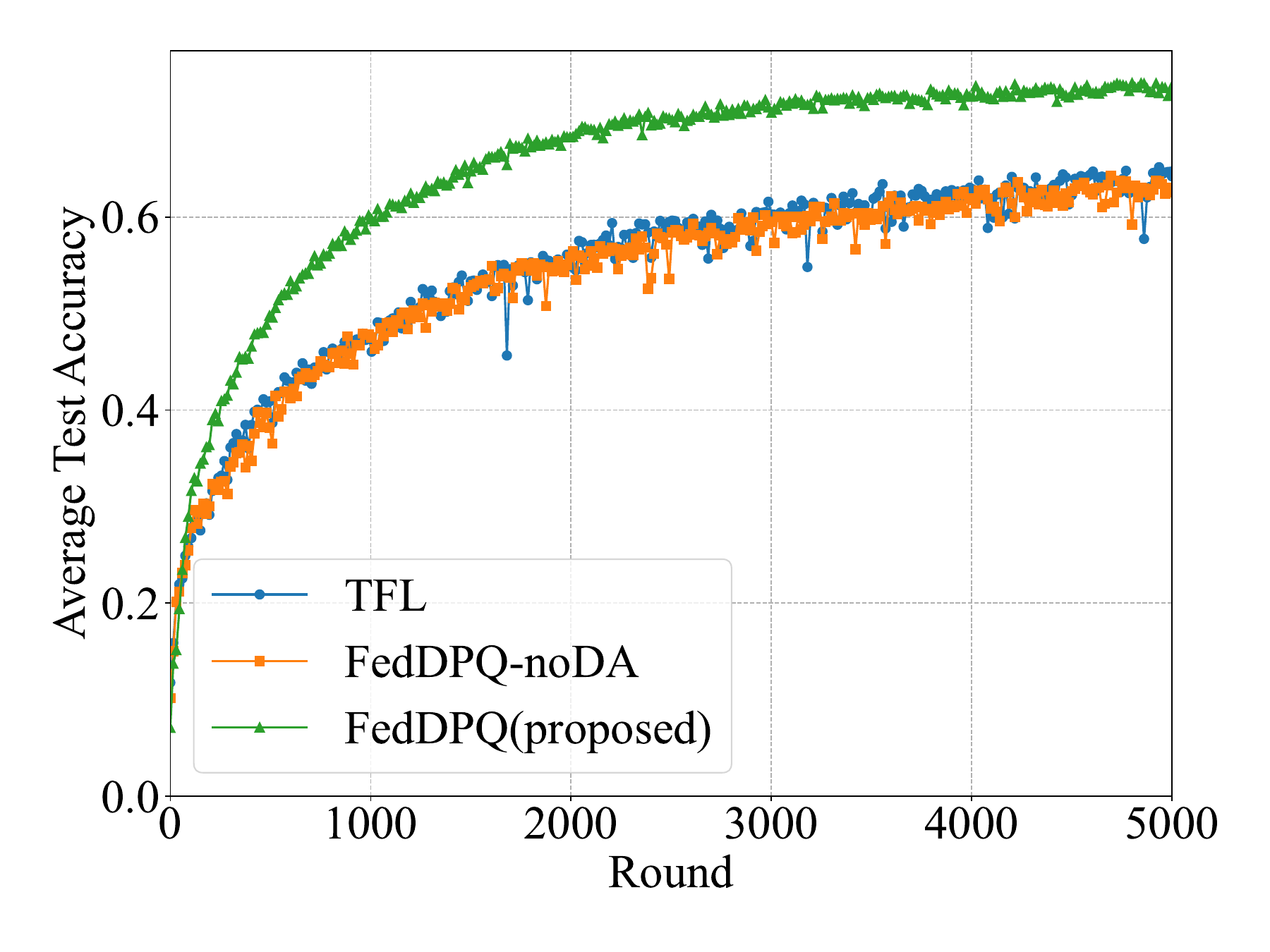}
\label{fig:u20_accuracy}
\end{minipage}}
\hfill
\subfigure[Model accuracy with $U=30$.]{
\begin{minipage}{0.30\linewidth}
\centering
\includegraphics[width=\linewidth]{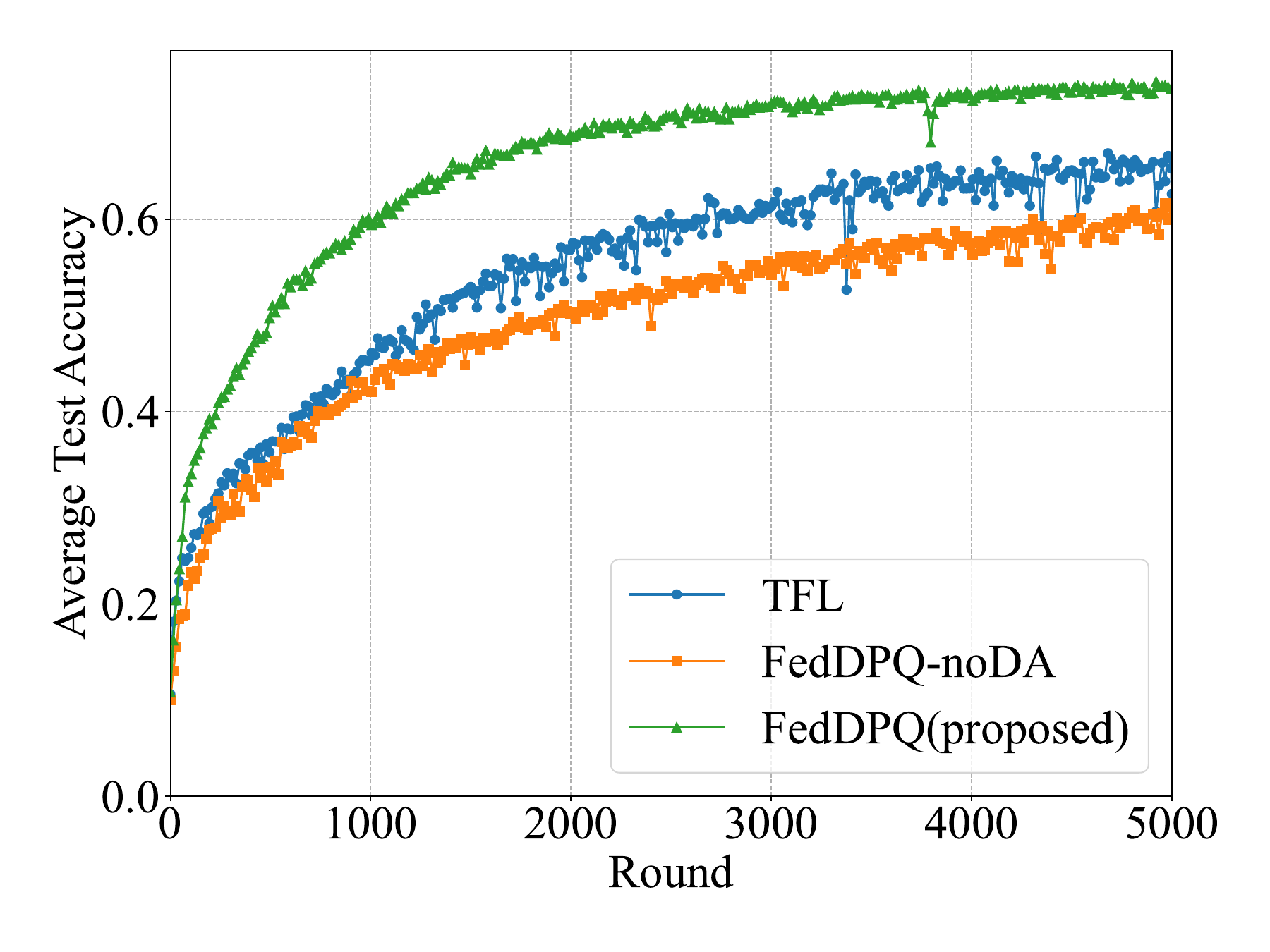}
\label{fig:u30_accuracy}
\end{minipage}}

% 损失对比
\subfigure[Loss curve with $U=15$.]{
\begin{minipage}{0.30\linewidth}
\centering
\includegraphics[width=\linewidth]{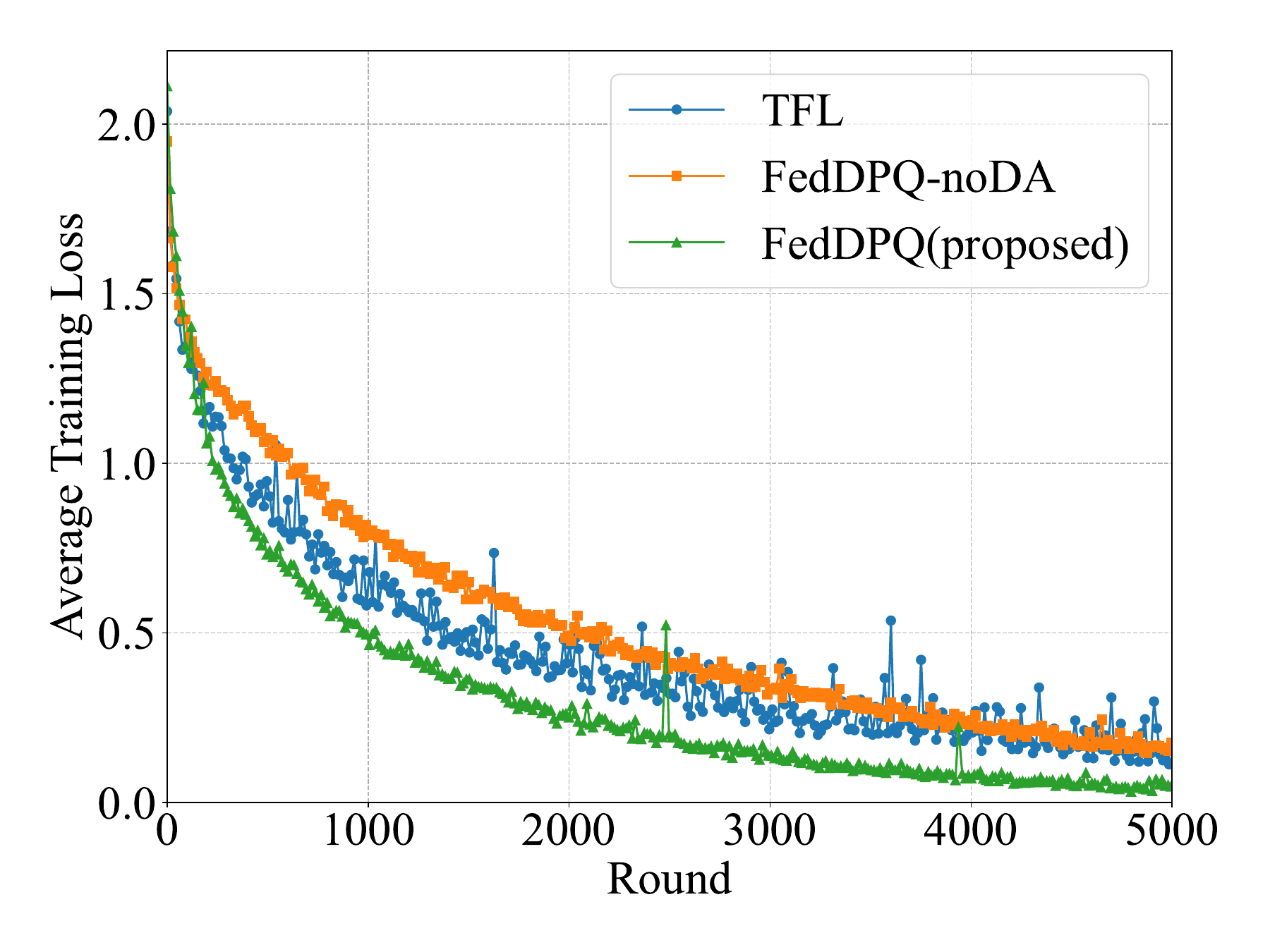}
\label{fig:u15_loss}
\end{minipage}}
\hfill
\subfigure[Loss curve with $U=20$.]{
\begin{minipage}{0.30\linewidth}
\centering
\includegraphics[width=\linewidth]{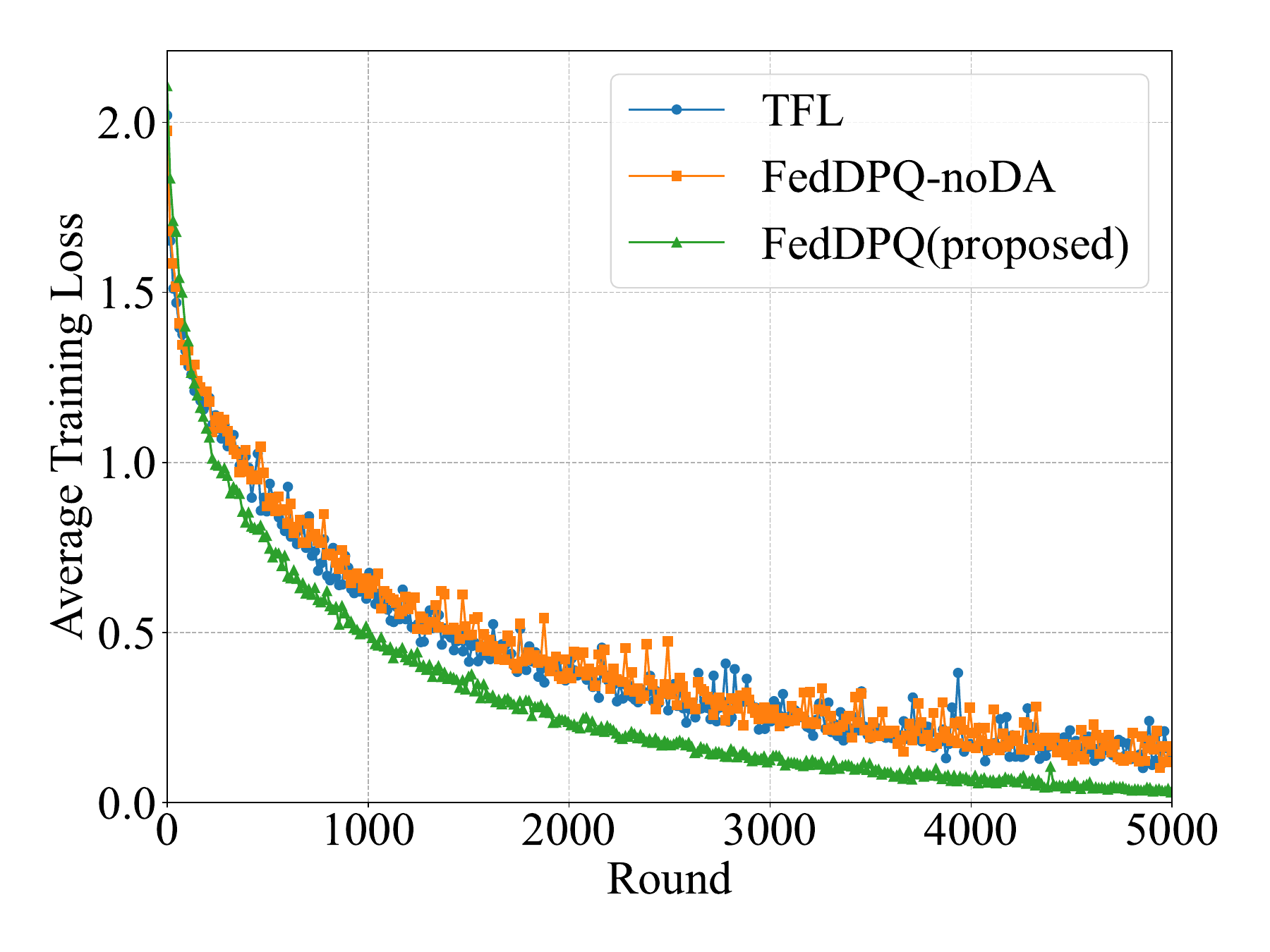}
\label{fig:u20_loss}
\end{minipage}}
\hfill
\subfigure[Loss curve with $U=30$.]{
\begin{minipage}{0.30\linewidth}
\centering
\includegraphics[width=\linewidth]{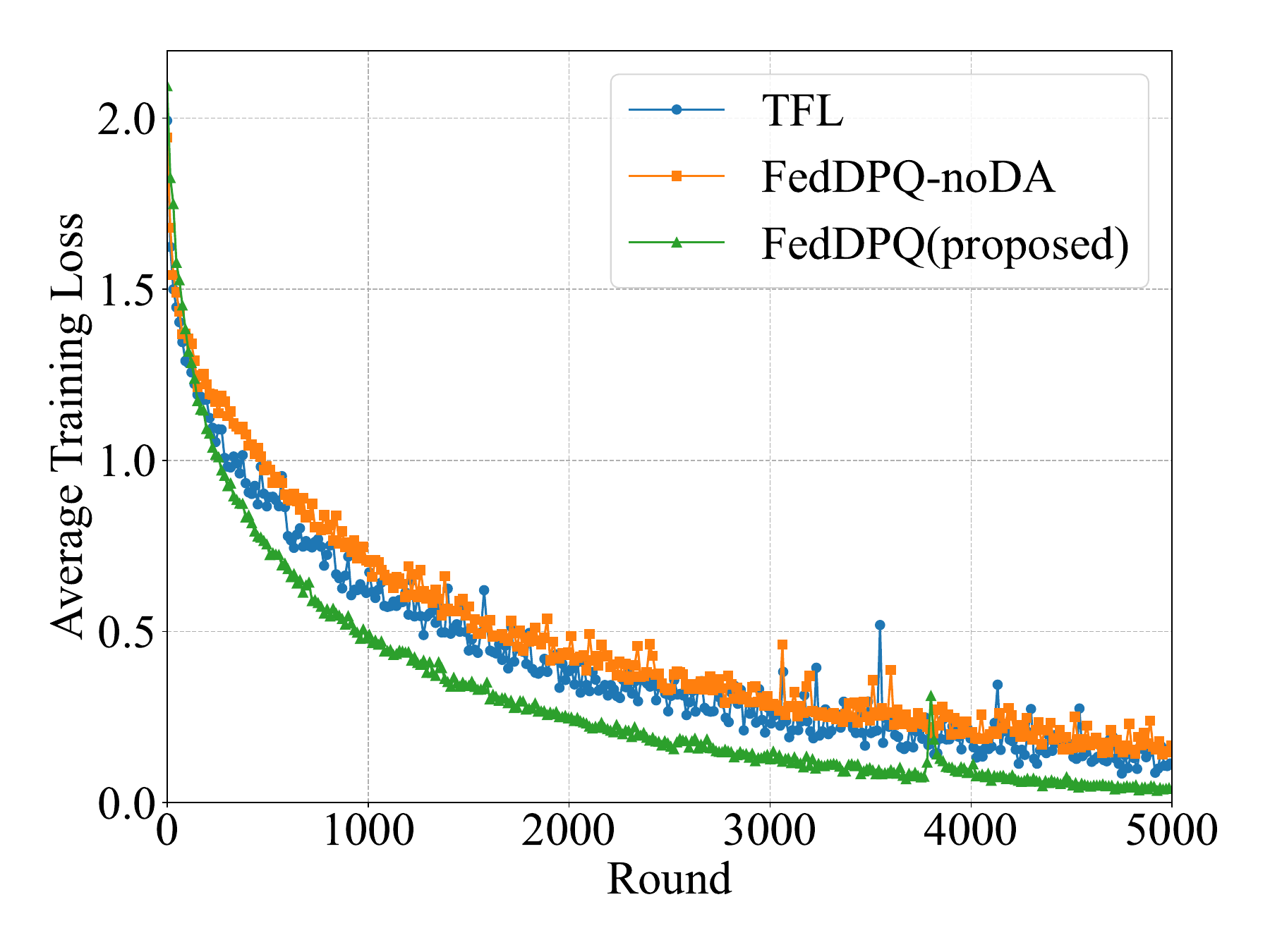}
\label{fig:u30_loss}
\end{minipage}}

\caption{Performance comparison of the proposed scheme under varying numbers of participating devices per round. The number of local participants $U$ is set to 15, 20, and 30.}
\label{fig:device_number_comparison}
\end{figure*}

\textcolor{black}{
In Fig.~\ref{fig:device_number_comparison}, we evaluate the performance of various FL schemes by examining their total energy consumption, test accuracy, and training loss under different numbers of participating devices per training round, specifically $U \in \{15, 20, 30\}$. As shown in Figs.~\ref{fig:u15_energy}–\ref{fig:u30_energy}, increasing the number of participating devices results in higher overall energy consumption, primarily due to the concurrent execution of multiple devices, which leads to a greater aggregate energy cost per round. Although involving more devices provides access to larger aggregated datasets, which may facilitate faster convergence, Figs.~\ref{fig:u15_accuracy}–\ref{fig:u30_accuracy} and \ref{fig:u15_loss}–\ref{fig:u30_loss} reveal that the reduction in the total number of communication rounds is relatively marginal. Consequently, the increased energy expenditure does not yield a proportional improvement in convergence efficiency. Despite the rising energy cost with more participants, our proposed FedDPQ consistently outperforms baseline schemes across all evaluation metrics. Specifically, FedDPQ achieves higher accuracy and faster convergence while maintaining lower total energy consumption. These results demonstrate the scalability and effectiveness of FedDPQ when adapting to varying numbers of participating devices.}

\begin{figure}[hpt]
\centering
\subfigure[Energy consumption.]{
\begin{minipage}{0.7\linewidth}
\centering
\includegraphics[width=\linewidth]{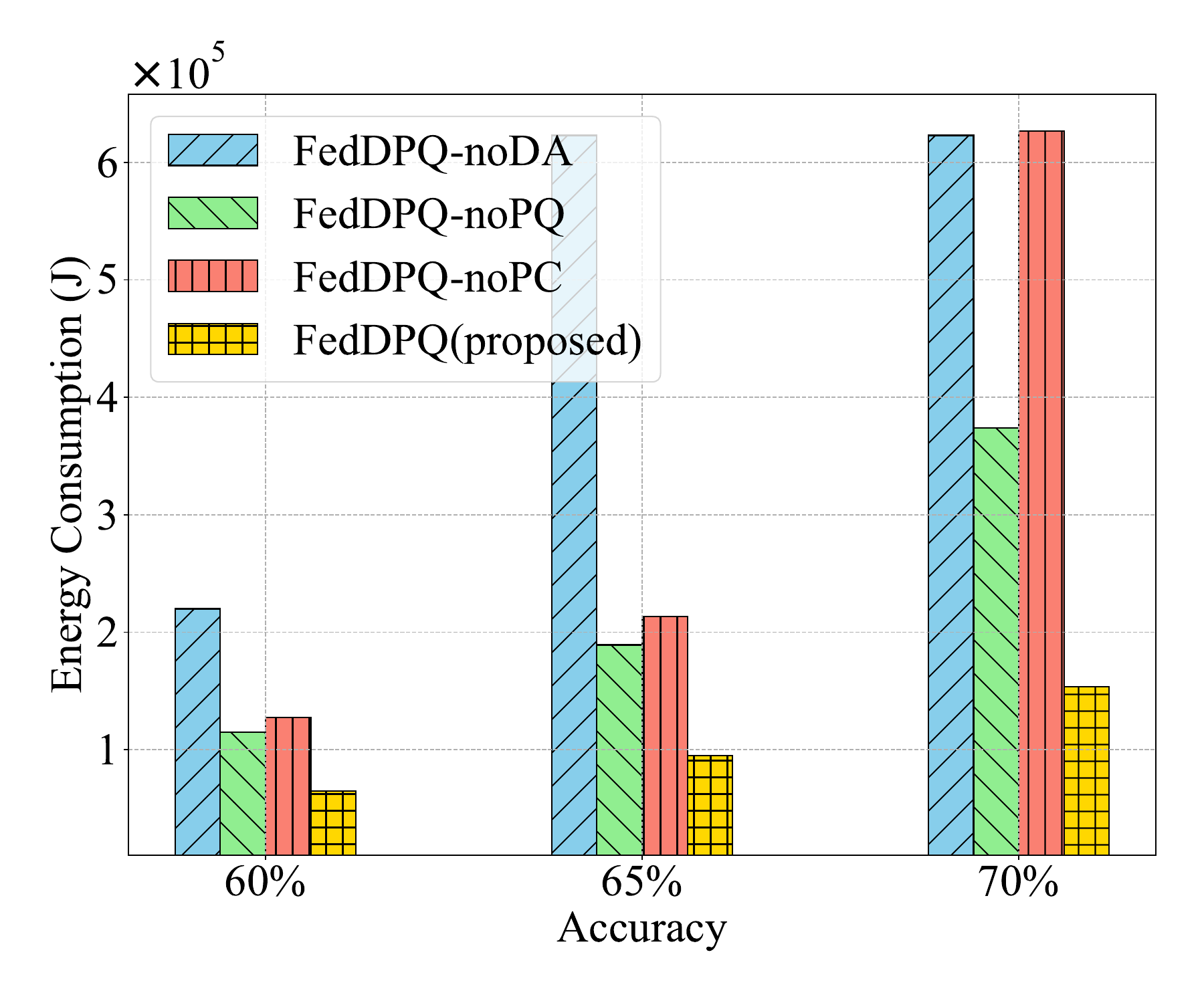}
\label{fig:xiaorong_energy}
\end{minipage}}

\subfigure[Model accuracy.]{
\begin{minipage}{0.7\linewidth}
\centering
\includegraphics[width=\linewidth]{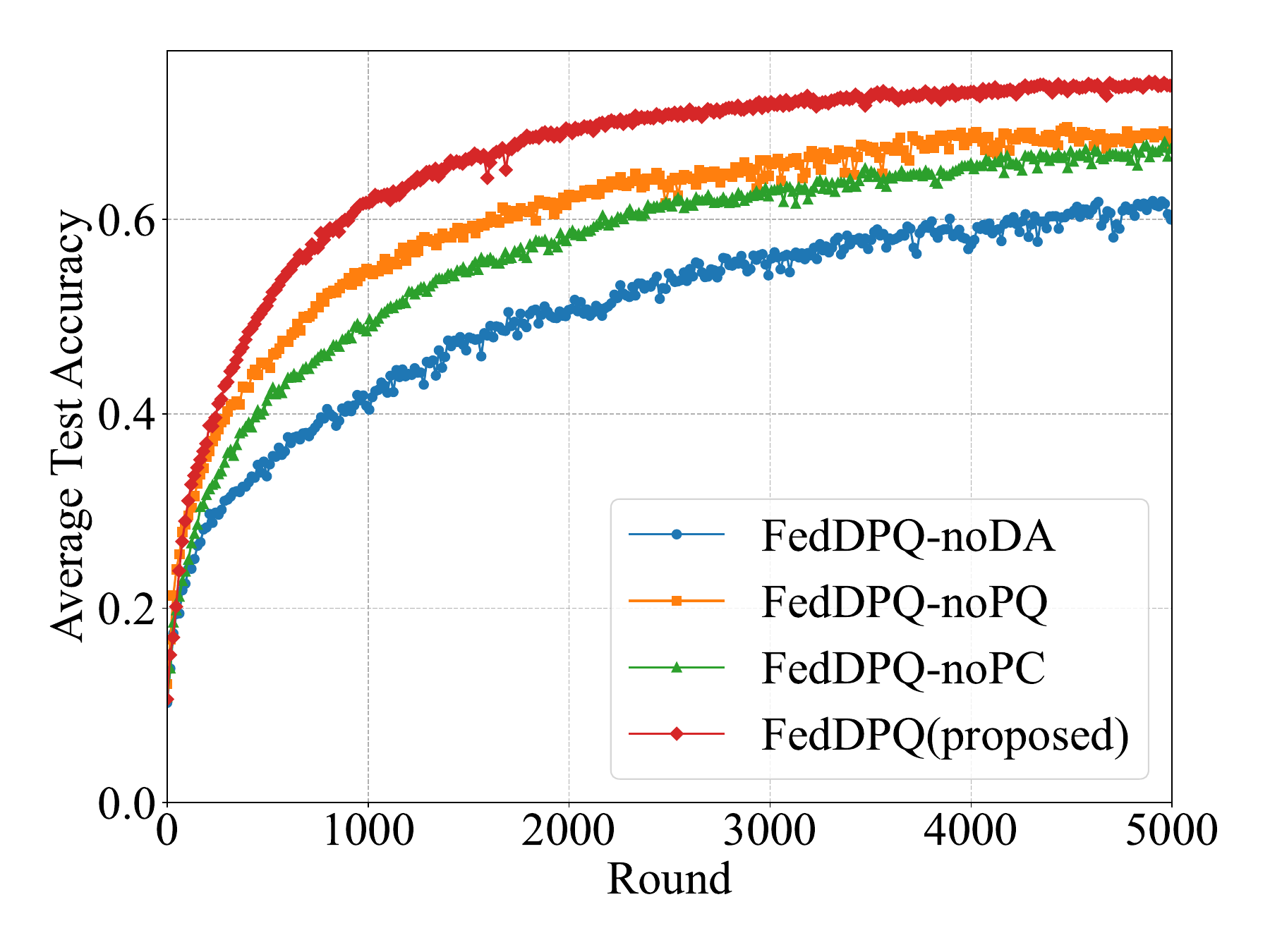}
\label{fig:xiaorong_accuracy}
\end{minipage}}

\subfigure[Training loss.]{
\begin{minipage}{0.7\linewidth}
\centering
\includegraphics[width=\linewidth]{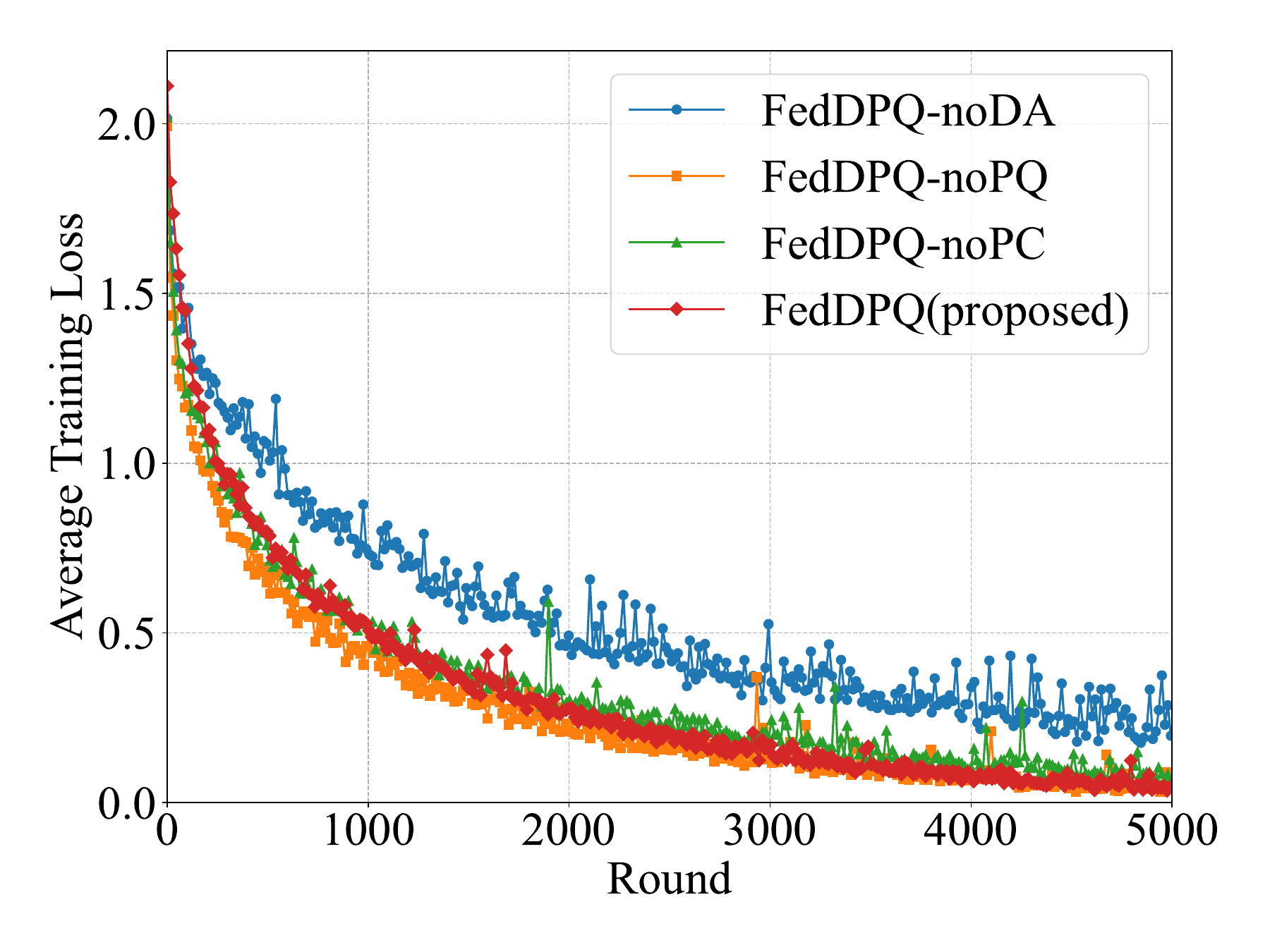}
\label{fig:xiaorong_loss}
\end{minipage}}

\subfigure[Training delay.]{
\begin{minipage}{0.7\linewidth}
\centering
\includegraphics[width=\linewidth]{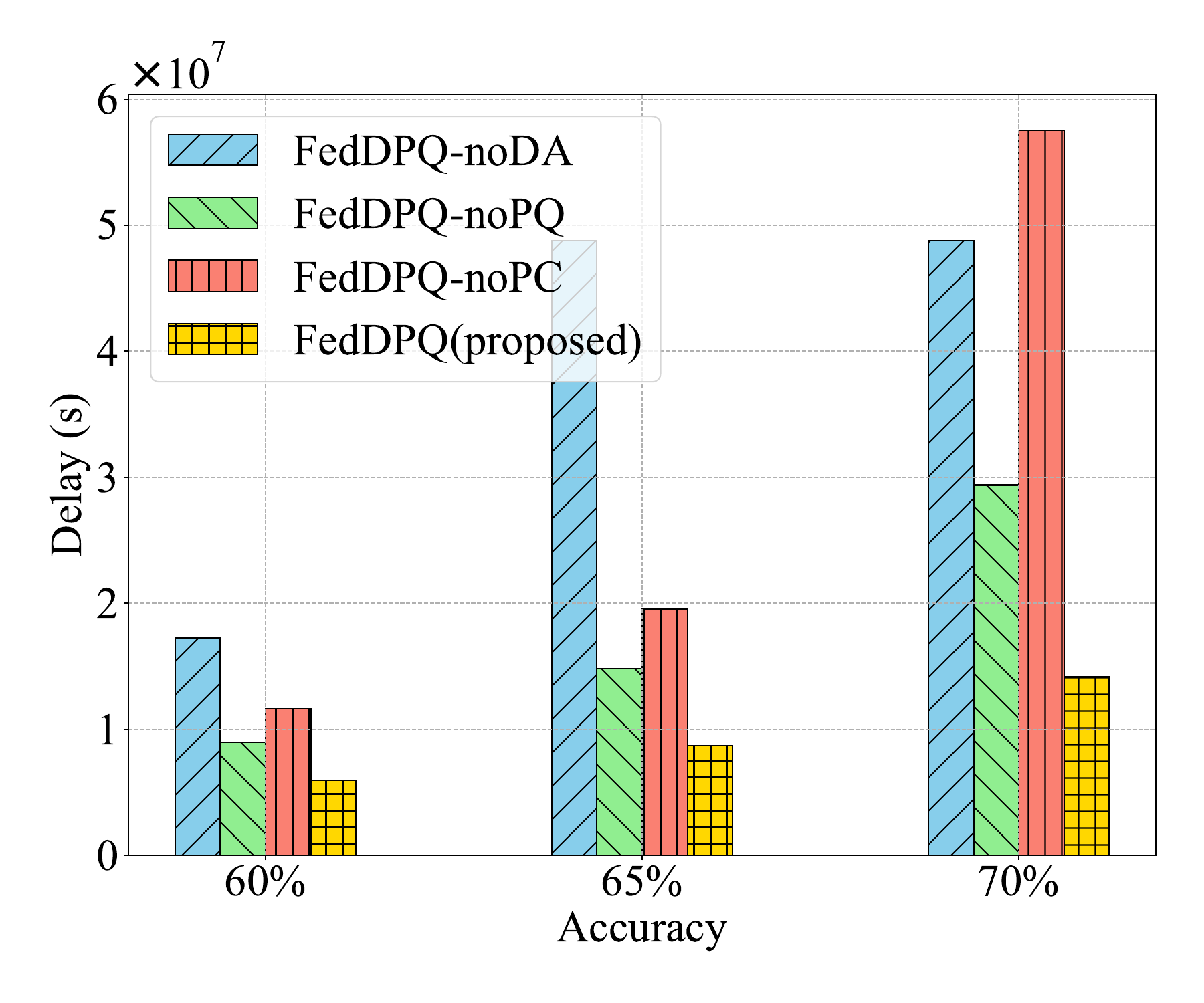}
\label{fig:xiaorong_delay}
\end{minipage}}

\caption{Ablation study results of the proposed scheme.}
\label{fig:ablation_study}
\end{figure}

\textcolor{black}{To further verify the individual contributions of each optimization module within the FedDPQ framework, we conduct an ablation study by comparing the complete FedDPQ scheme with three degraded variants: FedDPQ-noDA, FedDPQ-noPQ, and FedDPQ-noPC. Fig.~\ref{fig:ablation_study} presents a comprehensive comparison of these four schemes in terms of energy consumption, test accuracy, training loss, and training delay. As shown in Fig.~\ref{fig:xiaorong_energy}, FedDPQ consistently achieves the lowest total energy consumption across all accuracy targets, demonstrating that the joint optimization of data augmentation, pruning, quantization, and power control significantly reduces training energy costs. The removal of any single module leads to performance degradation. In particular, FedDPQ-noPC incurs a notable increase in energy consumption, highlighting the importance of power control in mitigating device dropout and enhancing energy efficiency in FL. With regard to training delay (Fig.~\ref{fig:xiaorong_delay}), the ranking of the schemes mirrors that of energy consumption. Power control effectively reduces delay by preventing disconnections, while pruning and quantization decrease the per-round training time. Data augmentation also contributes to faster training by accelerating convergence. In terms of model accuracy (Fig.~\ref{fig:xiaorong_accuracy}), FedDPQ again delivers the best performance. Notably, FedDPQ-noPQ achieves faster convergence than FedDPQ-noDA, indicating that diffusion-based data augmentation plays a vital role in enhancing model performance. By increasing both the volume and diversity of local training data, data augmentation improves the generalization capability of the global model. The accuracy degradation observed in FedDPQ-noPC further confirms that power control enhances training stability and convergence by ensuring more consistent participation from edge devices. In summary, each module in FedDPQ contributes complementary benefits across energy efficiency, delay reduction, and accuracy improvement. The proposed integrated optimization framework demonstrates superior performance across diverse FL scenarios.
}

\section{Conclusion} 
\textcolor{black}{This paper presented FedDPQ, a novel ultra energy-efficient FL framework designed for real-time CV applications over unreliable wireless networks. By jointly integrating diffusion-based data augmentation, model pruning, communication quantization, and adaptive transmission power control, FedDPQ effectively reduces energy consumption from both computation and communication while addressing data heterogeneity and transmission unreliability. We derived a closed-form convergence–energy model that captures the joint impact of these techniques and developed a low-complexity BO-based algorithm to optimize data generation, pruning, quantization, and power control strategies. Experimental evaluations demonstrated that FedDPQ significantly improves energy efficiency and accelerates convergence, offering a practical solution for deploying FL in energy-constrained edge CV scenarios.}

\textcolor{black}{Future research directions include extending FedDPQ to integrate model-splitting techniques (split learning) for scenarios with extreme computational constraints, and exploring fine-tuning strategies with low parameter overhead, such as the low-rank adaptation (LoRA) method, to facilitate federated adaptation of large-scale vision models. These extensions promise further improvements in energy efficiency and adaptability, enhancing the applicability of FL in emerging real-time edge applications.}

\begin{appendices}
\setcounter{equation}{0}
\renewcommand\theequation{A.\arabic{equation}}
\section{Proof of Lemma 3}
We begin by deriving the expectation of the aggregated gradient computed in each round as
\begin{equation}
\begin{aligned}
\label{eqA1}
& \mathbb{E}\left[\frac{\sum\limits_{u \in \mathcal{S}^t}\alpha_u^{t} \boldsymbol{g}_u^t}
{\sum\limits_{u \in \mathcal{S}^t}\alpha_u^{t}}\Bigg|\sum\limits_{u \in \mathcal{S}^t}\alpha_u^{t}\neq 0\right] \\
&= \mathbb{E}_{\mathcal{S}^t}\Bigg[\sum\limits_{k=1}^S\sum_{\substack{
\mathcal{B}^t \cup \bar{\mathcal{B}}^t = \mathcal{S}^t \\
|\mathcal{B}^t| = k,\; |\bar{\mathcal{B}}^t| = S - k
}} Pr\Big( \alpha_{u_1}^t = 1\; \forall u_1 \in \mathcal{B}^t ,\\& \quad \alpha_{u_2}^t=0 \forall u_2 \in \bar{\mathcal{B}}^t \Big|\sum\limits_{u \in S^t}\alpha_u^{t}\neq 0\Big)
\cdot \frac{\sum\limits_{u_1 \in \mathcal{B}^t} \boldsymbol{g}^{t}_{u_1}}{k} \Bigg]\\&
=\sum\limits_{g=1}^{U^S}\left(\prod_{u \in \mathcal{S}_g^t} \tau_u\right) \cdot \Bigg(\sum\limits_{k=1}^S\sum_{\substack{
\mathcal{B}_g^t \cup \bar{\mathcal{B}}_g^t = \mathcal{S}_g^t \\
|\mathcal{B}_g^t| = k,\; |\bar{\mathcal{B}}_g^t| = S - k
}} \frac{\sum\limits_{u_1 \in \mathcal{B}_g^t} \boldsymbol{g}^{t}_{u_1}}{k} \cdot \\& \quad \frac{\prod_{u_1\in \mathcal{B}_g^t}(1-q_{u_1})\prod_{u_2\in \bar{\mathcal{B}}_g^t}q_{u_2}}{1-\prod_{u\in \mathcal{S}_g^t}q_u}\Bigg)
\\
&\overset{\triangle}{=} \sum\limits_{u=1}^U \bar{\beta}_u \boldsymbol{g}_u^t,
\end{aligned}
\end{equation}
where $\mathcal{B}^t$ represents the set of devices which are selected without transmission error, while $\bar{\mathcal{B}}^t$ denotes the set of devices with transmission error. Since in each round, $S$ devices are selected independently and with replacement according to the probability distribution $\{\tau_1,\dots, \tau_U\}$, the set $\mathcal{S}^t$ has $U^S$ different possibilities (denoted by $\mathcal{S}_g^t$, $g=1,\dots,U^S$ ). The probability of each set $\mathcal{S}_g^t$ occurring is $\prod_{u \in \mathcal{S}_g^t} \tau_u$. $\sum\limits_{u=1}^U\bar{\beta}_u=1$, this can be observed by setting $\boldsymbol{g}_u^t=1$.

Next, we prove Lemma 3 as follows
\begin{equation}
\label{eqA2}
\begin{aligned}
&\mathbb{E}\left[ \langle \nabla F(\boldsymbol{w}^{t-1}), \boldsymbol{w}^{t} - \boldsymbol{w}^{t-1} \rangle \right]\\
&=\mathbb{E} \left[ \left\langle \nabla F(\boldsymbol{w}^{t-1}),\ 
    -\eta \cdot \frac{\sum_{u \in \mathcal{S}^t} \alpha_u^t \mathcal{Q}_u(\boldsymbol{g}_u^t)}{\sum_{u \in \mathcal{S}^t} \alpha_u^t} \right\rangle \right]\\
&\overset{\left(a\right)}{=} -\eta \cdot \mathbb{E} \left[ \left\langle \nabla F(\boldsymbol{w}^{t-1}),\ 
    \frac{\sum_{u \in \mathcal{S}^t} \alpha_u^t \boldsymbol{g}_u^t}{\sum_{u \in \mathcal{S}^t} \alpha_u^t} \right\rangle \right]\\
&\overset{\left(b\right)}{=} -\eta \cdot \mathbb{E} \left[ \left\langle \nabla F(\boldsymbol{w}^{t-1}),\ 
    \frac{\sum_{u \in \mathcal{S}^t} \alpha_u^t \nabla F_u(\widetilde{\boldsymbol{w}}^{t}_{u})}{\sum_{u \in \mathcal{S}^t} \alpha_u^t} \right\rangle \right]\\
&\overset{\left(c\right)}{=} -\eta \cdot \mathbb{E} \left[ \left\langle \nabla F(\boldsymbol{w}^{t-1}),\ 
    \sum_{u=1}^{U} \bar{\beta}_u \nabla F_u(\widetilde{\boldsymbol{w}}^{t}_{u}) \right\rangle \right]
\end{aligned}
\end{equation}
\begin{equation*}
\begin{aligned}
&\overset{\left(d\right)}{=} \frac{\eta}{2} \cdot \mathbb{E} \left[ \left\| \nabla F(\boldsymbol{w}^{t-1})  
    - \sum_{u=1}^{U} \bar{\beta}_u  \nabla F_u(\widetilde{\boldsymbol{w}}^{t}_{u}) \right\|^2 \right.\\&\left.\quad\quad\quad\quad -\left\| \nabla F(\boldsymbol{w}^{t-1}) \right\|^2 - \left\|\sum_{u=1}^{U} \bar{\beta}_u  \nabla F_u(\widetilde{\boldsymbol{w}}^{t}_{u}) \right\|^2  \right]\\
&\leq \frac{\eta}{2} \cdot \mathbb{E} \left[ \left\| \nabla F(\boldsymbol{w}^{t-1})  
    - \sum_{u=1}^{U} \bar{\beta}_u  \nabla F_u(\widetilde{\boldsymbol{w}}^{t}_{u}) \right\|^2 \right]\\&\quad\quad\quad\quad -\frac{\eta}{2} \cdot\mathbb{E} \left[\left\| \nabla F(\boldsymbol{w}^{t-1}) \right\|^2\right]\\
&\overset{\left(e\right)}{\leq} -\frac{\eta}{2} \cdot \mathbb{E} \left[ \left\| \nabla F(\boldsymbol{w}^{t-1}) \right\|^2 \right] 
    \\&\quad\quad+ \eta \cdot \underbrace{\mathbb{E} \left[ \left\| \nabla F(\boldsymbol{w}^{t-1})-\sum_{u=1}^{U} \bar{\beta}_u \nabla F_u(\boldsymbol{w}^{t-1}) \right\|^2 \right]}_{A_1}\\
    &\quad\quad\quad+ \eta \cdot \underbrace{\mathbb{E} \left[ \left\| \sum_{u=1}^{U} \bar{\beta}_u 
    \left( \nabla F_u(\boldsymbol{w}^{t-1}) - \nabla F_u(\widetilde{\boldsymbol{w}}^{t}_{u}) \right) \right\|^2 \right]}_{A_2}, 
\end{aligned}
\end{equation*}
where equality $\left(a\right)$ stems from the unbiased quantization in Eq. \eqref{quantize no bias}, equality $\left(b\right)$ follows from Assumption \ref{CompleteAssumption2}, equality $\left(c\right)$ is due to Eq. \eqref{eqA1}, equality $\left(d\right)$ is obtained by the basic identity $\langle x_1,x_2\rangle = \frac{1}{2}\left(\|x_1\|^2+\|x_2\|^2-\|x_1-x_2\|^2\right)$, and inequality $\left(e\right)$ follows from  $\|x_1+x_2\|^2 \leq 2\|x_1\|^2+2\|x_2\|^2$.

\begin{equation}
\label{eqA3}
\begin{aligned}
&A_1 = \mathbb{E} \left[ \left\| \sum_{u=1}^{U} \tau_u \nabla F_u(\boldsymbol{w}^{t-1}) - \sum_{u=1}^{U} \bar{\beta}_u \nabla F_u(\boldsymbol{w}^{t-1}) \right\|^2 \right] \\
&\overset{\left(a\right)}{=} \mathbb{E} \left[ \left\| \sum_{u=1}^{U} (\tau_u - \bar{\beta}_u) \nabla F_u(\boldsymbol{w}^{t-1}) \right.\right.  \\
&\left.\left.\quad\quad\quad\quad -\sum_{u=1}^{U} (\tau_u - \bar{\beta}_u) \nabla F(\boldsymbol{w}^{t-1}) \right\|^2 \right]\\&= \mathbb{E} \left[ \left\| \sum_{u=1}^{U} \frac{\tau_u - \bar{\beta}_u}{\sqrt{\tau_u}} \cdot \sqrt{\tau_u} \left( \nabla F_u(\boldsymbol{w}^{t-1}) - \nabla F(\boldsymbol{w}^{t-1}) \right) \right\|^2 \right] \\
&\overset{\left(b\right)}{\leq} \left( \sum_{u=1}^{U} \frac{\left(\tau_u - \bar{\beta}_u\right)^2}{\tau_u} \right) \cdot \sum_{u=1}^{U} {\tau_u} \, \mathbb{E} \left[ \left\| \nabla F_u(\boldsymbol{w}^{t-1}) \right.\right.  \\
&\left.\left.\quad\quad- \nabla F(\boldsymbol{w}^{t-1}) \right\|^2 \right]
\overset{\left(c\right)}{\leq} \chi^2_{\boldsymbol{\beta}\| \boldsymbol{\tau}} \cdot \sum_{u=1}^{U} \tau_u Z_u^2,
\end{aligned}
\end{equation}
where equality $\left(a\right)$ is obtained by $\sum_{u=1}^U\left(\tau_u-\bar{\beta}_u\right)=0$, inequality $\left(b\right)$ stems from Cauchy-Schwarz Inequality, and inequality $\left(c\right)$ is because of Assumption 3.

\begin{equation}
\label{eqA4}
\begin{aligned}
A_2 &\overset{\left(a\right)}{\leq} \sum_{u=1}^{U} \bar{\beta}_u^2\sum_{u=1}^{U}\mathbb{E} \left[ \left\| \nabla F_u(\boldsymbol{w}^{t-1}) - \nabla F_u(\widetilde{\boldsymbol{w}}^{t}_{u}) \right\|^2 \right] \\
&\overset{\left(b\right)}{\leq} \sum_{u=1}^{U} \bar{\beta}_u^2 L^2\sum_{u=1}^{U}\mathbb{E} \left[ \left\| \boldsymbol{w}^{t-1} - \widetilde{\boldsymbol{w}}^{t}_{u} \right\|^2\right]
\end{aligned}
\end{equation}
\begin{equation*}
\begin{aligned}
&\overset{\left(c\right)}{\leq} \sum_{u=1}^{U} \bar{\beta}_u^2 \cdot L^2 \Gamma^2 \sum_{u=1}^{U} \rho_u,
\end{aligned}
\end{equation*}
where inequality $\left(a\right)$ is due to $\|\sum\limits_{u=1}^U x_uy_u\|^2 \leq \sum\limits_{u=1}^U\|x_u\|^2 \sum\limits_{u=1}^U\|y_u\|^2$, inequality $\left(b\right)$ stems from Assumption \ref{CompleteAssumption1}, and inequality $\left(c\right)$ follows from Lemma \ref{lemma1}. Finally, by substituting Eq. \eqref{eqA3} and \eqref{eqA4} into Eq. \eqref{eqA2}, we can obtain Lemma \ref{lemma3}.

%\begin{equation}
%\begin{aligned}
%A & \leq -\frac{\eta}{2} \cdot \mathbb{E} \left[ \left\| \nabla F(\boldsymbol{w}^{t-1}) \right\|^2 \right]\\&+\eta\cdot\chi^2_{\boldsymbol{\beta}\| \boldsymbol{\tau}} \cdot \sum_{u=1}^{U} \tau_u Z_u^2+ \eta\sum_{u=1}^{U} \bar{\beta}_u^2 \cdot L^2 D^2 \sum_{u=1}^{U} \rho_u^t
%\end{aligned}
%\end{equation}
\setcounter{equation}{0}
\renewcommand\theequation{B.\arabic{equation}}
\section{Proof of Lemma 4}
In the same fashion as Eq. \eqref{eqA1}, we can obtain
\begin{equation}
\begin{aligned}
\label{eqB.1}
& \mathbb{E}\left[\frac{\sum\limits_{u \in \mathcal{S}^t}\alpha_u^{t} \boldsymbol{g}_u^t}
{\left(\sum\limits_{u \in \mathcal{S}^t}\alpha_u^{t}\right)^2}\Bigg|\sum\limits_{u \in \mathcal{S}^t}\alpha_u^{t}\neq 0\right] \\&
=\sum\limits_{g=1}^{U^S}\left(\prod_{u \in \mathcal{S}_g^t} \tau_u\right) \cdot \Bigg(\sum\limits_{k=1}^S\sum_{\substack{
\mathcal{B}_g^t \cup \bar{\mathcal{B}}_g^t = \mathcal{S}_g^t \\
|\mathcal{B}_g^t| = k,\; |\bar{\mathcal{B}}_g^t| = S - k
}} \frac{\sum\limits_{u_1 \in \mathcal{B}_g^t} \boldsymbol{g}^{t}_{u_1}}{k^2} \cdot \\& \quad \frac{\prod_{u_1\in \mathcal{B}_g^t}(1-q_{u_1})\prod_{u_2\in \bar{\mathcal{B}}_g^t}q_{u_2}}{1-\prod_{u\in \mathcal{S}_g^t}q_u}\Bigg)
\\
&\overset{\triangle}{=} \sum\limits_{u=1}^U \bar{\alpha}_u \boldsymbol{g}_u^t. 
\end{aligned}
\end{equation}
Next, we prove Lemma 4 as follows
\begin{equation}
\begin{aligned}
\label{eqB}
&\mathbb{E}\left[\|\boldsymbol{w}^{t} - \boldsymbol{w}^{t-1}\|^2\right]\\ &= \eta^2 \, \mathbb{E} \left[ \left\| \frac{\sum_{u \in \mathcal{S}^t} \alpha_u^t \mathcal{Q}\left(\boldsymbol{g}_u^t\right)}{\sum_{u \in \mathcal{S}^t} \alpha_u^t} \right\|^2 \right] \\
&\overset{\left(a\right)}{\leq} 2 \eta^2 \, \underbrace{\mathbb{E} \left[ \left\| \frac{\sum_{u \in \mathcal{S}^t} \alpha_u^t \left(\mathcal{Q}(\boldsymbol{g}_u^t) - \boldsymbol{g}_u^t\right)}{\sum_{u \in \mathcal{S}^t} \alpha_u^t} \right\|^2 \right]}_{B_1} \\
&\quad + 2 \eta^2 \, \underbrace{\mathbb{E} \left[ \left\| \frac{\sum_{u \in \mathcal{S}^t} \alpha_u^t \boldsymbol{g}_u^t}{\sum_{u \in \mathcal{S}^t} \alpha_u^t} \right\|^2 \right]}_{B_2},
\end{aligned}
\end{equation}
where inequality $\left(a\right)$ is due to the basic identity $\|x_1+x_2\|^2 \leq 2\|x_1\|^2+2\|x_2\|^2$. Next, we separately derive the upper bounds of $B_1$ and $B_2$.

%************************X
\subsubsection{Bound of $B_1$}
\begin{equation}
\label{eqB1}
\begin{aligned}
B_1 &\overset{\left(a\right)}{=} \mathbb{E} \left[ 
\frac{  \sum_{u \in \mathcal{S}^t} \alpha_u^t \left\|\left( \mathcal{Q}(\boldsymbol{g}_u^t) - \boldsymbol{g}_u^t \right) \right\|^2 }
{ \left( \sum_{u \in \mathcal{S}^t} \alpha_u^t \right)^2 }
\right] \\
&\overset{\left(b\right)}{=} \sum_{u=1}^{U} \bar{\alpha}_u \, \mathbb{E} \left[ \left\| \mathcal{Q}(\boldsymbol{g}_u^t) - \boldsymbol{g}_u^t \right\|^2 \right] \\
& \overset{\left(c\right)}{\leq} \sum_{u=1}^{U} \bar{\alpha}_u\frac{
\displaystyle \sum_{v=1}^{V} \left( \bar{g}_{u,v}^t - \underline{g}_{u,v}^t \right)^2
}{
4\left(2^{\delta_u} - 1\right)^2
},
\end{aligned}
\end{equation}
where equality $\left(a\right)$ is obtained by $\|x_1+x_2+\dots+x_U\|^2=\sum\limits_{u=1}^U\|x_u\|^2+\sum\limits_{i=1}^U\sum\limits_{j=1,j\neq i}^Ux_ix_{j}$, equality $\left(b\right)$ is due to Eq. \eqref{eqB.1} and inequality $\left(c\right)$ follows from Lemma \ref{lemma2}.

\subsubsection{Bound of $B_2$}
\begin{equation}
\begin{aligned}
B_2&\overset{\left(a\right)}{\leq} 2 \, \underbrace{\mathbb{E} \left[
\left\| \frac{ \sum_{u \in \mathcal{S}^t} \alpha_u^t \left( \boldsymbol{g}_u^t - \nabla F_u(\widetilde{\boldsymbol{w}}^{t}_{u}) \right) }{ \sum_{u \in \mathcal{S}^t} \alpha_u^t } \right\|^2 \right]}_{B_{21}} \\
&\quad + 2 \, \underbrace{\mathbb{E} \left[
\left\| \frac{ \sum_{u \in \mathcal{S}^t} \alpha_u^t \nabla F_u(\widetilde{\boldsymbol{w}}^{t}_{u}) }{ \sum_{u \in \mathcal{S}^t} \alpha_u^t } \right\|^2 \right]}_{B_{22}},
\end{aligned}
\end{equation}
where inequality $\left(a\right)$ stems from $\|x_1+x_2\|^2 \leq 2\|x_1\|^2+2\|x_2\|^2$. Then, we derive the upper bound of $B_{21}$ and $B_{22}$. To begin with, the upper bound of $B_{21}$ can be derived by

% ----------  B_{21}  ----------
\begin{equation}
\label{eq:B21}
\begin{aligned}
B_{21}
&\overset{\left(a\right)}{=} \mathbb{E}\!\Biggl[
      \frac{%
        \displaystyle
        \sum\nolimits_{u\in\mathcal{S}^{t}}
        \alpha_{u}^{t}\;
        \Bigl\lVert
            \boldsymbol{g}_{u}^t
          - \nabla F_{u}\!\bigl(\widetilde{\boldsymbol{w}}^{t}_{u}\bigr)
        \Bigr\rVert^{2}}
      {\Bigl(\,\sum\nolimits_{u\in\mathcal{S}^{t}} \alpha_{u}^{t}\Bigr)^{2}}
    \Biggr] \\[4pt]
&= \sum_{u=1}^{U} \bar{\alpha}_{u}\;
   \mathbb{E}\!\Bigl[
      \bigl\lVert
          \boldsymbol{g}_{u}^t
        - \nabla F_{u}\!\bigl(\widetilde{\boldsymbol{w}}^{t}_{u}\bigr)
      \bigr\rVert^{2}
   \Bigr] \\[2pt]
&\overset{\left(b\right)}{\le} \sum_{u=1}^{U} \bar{\alpha}_{u}\,\sigma^{2},
\end{aligned}
\end{equation}
where equality $\left(a\right)$ is due to the basic identity $\|x_1+x_2+\dots+x_U\|^2=\sum\limits_{u=1}^U\|x_u\|^2+\sum\limits_{i=1}^U\sum\limits_{j=1,j\neq i}^Ux_ix_{j}$, and inequality $\left(b\right)$ follows from Assumption \ref{CompleteAssumption2}. 

Secondly, we derive the upper bound of $B_{22}$ as follows:
% ----------  B_{22}  ----------
\begin{equation}
\label{eq:B22}
\begin{aligned}
B_{22}
\;\le\;& 2\,
\underbrace{\mathbb{E}\!\Biggl[
  \Biggl\lVert
    \frac{%
      \displaystyle\sum\nolimits_{u\in\mathcal{S}^{t}}
      \alpha_{u}^{t}\!
      \Bigl(
        \nabla F_{u}\!\bigl(\widetilde{\boldsymbol{w}}^{t}_{u}\bigr)
        - \nabla F_u\!\bigl(\boldsymbol{w}^{t-1}\bigr)
      \Bigr)}
      {\displaystyle\sum\nolimits_{u\in\mathcal{S}^{t}} \alpha_{u}^{t}}\!
  \Biggr\rVert^{2}
\Biggr]}_{B_{221}} \\[4pt]
&\quad
+ 2\,
\underbrace{\mathbb{E}\!\Biggl[
  \Biggl\lVert
    \frac{%
      \displaystyle\sum\nolimits_{u\in\mathcal{S}^{t}}
      \alpha_{u}^{t}\,\nabla F_{u}\!\bigl(\boldsymbol{w}^{t-1}\bigr)}
      {\displaystyle\sum\nolimits_{u\in\mathcal{S}^{t}} \alpha_{u}^{t}}\!
  \Biggr\rVert^{2}
\Biggr]}_{B_{222}}.
\end{aligned}
\end{equation}

The upper bound of $B_{221}$ can be derived by
\begin{equation}
\begin{aligned}
&B_{221} = \mathbb{E} \left[
\frac{
   \left\|\sum\nolimits_{u\in \mathcal{S}^t} \left(\alpha_u^t \right)^2 
      \left( 
        \nabla F_u(\widetilde{\boldsymbol{w}}^{t}_{u}) - \nabla F_u(\boldsymbol{w}^{t-1}) 
      \right)
  \right\|^2
}{
  \left( \sum\nolimits_{u\in \mathcal{S}^t} \alpha_u^t \right)^2
}
\right] \\[6pt]
&\leq \mathbb{E} \left[
\frac{
  \sum\limits_{u\in \mathcal{S}^t} \left(\alpha_u^t \right)^2 \cdot \sum\limits_{u\in \mathcal{S}^t} \left(\alpha_u^t \right)^2 
  \left\| 
    \nabla F_u(\widetilde{\boldsymbol{w}}^{t}_{u}) - \nabla F_u(\boldsymbol{w}^{t-1}) 
  \right\|^2
}{
  \left(\sum\nolimits_{u\in \mathcal{S}^t} \alpha_u^t \right)^2
}
\right] \\[6pt]
&= \mathbb{E} \left[
\frac{
  \sum_{u\in \mathcal{S}^t} \alpha_u^t 
  \left\| 
    \nabla F_u(\widetilde{\boldsymbol{w}}^{t}_{u}) - \nabla F_u(\boldsymbol{w}^{t-1}) 
  \right\|^2
}{
  \sum\nolimits_{u\in \mathcal{S}^t} \alpha_u^t
}
\right] \\[6pt]
&\leq \sum_{u=1}^{U} \bar{\beta}_u \,
\mathbb{E} \left[
\left\| 
  \nabla F_u(\widetilde{\boldsymbol{w}}^{t}_{u}) - \nabla F_u(\boldsymbol{w}^{t-1}) 
\right\|^2
\right]\\[6pt]
&\leq L^2 \sum_{u=1}^{U} \bar{\beta}_u \,
\mathbb{E} \left[
\left\| 
  \widetilde{\boldsymbol{w}}^{t}_{u} - \boldsymbol{w}^{t-1} 
\right\|^2
\right]\\
&\leq  L^2 \Gamma^2 \sum_{u=1}^{U} \bar{\beta}_u \rho_u.
\end{aligned}
\end{equation}

The upper bound of $B_{221}$ can be derived by
\begin{equation}
\begin{aligned}
B_{222} \leq\; &2\; \underbrace{\mathbb{E} \left[
\frac{
\sum\nolimits_{u \in \mathcal{S}^t} \alpha_u^t \|
 \nabla F_u(\boldsymbol{w}^{t-1}) - \nabla F(\boldsymbol{w}^{t-1}) 
\|^2
}{
\left( \sum\nolimits_{u \in \mathcal{S}^t} \alpha_u^t \right)^2
}
\right]}_{C_1} \\[6pt]
&+\, 2\underbrace{\mathbb{E} \left[
\frac{
\begin{array}[t]{@{}l@{}}
\sum\nolimits_{u \in \mathcal{S}^t} \sum\nolimits_{\substack{u' \in \mathcal{S}^t \\ u \ne u'}} 
\alpha_{u}^t\alpha_{u'}^t \left(
\nabla F_{u}(\boldsymbol{w}^{t-1}) 
- \right.\\[4pt]
\left.\quad
\nabla F(\boldsymbol{w}^{t-1})\right)
\left(
\nabla F_{u'}(\boldsymbol{w}^{t-1}) - \nabla F(\boldsymbol{w}^{t-1})
\right)
\end{array}
}{
\left( \sum\nolimits_{u \in \mathcal{S}^t} \alpha_u^t \right)^2
}
\right]}_{C_2} \\[6pt]
&+ 2\, \mathbb{E} \left[
\left\| 
  \nabla F(\boldsymbol{w}^{t-1}) 
\right\|^2
\right].
\end{aligned}
\end{equation}

Furthermore, the upper bound of $C_1$ and $C_2$ can be derived by
% ------- C_1 上界 -------
\begin{equation}
\begin{aligned}
C_1 &= \sum_{u=1}^{U} \bar{\alpha}_u \, 
\mathbb{E} \left[
\left\|
\nabla F_u(\boldsymbol{w}^{t-1}) - \nabla F(\boldsymbol{w}^{t-1})
\right\|^2
\right]
\;\\&\le\;
\sum_{u=1}^{U} \bar{\alpha}_u Z_u^2
\end{aligned}
\end{equation}
and
\begin{equation}
C_2 \overset{\left(a\right)}{\leq} 
\sum_{k=2}^{S}
\frac{
(q_{\max})^{S - k} \, \mathbb{C}_S^k
}{
1 - (q_{\max})^S
}
\sum_{u=1}^{U} \tau_u \left\| q_u - \bar{q} \right\|^2 Z_u^2,
\end{equation}
where inequality $\left(a\right)$ is referred to \cite{wang2021quantized}. Thus, the upper bound of $B_2$ can be represented by
\begin{equation}
\label{eqB2}
\begin{aligned}
 B_2 &\leq 8\sum_{u=1}^{U} \bar{\alpha}_u Z_u^2+
8\, \mathbb{E} \left[
\left\| 
  \nabla F(\boldsymbol{w}^{t-1}) 
\right\|^2
\right] +4L^2 \Gamma^2 \sum_{u=1}^{U} \bar{\beta}_u \rho_u\\&+8\sum_{k=2}^{S}
\frac{
(q_{\max})^{S - k} \, \mathbb{C}_S^k
}{
1 - (q_{\max})^S
}
\sum_{u=1}^{U} \tau_u \left\| q_u - \bar{q} \right\|^2 Z_u^2+2\sum_{u=1}^{U} \bar{\alpha}_u \sigma^2.
\end{aligned}
\end{equation}

Finally, by substituting Eq. \eqref{eqB1} and \eqref{eqB2} into Eq. \eqref{eqB}, we can obtain Lemma \ref{lemma4}.

\setcounter{equation}{0}
\renewcommand\theequation{C.\arabic{equation}}
\section{Proof of Theorem 1}
Given the smoothness property of the loss function $F(\cdot)$, the second-order Taylor expansion of $F(\cdot)$ at any training round $t \geq 0$ can be formulated as follows
\begin{equation}
\begin{aligned}
&\mathbb{E}[F(\boldsymbol{w}^t)] \leq \mathbb{E}[F(\boldsymbol{w}^{t-1})] + \mathbb{E}\left[ \langle \nabla F(\boldsymbol{w}^{t-1}), \boldsymbol{w}^{t} - \boldsymbol{w}^{t-1} \rangle \right] \\& \quad \quad+ \frac{L}{2} \mathbb{E}\left[\|\boldsymbol{w}^{t} - \boldsymbol{w}^{t-1}\|^2\right].
\end{aligned}
\end{equation}

And based on Lemmas \ref{lemma1} and \ref{lemma3}, we can obtain
\begin{equation}
\label{eqC2}
\begin{aligned}
&\mathbb{E}[F(\boldsymbol{w}^t)]\leq\mathbb{E}[F(\boldsymbol{w}^{t-1})]-\frac{\eta}{2} \cdot \mathbb{E} \left[ \left\| \nabla F(\boldsymbol{w}^{t-1}) \right\|^2 \right]\\&\quad+ \eta\sum_{u=1}^{U} \bar{\beta}_u^2 \cdot L^2 \Gamma^2 \sum_{u=1}^{U} \rho_u+L\eta^2\sum_{u=1}^{U} \bar{\alpha}_u\frac{
\displaystyle \sum_{v=1}^{V} \left( \bar{g}_{u,v}^t - \underline{g}_{u,v}^t \right)^2
}{
4\left(2^{\delta_u} - 1\right)^2
}\\&\quad+4\eta^2L^3 \Gamma^2 \sum_{u=1}^{U} \bar{\beta}_u \rho_u+8L\eta^2\, \mathbb{E} \left[
\left\| 
  \nabla F(\boldsymbol{w}^{t-1}) 
\right\|^2
\right]\\&\quad+2L\eta^2\sum_{u=1}^{U}\bar{\alpha}_u\sigma^2 + 8L\eta^2\sum_{u=1}^{U} \bar{\alpha}_u Z_u^2+\eta\cdot\chi^2_{\boldsymbol{\beta}\| \boldsymbol{\tau}} \cdot \sum_{u=1}^{U} \tau_u Z_u^2\\&\quad+8L\eta^2\sum_{k=2}^{S}
\frac{
(q_{\max})^{S - k} \, \mathbb{C}_S^k
}{
1 - (q_{\max})^S
}
\sum_{u=1}^{U} \tau_u \left\| q_u - \bar{q} \right\|^2 Z_u^2.
\end{aligned}
\end{equation}

Then, we rearrange Eq. \eqref{eqC2} and divide its both sides by $\left(\frac{\eta}{2}-8L\eta^2\right)$:
\begin{equation}
\label{eqC3}
\begin{aligned}
&\mathbb{E} \left[ \left\| \nabla F(\boldsymbol{w}^{t-1}) \right\|^2 \right]
\leq
\frac{ \mathbb{E}[F(\boldsymbol{w}^{t-1})]-\mathbb{E}[F(\boldsymbol{w}^{t})] }{ \left(\frac{\eta}{2}-8L\eta^2\right) }\\
&\quad+ \frac{ \eta \cdot \chi^2_{\boldsymbol{\beta}\| \boldsymbol{\tau}} }{ \frac{\eta}{2}-8L\eta^2 } \cdot \sum_{u=1}^{U} \tau_u Z_u^2+ \frac{ 8L\eta^2 }{ \left(\frac{\eta}{2}-8L\eta^2\right) } \sum_{u=1}^{U} \bar{\alpha}_uZ_u^2\\
&\quad+ \frac{ \eta L^2 \Gamma^2 }{ \left(\frac{\eta}{2}-8L\eta^2\right) } \left(\sum_{u=1}^{U} \bar{\beta}_u^2 \cdot \sum_{u=1}^{U} \rho_u+ 4\eta L\sum_{u=1}^{U} \bar{\beta}_u \rho_u\right)\\
&\quad+ \frac{L\eta^2}{\left(\frac{\eta}{2}-8L\eta^2\right)}\sum_{u=1}^{U} \bar{\alpha}_u\frac{
\displaystyle \sum_{v=1}^{V} \left( \bar{g}_{u,v}^t - \underline{g}_{u,v}^t \right)^2
}{
4\left(2^{\delta_u} - 1\right)^2
} \\
&\quad+ \frac{8L\eta^2}{ \frac{\eta}{2}-8L\eta^2 } 
\sum_{k=2}^{S}
\frac{
(q_{\max})^{S - k} \, \mathbb{C}_S^k
}{
1 - (q_{\max})^S
}
\sum_{u=1}^{U} \tau_u \left\| q_u - \bar{q} \right\|^2 Z_u^2\\&\quad + \frac{2 L\eta^2}{ \left(\frac{\eta}{2}-8L\eta^2\right) } \sum_{u=1}^{U} \bar{\alpha}_u\sigma^2.
\end{aligned}
\end{equation}

To guarantee the convergence of the FL process, we let $\frac{\eta}{2}-8L\eta^2 > 0$, which leads to $0<\eta<\frac{1}{16L}$. Moreover, summing above items from $t = 1$ to $\Omega$. Then, we divide both sides of Eq. \eqref{eqC3} by $\Omega$ and can obtain:
\begin{equation}
\label{eqC4}
\begin{aligned}
&\frac{1}{\Omega} \sum_{t = 1}^\Omega\mathbb{E} \left[ \left\| \nabla F(\boldsymbol{w}^{t-1}) \right\|^2 \right]
\leq
\frac{ \mathbb{E}[F(\boldsymbol{w}^{0})]-\mathbb{E}[F(\boldsymbol{w}^{*})] }{ \left(\frac{\eta}{2}-8L\eta^2\right) \Omega}\\
&\quad+ \frac{ \eta \cdot \chi^2_{\boldsymbol{\beta}\| \boldsymbol{\tau}} }{ \frac{\eta}{2}-8L\eta^2 } \cdot \sum_{u=1}^{U} \tau_u Z_u^2+ \frac{ 8L\eta^2 }{ \left(\frac{\eta}{2}-8L\eta^2\right) } \sum_{u=1}^{U} \bar{\alpha}_uZ_u^2\\
&\quad+ \frac{ \eta L^2 \Gamma^2 }{ \left(\frac{\eta}{2}-8L\eta^2\right) } \left(\sum_{u=1}^{U} \bar{\beta}_u^2 \cdot \sum_{u=1}^{U} \rho_u+ 4\eta L\sum_{u=1}^{U} \bar{\beta}_u \rho_u\right)\\
&\quad+ \frac{L\eta^2}{\left(\frac{\eta}{2}-8L\eta^2\right)\Omega} \sum_{t = 1}^\Omega\sum_{u=1}^{U} \bar{\alpha}_u\frac{
\displaystyle \sum_{v=1}^{V} \left( \bar{g}_{u,v}^t - \underline{g}_{u,v}^t \right)^2
}{
4\left(2^{\delta_u} - 1\right)^2
} \\
&\quad+ \frac{8L\eta^2}{ \frac{\eta}{2}-8L\eta^2 } 
\sum_{k=2}^{S}
\frac{
(q_{\max})^{S - k} \, \mathbb{C}_S^k
}{
1 - (q_{\max})^S
}
\sum_{u=1}^{U} \tau_u \left\| q_u - \bar{q} \right\|^2 Z_u^2\\&\quad + \frac{2 L\eta^2}{ \left(\frac{\eta}{2}-8L\eta^2\right) } \sum_{u=1}^{U} \bar{\alpha}_u\sigma^2,
\end{aligned}
\end{equation}
where $\boldsymbol{w}^{*}$ represents the optimal model.

\setcounter{equation}{0}
\renewcommand\theequation{B.\arabic{equation}}

\end{appendices}
 \bibliography{UEFLPQCS}
\begin{IEEEbiography}[{\includegraphics[width=1.1in,height=1.33in]{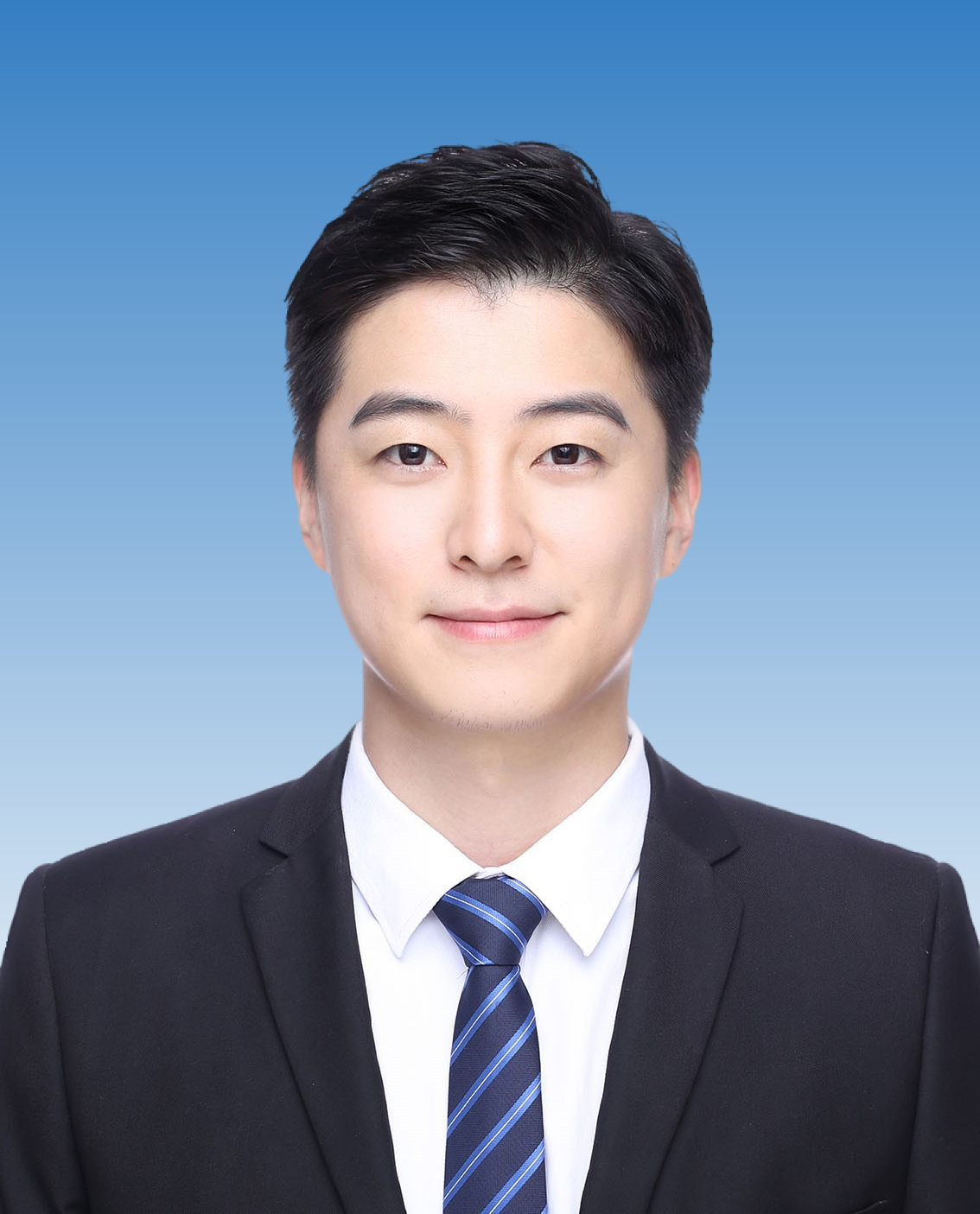}}]{\textbf{Xiangwang Hou}} (Member, IEEE) is currently a postdoctoral researcher in the Department of Electronic Engineering, Tsinghua University, Beijing, China. He received his B.S. degree from Shandong University of Technology in 2017, his M.S. degree from Xidian University in 2020, and his Ph.D. degree from Tsinghua University in 2024. From 2023 to 2024, he was a Joint Ph.D. student at the School of Computer Science and Engineering, Nanyang Technological University, Singapore, under the supervision of Prof. Dusit Niyato. From 2020 to 2021, he worked as an algorithm engineer at Huawei Technologies Co., Ltd. and at Tsinghua University. His research interests include edge intelligence, federated learning, wireless AI, and UAV/AUV networks. He was a recipient of the Best Paper Award from IEEE ICC.
  \end{IEEEbiography}
  
  \begin{IEEEbiography}[{\includegraphics[width=1.1in,height=1.33in]{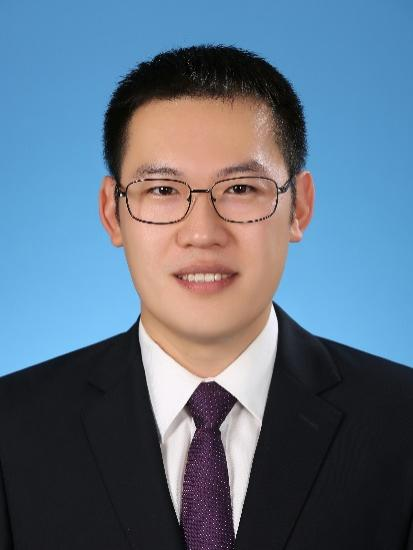}}]{\textbf{Jingjing Wang}} (Senior Member, IEEE) received his B.Sc. degree in electronic information engineering from Dalian University of Technology, Liaoning, China in 2014 and the Ph.D. degree in information and communication engineering from Tsinghua University, Beijing, China in 2019, both with the highest honors. From 2017 to 2018, he visited the next-generation wireless group chaired by Prof. Lajos Hanzo from the University of Southampton, UK. Dr. Wang is currently a Professor at the School of Cyber Science and Technology, Beihang University, Beijing, China, and also a researcher at Hangzhou Innovation Institute, Beihang University, Hangzhou, China. His research interests include AI-enhanced next-generation wireless networks, UAV networking, and swarm intelligence. He has published over 100 IEEE Journal/Conference papers. He is currently serving as an Editor for the IEEE Transactions on Vehicular Technology, IEEE Internet of Things Journal, and IEEE Wireless Communications Letters. Dr. Wang was a recipient of the Best Journal Paper Award of IEEE ComSoc Technical Committee on Green Communications \& Computing, the Best Paper Award of the IEEE ICC, and the IEEE IWCMC.
    \end{IEEEbiography}
    
  \begin{IEEEbiography}[{\includegraphics[width=1.1in,height=1.33in]{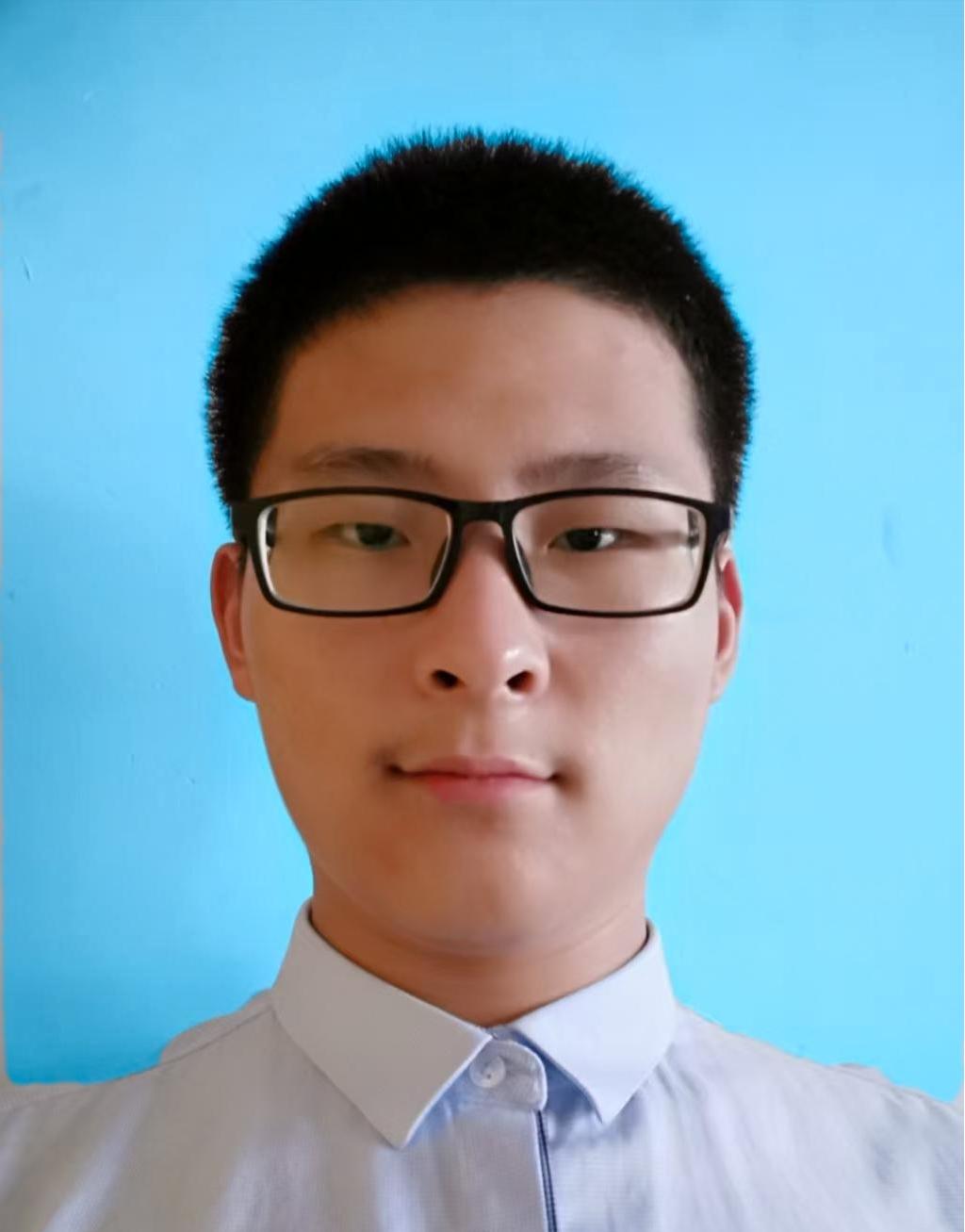}}]{\textbf{Fangming Guan}} (Student Member, IEEE) received the B.E. degree in Electronic Science and Technology from the School of Information and Electronics, Beijing Institute of Technology, Beijing, China, in 2023. He is currently pursuing the M.E. degree in Electronic Engineering with the Department of Electronic Engineering, Tsinghua University, Beijing, China. His research interests include federated learning and multi-modal large language models.
\end{IEEEbiography}

\begin{IEEEbiography}[{\includegraphics[width=1in,height=1.25in,clip,keepaspectratio]{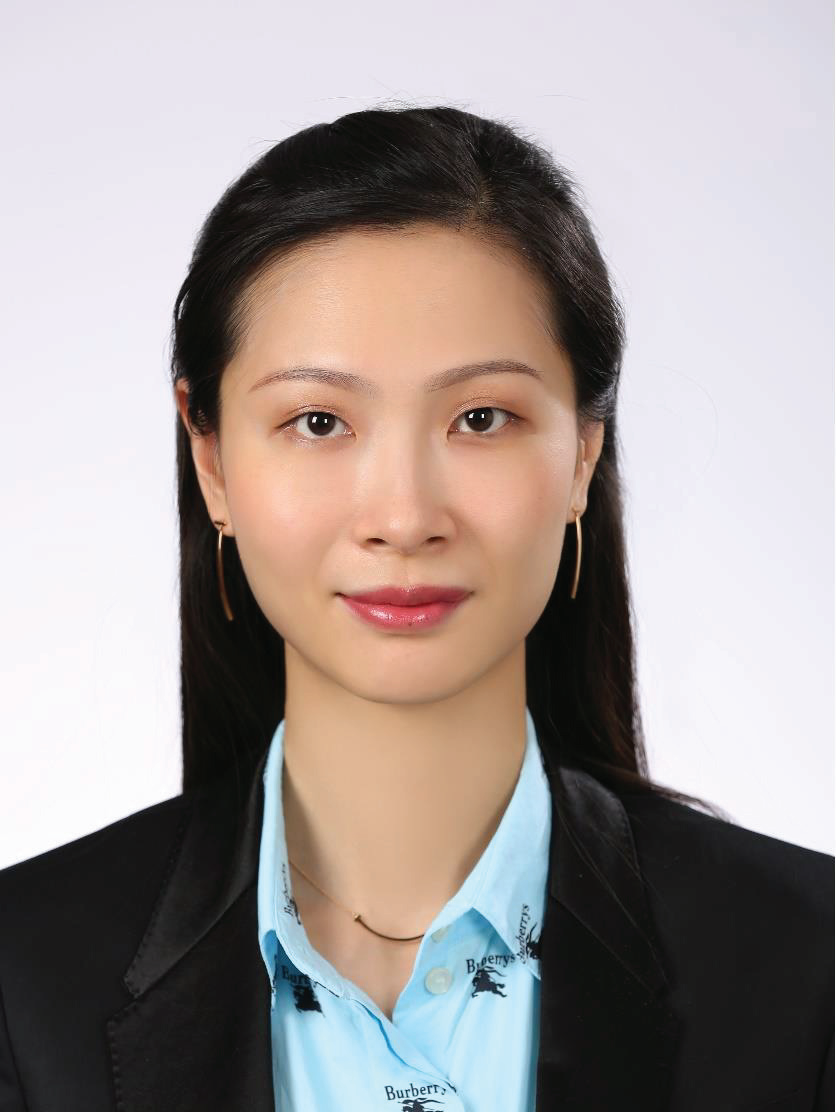}}]{Jun Du} (Senior Member, IEEE) received her B.S. in information and communication engineering from Beijing Institute of Technology, in 2009, and her M.S. and Ph.D. in information and communication engineering from Tsinghua University, Beijing, in 2014 and 2018, respectively. From Oct. 2016 - Sept. 2017, Dr. Du was a sponsored researcher, and she visited Imperial College London. Currently, she is an associate professor in the Department of Electrical Engineering, Tsinghua University. Her research interests are mainly in communications, networking, resource allocation, and system security problems of heterogeneous networks and space-based information networks. Dr. Du is the recipient of the Best Student Paper Award from IEEE GlobalSIP in 2015, the Best Paper Award from IEEE ICC 2019 and 2025, and the Best Paper Award from IWCMC in 2020.
\end{IEEEbiography}

  \begin{IEEEbiography}[{\includegraphics[width=1.1in,height=1.33in]{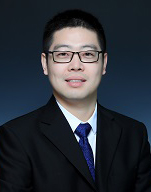}}]{\textbf{Chunxiao Jiang}} (Fellow, IEEE) is currently an associate professor in Beijing National Research Center for Information Science and Technology, Tsinghua University. He received the B.S. degree in Information Engineering from Beihang University, Beijing in 2008 and the Ph.D. degree in Electronic Engineering from Tsinghua University, Beijing in 2013, both with the highest honors. From 2011 to 2012 (as a Joint Ph.D) and 2013 to 2016 (as a Postdoc), he was in the Department of Electrical and Computer Engineering at the University of Maryland College Park under the supervision of Prof. K. J. Ray Liu. His research interests include the application of game theory, optimization, and statistical theories to communication, networking, and resource allocation problems, in particular space networks and heterogeneous networks. Dr. Jiang has served as an Editor of IEEE Transactions on Communications, IEEE Internet of Things Journal, IEEE Wireless Communications, IEEE Transactions on Network Science and Engineering, IEEE Network, IEEE Communications Letters, and a Guest Editor of IEEE Communications Magazine, IEEE Transactions on Network Science and Engineering, and IEEE Transactions on Cognitive Communications and Networking. He has also served as a member of the technical program committee as well as the Symposium Chair for a number of international conferences. Dr. Jiang is the recipient of the Best Paper Award from IEEE GLOBECOM in 2013, IEEE Communications Society Young Author Best Paper Award in 2017, the Best Paper Award from ICC 2019, IEEE VTS Early Career Award 2020, IEEE ComSoc Asia-Pacific Best Young Researcher Award 2020, IEEE VTS Distinguished Lecturer 2021, and IEEE ComSoc Best Young Professional Award in Academia 2021. He received the Chinese National Second Prize in the Technical Inventions Award in 2018. He is a Fellow of IEEE and IET.
    \end{IEEEbiography}

 \begin{IEEEbiography}[{\includegraphics[width=1.1in,height=1.33in]{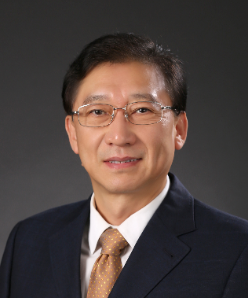}}]{\textbf{Yong Ren}} (Senior Member, IEEE) is a full professor of the Department of Electronic Engineering and serves as the director of the Complexity Engineered Systems Lab at Tsinghua University.  Moreover, he is also a guest professor of the Network and Communication Research Center at Peng Cheng Laboratory.  He received his B.S., M.S., and Ph.D. degrees in Electronic Engineering from Harbin Institute of Technology, China, in 1984, 1987, and 1994, respectively. He has published over 400 technical papers in the fields of computer networks and mobile telecommunication networks and has served as a reviewer for more than 40 international journals and conferences. His current research interests include marine information networks, swarm intelligence, and wireless AI. He has received multiple Best Paper Awards from IEEE journals and international conferences.
 \end{IEEEbiography}

\end{sloppypar}
\end{document}